\newenvironment{proof}{\par\noindent{\bf Proof\ }}{\hfill\BlackBox\\[2mm]}
\newtheorem{theorem}{Theorem}
\newtheorem{lemma}[theorem]{Lemma}
\newtheorem{corollary}[theorem]{Corollary}
\newtheorem{assumption}{Assumption}
\newtheorem{definition}{Definition}
\newtheorem{proposition}[theorem]{Proposition}
\newtheorem{remark}{Remark}
\newcommand{\RN}[1]{%
	\textup{\lowercase\expandafter{\it \romannumeral#1}}%
}
\def\ReLU{\textsf{ReLU}} 
\begin{document}

%

%

\twocolumn[

\aistatstitle{Stochastic Particle-Optimization Sampling and the Non-Asymptotic Convergence Theory}

\aistatsauthor{Jianyi Zhang$^1$ \And Ruiyi Zhang$^1$ \And Lawrence Carin$^1$ \And Changyou Chen$^2\href{mailto:cchangyou@gmail.com}{\text{\Email}}$}

\aistatsaddress{$^1$Duke University \And  $^2$University at Buffalo, SUNY } ]

\begin{abstract}
	Particle-optimization-based sampling (POS) is a recently developed effective sampling technique that interactively updates a set of particles. 
	A representative algorithm is the Stein variational gradient descent (SVGD). 
	We prove,  under certain conditions, SVGD experiences a theoretical pitfall, {\it i.e.}, particles tend to collapse. As a remedy, we generalize POS to a stochastic setting by injecting random noise into particle updates, thus yielding particle-optimization sampling (SPOS). Notably, for the first time, we develop {\em non-asymptotic convergence theory} for the SPOS framework (related to SVGD), characterizing algorithm convergence in terms of the 1-Wasserstein distance w.r.t.\! the numbers of particles and iterations. Somewhat surprisingly, with the same number of updates (not too large) for each particle, our theory suggests adopting more particles does not necessarily lead to a better approximation of a target distribution, due to limited computational budget and numerical errors. This phenomenon is also observed in SVGD and verified via an experiment on synthetic data. 
	Extensive experimental results verify our theory and demonstrate the effectiveness of our proposed framework.
\end{abstract}

\section{Introduction}
\vspace{-0.0cm}
Recently there has been extensive development of scalable Bayesian sampling algorithms, such as stochastic gradient MCMC (SG-MCMC)  \cite{WellingT:ICML11,ChenFG:ICML14,DingFBCSN:NIPS14,ChenDC:NIPS15} and Stein variational gradient descent (SVGD) \cite{LiuW:NIPS16}. SG-MCMC is a family of scalable Bayesian sampling algorithms built on It\'{o} diffusions, stochastic differential equations (SDEs) with appropriately designed coefficients whose stationary distributions match the target distributions. One potential issue of SG-MCMC is that samples may be highly correlated partially due to the nature of Markov chains, leading to undesired low sample efficiency.
SVGD, on the other hand, belongs to the family of particle-optimization-based sampling methods that optimize a set of interacting particles to minimize some distance metric ({\it e.g.}, KL-divergence) between the target distribution and the particle-induced approximate  distribution. By optimization, one seeks to maintain an optimal set of particles. Recent development of SVGD has shown that the underlying mathematical principle is based on a family of {\em nonlinear} partial differential equations (PDEs) \cite{liu2017stein_flow}. Although achieving significant practical successes
\cite{LiuW:NIPS16,FengWL:UAI17,liu2017stein,HaarnojaTAL:ICML17,ZhangCLC:ICML,ZhangWCC:AISTATS19,LiuZ:AAAI18}, little theory is available to fully understand its {\em non-asymptotic} convergence properties. A recent theoretical development has interpreted SVGD as a special type of gradient flows, and developed theory to disclose its {\em asymptotic} convergence behavior \cite{liu2017stein_flow}. The asymptotic theory is also studied in \cite{LuLN:arxiv18}. A very recent work \cite{LiuW:NIPS18} investigated non-asymptotic properties of SVGD, limited to the region of finite particles and infinite time with restricted conditions. In \cite{SimsekliLMD:arxiv18} the convergence property of the sliced-Wasserstein flow are only considered under an infinite-particle setting.

Recently, \cite{ChenZWLC:tech18} unified SG-MCMC and SVGD by proposing a particle-optimization-sampling (POS) framework to interpret both as Wasserstein gradient flows (WGFs). Generally, a WGF is a PDE defined on the space of probability measures, describing the evolution of a density over time. \cite{ChenZWLC:tech18} defined a WGF by combining the corresponding PDEs for both SG-MCMC and SVGD, and solved it with deterministic particle approximations. However, due to its diffusion nature, deterministic-particle approximation leads to a hard-to-control error, making it challenging for theoretical analysis.

\paragraph{Our contributions}
In this paper, we generalize POS to a stochastic setting, and develop a novel analytical framework based on granular media equations \cite{Malrieu:AAP03,CattiauxGM:PTRF08} to analyze its non-asymptotic convergence properties. Our contributions are summarized as follows: $\RN{1}$) We first identify a pitfall of standard SVGD, where particles tend to collapse under certain conditions and measurement, indicating that developing non-asymptotic theory for SVGD is changing(if possible at all). $\RN{2}$) Based on the unified framework in \cite{ChenZWLC:tech18}, we propose {\em stochastic} particle-optimization sampling (SPOS) by injecting Gaussian noise in particle updates to overcome the pitfall. $\RN{3}$) For the first time, we develop nonasymptotic convergence theory for the family of SPOS algorithms, considering both convex- and nonconvex-energy targets. Different from existing theory for SG-MCMC-based algorithms \cite{TehTV:arxiv14,VollmerZT:arxiv15,ChenDC:NIPS15,RaginskyRT:COLT17,ZhangLC:COLT17,XuCZG:arxiv17}, our development relies on the theory of {\em nonlinear PDEs}, which is more involved and less explored in the literature. In particular, we adopt tools from granular media equations \cite{Malrieu:AAP03,CattiauxGM:PTRF08} to develop non-asymptotic error bounds in terms of 1-Wasserstein distance. More detailed distinctions between our work and existing work are discussed in Section~\ref{app:related} of the Supplementary Material (SM). Somewhat surprisingly, our theory indicates that utilizing more particles does not necessarily lead to better approximations, due to the numerical errors in the algorithms. This phenomenon is also observed for SVGD. $\RN{4}$) Our theory and advantages of the algorithm are verified via various experiments, including experiments on synthetic data, Bayesian deep learning and Bayesian exploration for reinforcement learning.

\section{Preliminaries}
\vspace{-0.2cm}
\paragraph{Notation}
We use { bold letters} to denote variables in {\em continuous-time diffusions and model definitions} (no numerical methods included yet), {\it e.g.}, $\thetab_{\tau}$ in \eqref{eq:itodif} below (indexed by ``time'' $\tau$). By contrast, {\em unbold letters} are used to denote parameters in {\em algorithms} (numerical solutions of continuous-time diffusions), {\it e.g.}, $\theta_{k}^{(i)}$ in \eqref{eq:svgd_update} below (indexed by ``iteration'' $k$). For conciseness, all proofs, extra experimental results and a discussion on algorithmic complexity are presented in the SM. 
\vspace{-0.1cm}
\subsection{Stochastic gradient MCMC}
\vspace{-0.1cm}
In Bayesian sampling, one aims to generate random samples from a posterior distribution $p(\thetab| \mathcal{X})\propto p(\mathcal{X}|\thetab)p(\thetab)$, where $\thetab\in\mathbb{R}^d$ represents the model parameter with a prior distribution $p(\thetab)$, and $\mathcal{X}\triangleq \{\xb_q\}_{q=1}^N$ represents the observed data with likelihood $p(\mathcal{X}|\thetab) = \prod_q p(\xb_q|\thetab)$. Define the potential energy as:
$U(\thetab) \triangleq -\log p(\mathcal{X}|\thetab)-\log p(\thetab)
= -\sum_{q=1}^N\left(\log p(\xb_q | \thetab)+\frac{1}{N}\log p(\thetab)\right) \triangleq \sum_{q=1}^NU_q(\thetab)$. 
SG-MCMC algorithms belong to diffusion-based sampling methods, where a continuous-time diffusion process is designed such that its stationary distribution matches the target posterior distribution. The diffusion process is driven by a specific SDE. For example, in stochastic gradient Langevin dynamic (SGLD) \cite{WellingT:ICML11}, the SDE endows the following form:
\begin{align}\label{eq:itodif}
\mathrm{d}\thetab_{\tau} = -\beta^{-1}F(\thetab_{\tau})\mathrm{d}\tau + \sqrt{2\beta^{-1}}\mathrm{d}\mathcal{W}_{\tau}~,
\end{align}
where $F(\thetab) \triangleq \nabla_{\thetab}U(\thetab) = \sum_{q=1}^N\nabla_{\thetab}U_q(\thetab)\triangleq \sum_{q=1}^N F_q(\thetab)$; $\tau$ is the time index, $\beta > 0$ is the temperature parameter, and $\mathcal{W}_{\tau} \in \mathbb{R}^{d}$ is a $d$-dimensional Brownian motion. More instances of SDEs corresponding to other SG-MCMC algorithms can be defined by specifying different forms of $F$ and potentially other diffusion coefficients. We focus on SGLD and \eqref{eq:itodif} in this paper, and refer interested readers to \cite{MaCF:NIPS15} for a more detailed description of general SG-MCMC algorithms. Denote the probability density function of $\thetab_{\tau}$ in \eqref{eq:itodif} as $\nu_{\tau}$, and let $\ab\cdot\bb \triangleq \ab^{\top} \bb$ for two vectors $\ab$ and $\bb$. It is known that $\nu_t$ is characterized by the following Fokker-Planck (FP) equation \cite{Risken:FPE89}:
\begin{align}\label{eq:FPE}
\partial_{\tau} \nu_{\tau} = \nabla_{\thetab}\cdot (\beta^{-1}\nu_{\tau}F(\thetab) + \beta^{-1}\nabla_{\thetab} \nu_{\tau})~.
\end{align}
According to \cite{Chiang:1987:DGO:37121.37136}, the stationary distribution $\nu_{\infty}$ equals to our target distribution $p(\thetab| \mathcal{X})$. As a result, SGLD is designed to generates samples from $p(\thetab|\mathcal{X})$ by numerically solving the SDE \eqref{eq:itodif}. For scalability, it replaces $F(\theta_k)$ in each iteration with an unbiased evaluation by randomly sampling a subset of $\mathcal{X}$, {\it i.e.}, $F(\theta_k)$ is approximated by:
$G_k \triangleq \frac{N}{B_k}\sum_{q\in\mathcal{I}_k}F_q(\theta_k)$, 
where $\mathcal{I}_k$ is a random subset of $[1, 2, \cdots, N]$ with size $B_k$ in each iteration. As a result, SGLD uses the Euler method with stepsize $h_k$ to numerically solve \eqref{eq:itodif}, resulting in the update equation:
$\theta_{k+1} = \theta_{k} - \beta^{-1}G_kh_k + \sqrt{2\beta^{-1}h_k}\xi_{k}$, 
with $\xi_{k}\sim\mathcal{N}(\mathbf{0}, \Ib)$. 

\subsection{Stein variational gradient descent}\label{sec:svgd}

Different from SG-MCMC, SVGD is a deterministic particle-optimization algorithm that is able to generate samples from a target distribution. In the algorithm, a set of particles interact with each other, driving them to high density regions in the parameter space while keeping them far away from each other with an induced {\em repulsive} force. The update equations of the particles follow the fastest descent direction of the KL-divergence between current particle distribution and the target distribution, on a RKHS (reproducing kernel Hilbert space) induced by a kernel function $\kappa(\cdot, \cdot)$ \cite{LiuW:NIPS16}. Formally, \cite{LiuW:NIPS16} derived the following update rules for the particles $\{\theta_{k}^{(i)}\}_{i=1}^M$ at the $k$-th iteration with stepsize $h_k$ and $G_k^{(i)} \triangleq \frac{N}{B_k}\sum_{q\in\mathcal{I}_k}F_q(\theta_k^{(i)})$: for $\forall i$, $\theta_{k+1}^{(i)}$
{\small\begin{align}  \label{eq:svgd_update}
	 = \theta_{k}^{(i)} +\frac{h_k}{M}\sum_{j=1}^M \left[ \kappa(\theta_{\k}^{(j)}, \theta_{k}^{(i)}) G_k^{(i)}+ \nabla_{\theta_{k}^{(j)}} \kappa(\theta_{k}^{(j)}, \theta_{k}^{(i)}) \right]
	\end{align}}
\noindent where the first term in the bracket encourages moving particles to the density modes, and the second term serves as repulsive force that pushes away different particles. In particular, the particle evolution \eqref{eq:svgd_update} are numerical solutions of the ODEs: $\mathrm{d}\thetab_{\tau}^{(i)} =\frac{1}{M}\sum_{j=1}^M \left[ \kappa(\thetab_{\tau}^{(j)}, \thetab_{\tau}^{(i)})F(\thetab_{\tau}^{(i)}) + \nabla_{\thetab_{\tau}^{(j)}} \kappa(\thetab_{\tau}^{(j)}, \thetab_{\tau}^{(i)}) \right]\mathrm{d}\tau$. 
Different from SG-MCMC, only particles at the {\em current} iteration, $\{\theta_{k}^{(i)}\}_{i=1}^M$, are used to approximate the target distribution.

\subsection{Particle-optimization based sampling}\label{POS_sub}
SG-MCMC and SVGD, though they may look closely related, behave very differently as algorithms, {\it e.g.}, stochastic and noninteractive versus deterministic and interactive particle updates. Recently, \cite{ChenZWLC:tech18} proposed a deterministic particle-optimization framework that unified SG-MCMC and SVGD. Specifically, the authors viewed both SG-MCMC and SVGD as solutions of Wasserstein gradient flows (WGFs) on the space of probabilistic measures, and derived several deterministic particle-optimization techniques for particle evolution, like SVGD. For SG-MCMC, the FP equation \eqref{eq:FPE} for SGLD is a special type of WGF. Together with an interpretation of SVGD as a special case of the Vlasov equation in the nonlinear PDE literature, \cite{ChenZWLC:tech18} proposed a general form of PDE to characterize the evolution of the density for the model parameter $\thetab$, denoted as $\nu_{\tau}$ at time $\tau$ with $\nu_{\infty}$ matching our target (posterior) distribution, {\it i.e.}, 
{\small\begin{align}\label{eq:unified}
\partial_{\tau} \nu_{\tau} = \nabla_{\thetab}\cdot \left(\nu_{\tau}\beta^{-1}F(\thetab) +\nu_{\tau}\left(\mathcal{K}*\nu_{\tau}(\thetab)\right)
+ \beta^{-1}\nabla_{\thetab}\nu_{\tau}\right)~,
\end{align}}
\hspace{-0.18cm}where $\mathcal{K}$ is a function controlling the interaction of particles in the PDE system. For example, in SVGD, \cite{ChenZWLC:tech18} showed that $\mathcal{K}$ and $\mathcal{K}*\nu_{\tau}(\thetab)$ endow the following forms:
\begin{align}\label{eq:W}
\mathcal{K}*\nu_{\tau}(\thetab)\triangleq \int \mathcal{K}(\thetab, \thetab^\prime) \nu_{\tau}(\thetab^\prime)d\thetab^\prime~,
\end{align}
where $\mathcal{K}(\thetab, \thetab^\prime) \triangleq F(\thetab^\prime)\kappa(\thetab^\prime, \thetab) - \nabla_{\thetab^\prime} \kappa(\thetab^\prime, \thetab)$ and $\kappa(\cdot, \cdot)$ is a kernel function such as the RBF kernel. In the following, we introduce a new unary function $K(\thetab)=\exp(-\frac{\| \thetab \|^2}{\eta^2})$, thus $\kappa(\thetab, \thetab^\prime)$ can be rewritten as $\kappa(\thetab,\thetab^{\prime})=K(\thetab-\thetab^{\prime})$. Hence, \eqref{eq:unified} with $\mathcal{K}$ defined in \eqref{eq:W} is equivalently written as:
\begin{align}\label{eq:unifiednew}
\partial_{\tau} \nu_{\tau} =& \nabla_{\thetab}\cdot (\nu_{\tau}\beta^{-1}F(\thetab) + \nu_{\tau}\left(E_{Y \sim\nu_{\tau}}K(\thetab-Y)F(Y)\right.\nonumber\\
&\left.-\nabla K*\nu_{\tau}(\thetab)\right)
+ \beta^{-1}\nabla_{\thetab}\nu_{\tau})~,
\end{align}
where $Y$ is a random sample from $\nu_{\tau}$ independent of $\thetab$. Note our formula here is significantly different from standard granular media equations in the literature. Section~\ref{app:related} of the SM provides more details. 
\begin{proposition}[\cite{ChenZWLC:tech18}]\label{prop:prop1}
	The stationary distribution of \eqref{eq:unifiednew} equals to our target distribution, which means $\nu_{\infty}(\thetab) = p(\thetab|\mathcal{X})$.
\end{proposition}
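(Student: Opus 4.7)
The plan is to substitute $\nu_\infty = p(\thetab\mid\Xcal) \propto e^{-U(\thetab)}$ directly into \eqref{eq:unifiednew} and verify that the entire flux inside the outer divergence vanishes pointwise, which is strictly stronger than (and hence implies) $\partial_{\tau}\nu = 0$. Writing the bracket in \eqref{eq:unifiednew} as a sum of a ``Langevin piece'' $J_L \triangleq \beta^{-1}(\nu F + \nabla \nu)$ and an ``interaction piece'' $J_I \triangleq \nu\bigl(E_{Y\sim\nu}[K(\thetab-Y)F(Y)] - \nabla K\ast\nu(\thetab)\bigr)$, I would show $J_L = 0$ and $J_I = 0$ separately when $\nu = p$.

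First, I would handle $J_L$. Since $F = \nabla_{\thetab} U = -\nabla_{\thetab} \log p$, one has $\nabla p = -p\,F$, so $p F + \nabla p \equiv 0$ and $J_L$ vanishes identically. This is just the classical verification (used implicitly in \cite{Chiang:1987:DGO:37121.37136} and quoted right after \eqref{eq:FPE}) that $e^{-U}/Z$ is invariant under plain Langevin dynamics.

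Second, I would reduce $J_I = 0$ to Stein's identity. Expanding $E_{Y\sim p}[K(\thetab-Y)F(Y)] = \int K(\thetab-y)F(y)p(y)\,dy$ and substituting $F(y)p(y) = -\nabla_y p(y)$ converts this into $-\int K(\thetab-y)\nabla_y p(y)\,dy$. An integration by parts in $y$ (legitimate because $K$ is a Schwartz function, so $K(\thetab-y)p(y)\to 0$ at infinity) turns it into $\int \nabla_y K(\thetab-y)\,p(y)\,dy$, and the elementary identity $\nabla_y K(\thetab-y) = -\nabla_{\thetab} K(\thetab-y)$ rewrites this as $-\nabla K \ast p(\thetab)$ (in the paper's convolution convention used in \eqref{eq:unifiednew}). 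Consequently the bracket defining the interaction piece becomes $-\nabla K\ast p(\thetab) + \nabla K\ast p(\thetab) = 0$, so $J_I = 0$.

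Combining the two yields $J_L + J_I = 0$ on the nose, so the right-hand side of \eqref{eq:unifiednew} vanishes and $\nu_\infty = p(\thetab\mid\Xcal)$ is stationary. The only subtlety I would need to be careful about is justifying the integration by parts, i.e.\ that the boundary terms vanish; this follows from the Gaussian tail of $K(\cdot) = \exp(-\|\cdot\|^2/\eta^2)$ combined with integrability of $p$, and I expect this to be the main (though mild) obstacle in a fully rigorous write-up. No contraction or long-time analysis is required, because the claim is only about the existence, not uniqueness or convergence, of the stationary distribution; both are treated separately in the later sections via the granular-media machinery.
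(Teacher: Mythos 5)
The paper itself offers no proof of Proposition~\ref{prop:prop1}: it is imported verbatim from \cite{ChenZWLC:tech18}, so there is no ``paper's own proof'' to match against. Your argument is the standard verification and is correct in substance: the Langevin flux $\beta^{-1}(\nu F+\nabla\nu)$ vanishes identically at $\nu=p\propto e^{-U}$ because $\nabla p=-pF$, and the interaction flux vanishes by Stein's identity, i.e.\ by substituting $F(y)p(y)=-\nabla_y p(y)$ and integrating by parts (the boundary terms die because $K$ is Gaussian and $p$ is a bounded density, as you note). Showing the flux is zero pointwise is indeed stronger than, and implies, $\partial_\tau\nu=0$, and you are right that uniqueness and convergence are separate matters handled later via the contraction estimate of Theorem~\ref{theonew:Bt2}.

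The one place you should be more careful is the sign bookkeeping in the interaction bracket, because the paper is internally inconsistent there and your write-up papers over it with the phrase ``in the paper's convolution convention.'' Your computation correctly gives $E_{Y\sim p}[K(\thetab-Y)F(Y)]=-\int(\nabla K)(\thetab-y)p(y)\,dy$ under the textbook convention $(\nabla K\ast p)(\thetab)=\int(\nabla K)(\thetab-y)p(y)\,dy$. If one also reads the second bracket term of \eqref{eq:unifiednew}, $-\nabla K\ast\nu_\tau(\thetab)$, with that same textbook convention, the two terms are \emph{equal} rather than opposite, the bracket evaluates to $-2(\nabla K\ast p)(\thetab)\neq 0$, and $p$ is not stationary. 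The cancellation you need is only obtained under the convention actually implied by \eqref{eq:W}, where the repulsive term is $-\nabla_{\thetab'}\kappa(\thetab',\thetab)=+(\nabla K)(\thetab-\thetab')$; equivalently, the symbol ``$\nabla K$'' in \eqref{eq:unifiednew} and \eqref{eq:particle1} must be read as $-\nabla K$ in the honest sense (consistent with the paper's appendix formula $\nabla K=\exp(-\|\thetab\|^2/\eta^2)\frac{2}{\eta^2}\thetab$ and with the verbal claim that this term is repulsive). A fully rigorous write-up should fix one convention explicitly and verify that the two bracket terms then cancel rather than add; with that single clarification your proof is complete.
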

\cite{ChenZWLC:tech18} proposed to solve \eqref{eq:unified} numerically with deterministic particle-optimization algorithms, such as what is called the blob method. Specifically, the continuous density $\nu_{\tau}$ is approximated by a set of $M$ particles $\{\thetab_{\tau}^{(i)}\}_{i=1}^M$ that evolve over time $\tau$, {\it i.e.} $\nu_{\tau} \approx \frac{1}{M}\sum_{i=1}^M \delta_{\thetab_{\tau}^{(i)}}(\thetab)$, where $\delta_{\thetab_{\tau}^{(i)}}(\thetab)= 1$ if $\thetab = \thetab_{\tau}^{(i)}$ and 0 otherwise. 
Note $\nabla_{\thetab}\nu_{\tau}$ in \eqref{eq:unified} is no longer a valid definition when adopting particle approximation for $\nu_{\tau}$. Consequently, $\nabla_{\thetab}\nu_{\tau}$ needs nontrivial approximations, {\it e.g.}, by discrete gradient flows or blob methods proposed in \cite{ChenZWLC:tech18}. We omit the details here for simplicity.

\section{Stochastic Particle-Optimization Sampling  (SPOS)}\label{sec:spos}
\vspace{-0.1cm}
We first introduce a pitfall of SVGD, which is overcame by SPOS. In the analysis for both SVGD and SPOS, we impose the following basic assumptions.
\vspace{-0.2cm}
\begin{assumption}\label{assnew:ass1}
	Assume $F$ and $K$ satisfy the following assumptions:
	\vspace{-0.15cm}
	\begin{itemize}		
		\item[1.1] $F$ is $L_F$-Lipschitz continuous {\it i.e.}, $\|F(\thetab) - F(\thetab^\prime)\| \leq L_F\|\thetab - \thetab^\prime\|$.
		\item[1.2] $K$ is $L_K$-Lipschitz continuous; $\nabla K $ is $L_{\nabla K}$-Lipschitz continuous.
		\item[1.3] $F(\mathbf{0}) = \mathbf{0}$ and $K$ is an even function, {\it i.e.}, $K(-\thetab) = K(\thetab)$.
	\end{itemize}
\end{assumption}
A few remarks: 
$\RN{1})$ Assumptions~1.1 is widely adopted in the other theoretical works such as \cite{user-friendly,chatterji2018theory}
$\RN{2})$ $F(\mathbf{0}) = \mathbf{0}$ in Assumption~1.3 is reasonable, as $F$ in our setting corresponds to an unnormalized log-posterior, which can be shifted such that $F(\mathbf{0}) = \mathbf{0}$ for a specific problem. The assumptions of K are satisfied due to the properties of Gaussian Kernel.


\vspace{-0.1cm}
\subsection{A pitfall of SVGD}\label{sec:svgdpifall}
\vspace{-0.1cm}
We first motivate SPOS by discovering a pitfall of standard SVGD: particles in SVGD tend to collapse to a local mode under some particular conditions. Inspired by the work on analyzing granular media equations by \cite{Malrieu:AAP03,CattiauxGM:PTRF08}, we measure the collapse by calculating the expected distance between {\em exact particles} (without numerical errors), called expected particle distance (EPD) defined below.
\begin{assumption}\label{assnew:ass1new}
Assume $F$ and $K$ satisfy the following assumptions:
	\vspace{-0.3cm}
\begin{itemize}
   \item[2.1] There exists positive $m_K$ such that $\langle \nabla K(\thetab)-\nabla K(\thetab^\prime), \thetab-\thetab^\prime\rangle \leq -m_K \|\thetab - \thetab^\prime\|^2$.
   \vspace{-0.2cm}
    \item[2.2] $F$ is bounded by $H_F$ {\it i.e.}, $\|F(\thetab)\| \leq H_F$
\end{itemize}
    
\end{assumption}
\vspace{-0.1cm}
For an RBF kernel, this assumption could be satisfied by setting the bandwidth large enough and only considering the concave region for simplicity. This may seem restrictive. However, this assumption is imposed only for the analysis of the pitfall property. It is not needed in the non-asymptotic convergence analysis. Besides, we point out what will happen without this assumption in Remark \ref{remark1}.
\vspace{-0.1cm}
\begin{theorem}\label{rem:rem2_}
	Under Assumptions~\ref{assnew:ass1} and \ref{assnew:ass1new}, for the particles ${\thetab}_{\tau}^{(i)}$ defined in Section \ref{sec:svgd}, the EPD for SVGD is bounded as: 
	$\mbox{EPD} \triangleq \sqrt{\sum_{i,j}^{M} \mathbb{E}\|{\thetab}_{\tau}^{(i)}-{\thetab}_{\tau}^{(j)}\|^{2} }\leq C_0e^{-2\lambda \tau}$, 
	where $C_0 = \sqrt{\sum_{i,j}^{M}\|{\thetab}_{0}^{(i)}-{\thetab}_{0}^{(j)}\|^{2} }$, $\lambda={m_K}-H_FL_K$.
\end{theorem}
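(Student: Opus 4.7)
The plan is to establish an exponential contraction of each pairwise squared distance $\|\thetab_\tau^{(i)} - \thetab_\tau^{(j)}\|^2$ along the noiseless SVGD flow of Section~\ref{sec:svgd}, and then aggregate across the $M^2$ pairs to obtain the stated EPD bound.

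First I would differentiate $\|\thetab_\tau^{(i)} - \thetab_\tau^{(j)}\|^2$ in $\tau$ and substitute the SVGD ODE into the right-hand side, splitting the expression into (i)~a ``drift'' contribution from the $\kappa(\cdot,\cdot)F(\cdot)$ terms and (ii)~a ``repulsion'' contribution from the $\nabla_\bullet\kappa$ terms (both averaged over the index $k$ summed inside the ODE). The drift contribution would be bounded in absolute value by rewriting $\kappa(\thetab_\tau^{(k)},\thetab_\tau^{(i)})F(\thetab_\tau^{(i)}) - \kappa(\thetab_\tau^{(k)},\thetab_\tau^{(j)})F(\thetab_\tau^{(j)})$ as a telescoping sum, applying the $L_K$-Lipschitz property of $K$ (Assumption~1.2) together with the uniform bound $\|F\|\le H_F$ (Assumption~2.2), and closing with Cauchy--Schwarz; this controls the drift's contribution to the inner product by $H_F L_K \|\thetab_\tau^{(i)} - \thetab_\tau^{(j)}\|^2$.

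For the repulsion piece I would set $u_k := \thetab_\tau^{(k)} - \thetab_\tau^{(i)}$, $v_k := \thetab_\tau^{(k)} - \thetab_\tau^{(j)}$, so that $u_k - v_k = -(\thetab_\tau^{(i)} - \thetab_\tau^{(j)})$, and invoke Assumption~2.1 in the form $\langle \nabla K(u_k) - \nabla K(v_k),\, u_k - v_k\rangle \le -m_K\|u_k-v_k\|^2$. Averaging over $k$ then supplies the contractive $m_K$-coefficient from the kernel. Combining the two estimates yields a pointwise differential inequality
\[
\frac{d}{d\tau}\|\thetab_\tau^{(i)} - \thetab_\tau^{(j)}\|^2 \;\le\; -2\lambda\,\|\thetab_\tau^{(i)} - \thetab_\tau^{(j)}\|^2, \qquad \lambda = m_K - H_F L_K,
\]
to which Gr\"onwall's inequality delivers $\|\thetab_\tau^{(i)} - \thetab_\tau^{(j)}\|^2 \le \|\thetab_0^{(i)} - \thetab_0^{(j)}\|^2 e^{-2\lambda\tau}$. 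Taking expectations, summing over the $M^2$ pairs and pulling the square root out through the common $e^{-2\lambda\tau}$ factor then yields the stated EPD bound with the constant $C_0$ coming directly from the initial configuration.

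The main obstacle I anticipate is the sign bookkeeping: one has to make the drift bound line up against the contractive $-m_K$ coefficient in the right direction, and in particular extract the tight constant $H_F L_K$ rather than a looser one involving $L_F$ or $L_{\nabla K}$. This requires choosing the correct telescoping decomposition of the $F$-term and leaning on boundedness of $F$ rather than its Lipschitz constant. A secondary subtlety is that each particle's velocity depends on the full configuration, so the pairwise estimate must be uniform over the other $M-2$ particles; this happens automatically because both the Lipschitz constant $L_K$ and the bound $H_F$ are independent of $\thetab_\tau^{(k)}$, and the monotonicity inequality from Assumption~2.1 is applied termwise before the $k$-average is taken.
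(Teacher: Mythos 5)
Your overall skeleton --- differentiate each pairwise squared distance along the deterministic flow, feed the difference of $\nabla K$ terms into Assumption~2.1 to extract the $-m_K$ contraction, bound the kernel-weighted drift by $H_FL_K\|\thetab_{\tau}^{(i)}-\thetab_{\tau}^{(j)}\|^2$, close with Gr\"onwall, and aggregate over the $M^2$ pairs --- is exactly the paper's argument. The gap is in how you propose to obtain the constant $H_FL_K$ for the drift. You write the drift difference as $\kappa(\thetab_{\tau}^{(k)},\thetab_{\tau}^{(i)})F(\thetab_{\tau}^{(i)})-\kappa(\thetab_{\tau}^{(k)},\thetab_{\tau}^{(j)})F(\thetab_{\tau}^{(j)})$, with $F$ evaluated at the particle being updated, and claim that a telescoping decomposition plus boundedness of $F$ yields $H_FL_K\|\thetab_{\tau}^{(i)}-\thetab_{\tau}^{(j)}\|$. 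No such decomposition exists: every telescoping of that expression leaves a residual of the form $\kappa(\thetab_{\tau}^{(k)},\cdot)\,[F(\thetab_{\tau}^{(i)})-F(\thetab_{\tau}^{(j)})]$, and boundedness of $F$ controls this only by $2H_F$ --- a constant, not a multiple of $\|\thetab_{\tau}^{(i)}-\thetab_{\tau}^{(j)}\|$ --- so the differential inequality for $z=\|\thetab_{\tau}^{(i)}-\thetab_{\tau}^{(j)}\|^2$ degrades to $z'\le -2\lambda z + C\sqrt{z}$ and no longer contracts to zero. A distance-proportional bound on that residual forces you to invoke $L_F$, which changes the rate to $m_K-H_FL_K-L_F$ and does not match the theorem.

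The paper avoids this issue entirely because in the form of the dynamics its proof actually uses (consistent with the definition of $\mathcal{K}$ in \eqref{eq:W} and with \eqref{eq:particle1}, and with standard SVGD), the gradient is evaluated at the summed-over particle: the drift difference is $F(\thetab_{\tau}^{(q)})\bigl[K(\thetab_{\tau}^{(i)}-\thetab_{\tau}^{(q)})-K(\thetab_{\tau}^{(j)}-\thetab_{\tau}^{(q)})\bigr]$, so $F(\thetab_{\tau}^{(q)})$ factors out and $\|F\|\le H_F$ together with the $L_K$-Lipschitz property of $K$ gives $H_FL_K\|\thetab_{\tau}^{(i)}-\thetab_{\tau}^{(j)}\|$ with no telescoping at all. (The $F(\thetab_{\tau}^{(i)})$ appearing in the ODE of Section~2.2 is an internal inconsistency of the paper; the proof uses $F(\thetab_{\tau}^{(q)})$, and your ``main obstacle'' dissolves once that form is adopted.) Two smaller points: with your parametrization $u_k=\thetab_{\tau}^{(k)}-\thetab_{\tau}^{(i)}$ the monotonicity inequality, once paired with $+(\thetab_{\tau}^{(i)}-\thetab_{\tau}^{(j)})$ as required by $\frac{d}{d\tau}\|\thetab_{\tau}^{(i)}-\thetab_{\tau}^{(j)}\|^2$, comes out with the expanding sign unless the repulsion is written as $\nabla K(\thetab_{\tau}^{(i)}-\thetab_{\tau}^{(k)})$ as in \eqref{eq:particle1}, so the sign bookkeeping you flag must be settled by fixing that convention; and Gr\"onwall applied to the summed squared distances gives $e^{-2\lambda\tau}$ \emph{before} the square root, hence $C_0e^{-\lambda\tau}$ after it --- the exponent $2\lambda$ in the final statement is a factor-of-two slip that the paper itself also makes.
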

\begin{remark}\label{remark1}
	1) In the case of $\lambda \geq 0$, Theorem~\ref{rem:rem2_} indicates that particles in SVGD would collapse to a point when $\tau\rightarrow \infty$. In practice, we usually find that particles are trapped in a local mode instead of collapsing in practice. This might be due to two reasons: $\RN{1})$ Particles in SVGD are numerical solutions instead of exact solutions as used in EPD, which induces extra numerical errors; $\RN{2})$ Some particles might be out of the concave region of $K$ stated in Assumption~\ref{assnew:ass1new} in SVGD, which is required for the theory to hold. These make the empirical EPD behave not exactly the same as the true particle distance. 2) Theorem~\ref{rem:rem2_} and its proof in the SM also apply to the case of non-convex energy functions. 3) Even if the kernel is not concave, the result would still indicate that particles in the {\em concave regions} would collapse. 4) The pitfall indicates a challenge for developing non-asymptotic SVGD theory (if possible at all), motivating the development of SPOS. 
	5) This is a complement to the result of \cite{LiuZCZZC:icml19}, which proves SVGD is ill-pose under some conditions.
\end{remark}

\subsection{Stochastic particle-optimization sampling to mitigate the pitfall}\label{sec:spos_alg}
\vspace{-0.1cm}
We argue the WGF framework proposed in \cite{ChenZWLC:tech18}, if solved appropriately, is able to overcome the pitfall of SVGD. Specifically, the original solution in \cite{ChenZWLC:tech18} is based on a deterministic particle-approximation method for \eqref{eq:unified}, which introduces hard-to-control approximation errors. Instead, we propose to solve \eqref{eq:unified} {\em stochastically} to replace the $\nabla_{\thetab}\nu_{\tau}$ term in \eqref{eq:unified} with a Brownian motion.
Specifically, first note that the term $\beta^{-1}\nabla_{\thetab}\cdot\nabla_{\thetab}\nu_{\tau}$ is contributed from Brownian motion, {\it i.e.}, solving the SDE, $\mathrm{d}\thetab_{\tau} = \sqrt{2\beta^{-1}}\mathrm{d}\mathcal{W}_{\tau}$, is equivalent to solving the corresponding FP equation: $\partial \nu_{\tau} = \beta^{-1}\nabla_{\thetab}\cdot\nabla_{\thetab}\nu_{\tau}$. Consequently, we decompose the RHS of \eqref{eq:unified} into two parts: $F_1 \triangleq \nabla_{\thetab}\cdot \left(\nu_{\tau}\beta^{-1}F(\thetab_{\tau}) + (\mathcal{K}*\nu_{\tau})\nu_{\tau}\right)$ and $F_2 \triangleq \beta^{-1}\nabla_{\thetab}\cdot \nabla_{\thetab}\nu_{\tau}$. Our idea is to solve $F_1$ deterministically under a PDE setting, and solve $F_2$ stochastically based on its corresponding SDE. When adopting particle approximation for the density $\nu_{\tau}$, both solutions of $F_1$ and $F_2$ are represented in terms of particles $\{\thetab_{\tau}^{(i)}\}$. Thus we can combine the solutions from the two parts directly to approximate the original exact solution of \eqref{eq:unified}. Similar to the results of SVGD in Section~3.3 in \cite{liu2017stein_flow}, we first formally show in Theorem~\ref{theo:particle_app} that when approximating $\nu_{\tau}$ with particles, {\it i.e.}, $\nu_{\tau} \approx \frac{1}{M}\sum_{i=1}^M \delta_{\thetab_{\tau}^{(i)}}(\thetab)$, the PDE can be transformed into a system of deterministic differential equations  with interacting particles.
\begin{theorem}\label{theo:particle_app}
	When approximating $\nu_{\tau}$ in \eqref{eq:unified} with particles $\{\thetab_{\tau}^{(i)}\}$, the PDE $\partial_{\tau}\nu_{\tau} = F_1$ reduces to the following system of differential equations describing evolutions of the particles over time: $\forall i$
	\vspace{-0.2cm}
	{\small \begin{align}\label{eq:particle1}
		\mathrm{d}\thetab_{\tau}^{(i)} = -\beta^{-1}F(\thetab_{\tau}^{(i)})\mathrm{d}\tau &- \frac{1}{M}\sum_{j=1}^{M}K(\thetab_{\tau}^{(i)} - \thetab_{\tau}^{(j)})F(\thetab_{\tau}^{(j)})\mathrm{d}\tau \nonumber \\
		&+\frac{1}{M}\sum_{j=1}^{M}\nabla K(\thetab_{\tau}^{(i)} - \thetab_{\tau}^{(j)})\mathrm{d}\tau
		\end{align} }
\end{theorem}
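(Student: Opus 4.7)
The plan is to derive the ODE system by testing the PDE $\partial_\tau\nu_\tau = F_1$ against an arbitrary smooth compactly-supported test function $\phi:\mathbb{R}^d\to\mathbb{R}$, and then substituting the empirical measure $\nu_\tau = \frac{1}{M}\sum_{i=1}^M\delta_{\thetab_\tau^{(i)}}$ into both sides. Concretely, pairing $\phi$ with the PDE and integrating by parts on the right-hand side gives the weak formulation
\begin{align*}
\frac{d}{d\tau}\int\phi(\thetab)\,\nu_\tau(d\thetab) = -\int \nabla\phi(\thetab)\cdot\Bigl[\beta^{-1}F(\thetab) + (\mathcal{K}*\nu_\tau)(\thetab)\Bigr]\nu_\tau(d\thetab),
\end{align*}
which is the standard way to interpret the continuity-type equation $\partial_\tau\nu_\tau = \nabla_\thetab\cdot(\nu_\tau\, v_\tau)$ with drift $v_\tau(\thetab) = \beta^{-1}F(\thetab) + (\mathcal{K}*\nu_\tau)(\thetab)$. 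This circumvents the issue that $\nabla_\thetab\nu_\tau$ is not classically defined for an empirical measure, since integration by parts only needs derivatives of the test function.

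Next I would substitute the empirical ansatz on both sides. The left-hand side becomes $\tfrac{1}{M}\sum_i \nabla\phi(\thetab_\tau^{(i)})\cdot\dot{\thetab}_\tau^{(i)}$ by the chain rule. For the right-hand side, I would evaluate the interaction term at the particle locations: using \eqref{eq:W} together with the identification $\kappa(\thetab,\thetab')=K(\thetab-\thetab')$ and the convolution form in \eqref{eq:unifiednew}, one gets
\begin{align*}
(\mathcal{K}*\nu_\tau)(\thetab_\tau^{(i)}) = \frac{1}{M}\sum_{j=1}^{M}\Bigl[K(\thetab_\tau^{(i)}-\thetab_\tau^{(j)})F(\thetab_\tau^{(j)}) - \nabla K(\thetab_\tau^{(i)}-\thetab_\tau^{(j)})\Bigr].
\end{align*}
Assembling both sides yields
\begin{align*}
\frac{1}{M}\sum_i \nabla\phi(\thetab_\tau^{(i)})\cdot\dot{\thetab}_\tau^{(i)} = -\frac{1}{M}\sum_i \nabla\phi(\thetab_\tau^{(i)})\cdot\Bigl[\beta^{-1}F(\thetab_\tau^{(i)}) + (\mathcal{K}*\nu_\tau)(\thetab_\tau^{(i)})\Bigr].
\end{align*}

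Finally, I would invoke the arbitrariness of $\phi$: choosing $\phi$ whose gradient at a prescribed particle location is any given vector and vanishes at all others (or, more cleanly, running over a dense family of such test functions) forces equality of the coefficient of $\nabla\phi(\thetab_\tau^{(i)})$ on each side, giving
\begin{align*}
\dot{\thetab}_\tau^{(i)} = -\beta^{-1}F(\thetab_\tau^{(i)}) - \frac{1}{M}\sum_{j=1}^M K(\thetab_\tau^{(i)}-\thetab_\tau^{(j)})F(\thetab_\tau^{(j)}) + \frac{1}{M}\sum_{j=1}^M \nabla K(\thetab_\tau^{(i)}-\thetab_\tau^{(j)}),
\end{align*}
which is exactly \eqref{eq:particle1}. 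The main obstacle is bookkeeping of signs from the two integrations by parts (one explicit, one hidden inside the definition of $\mathcal{K}$) combined with the evenness of $K$ that makes $\nabla K$ odd; this is what flips the sign of the repulsive $\nabla K$ term relative to the attractive $KF$ term. A minor subtlety to flag, rather than resolve in detail, is that the empirical measure only solves the PDE in a distributional sense, so the equivalence above should be read as saying the empirical-measure curve $\tau\mapsto \tfrac{1}{M}\sum_i\delta_{\thetab_\tau^{(i)}}$ is a weak solution of $\partial_\tau\nu_\tau = F_1$ if and only if the particles obey \eqref{eq:particle1}; this mirrors the derivation sketched in \cite{liu2017stein_flow} for SVGD.
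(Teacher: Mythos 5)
Your proposal is correct and follows essentially the same route as the paper's proof: both interpret $\partial_\tau\nu_\tau=F_1$ weakly by pairing with a test function, integrate by parts to move the derivative off the (singular) empirical measure, substitute $\nu_\tau=\frac{1}{M}\sum_i\delta_{\thetab_\tau^{(i)}}$, and read off the drift $-\beta^{-1}F-(\mathcal{K}*\nu_\tau)$ evaluated at each particle. If anything, your final step is tighter than the paper's: the paper specializes to the single test function $u(\thetab)=\thetab$, which strictly only yields the evolution of the empirical mean $\mathbb{E}_{\nu_\tau}[\thetab]$ before asserting the per-particle equation, whereas your use of arbitrary $\phi$ with localized gradients actually justifies identifying the velocity of each individual particle.
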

\vspace{-0.5cm}
Consequently, by solving $\partial_{\tau}\nu_{\tau} = F_2$ stochastically from an SDE perspective, we arrive at the following differential equation system, describing evolution of the particles $\{\thetab_{\tau}^{(i)}\}$ over time $\tau$: $\forall i$
\vspace{-0.2cm}
{\small
	\begin{align}\label{eq:particle}
	\hspace{-0.3cm}\mathrm{d}\thetab_{\tau}^{(i)} = &-\beta^{-1}F(\thetab_{\tau}^{(i)} - \frac{1}{M}\sum_{j=1}^{M}K(\thetab_{\tau}^{(i)} - \thetab_{\tau}^{(j)})F(\thetab_{\tau}^{(j)}) \nonumber \\
	&+\frac{1}{M}\sum_{j=1}^{M}\nabla K(\thetab_{\tau}^{(i)} - \thetab_{\tau}^{(j)}))\mathrm{d}\tau + \sqrt{2\beta^{-1}}\mathrm{d}\mathcal{W}_{\tau}^{(i)}
	\vspace{-0.3cm}
	\end{align}}
\vspace{-0.3cm}

\begin{algorithm}[H]\label{alg1}
	\caption{Stochastic Particle-Optimization Sampling}
	{\bf Input:} Initial particles $\{\theta_0^{(i)}\}_{i=1}^{M}$ with $\theta_0^{(i)}\in{\mathbb{R}^d}$, step size $h_k$, batch size $B_k$ 
	\begin{algorithmic}[1]
		\FOR{iteration $k$= 0,1,...,T}
		\STATE Update ${\theta}_{k+1}^{(i)}$ with \eqref{eq:particle_num} for $\forall i$.
		\ENDFOR
	\end{algorithmic}
	{\bf Output:}{$\{\theta_T^{(i)}\}_{i=1}^M$}
\end{algorithm}
\vspace{-0.3cm}

Our intuition is that if the particle evolution \eqref{eq:particle} can be solved exactly, the solution of \eqref{eq:unifiednew} $\nu_{\tau}$ will be well-approximated by the particles $\{\thetab_{\tau}^{(i)}\}_{i=1}^M$. In our theory, we show this intuition is true. In practice, however, solving \eqref{eq:particle} is typically infeasible, and thus numerical methods are adopted. Furthermore, in the case of large data sets, following SG-MCMC, $F(\theta_k^{(i)})$ is typically replaced by a stochastic version $G_k^{(i)} \triangleq \frac{N}{B_k}\sum_{q\in\mathcal{I}_k}F_q(\theta_k^{(i)})$ evaluated with a minibatch of data of size $B_k$ for computational feasibility.
Based on the Euler method \cite{ChenDC:NIPS15} with a stepsize $h_k$, \eqref{eq:particle} leads to the following updates for the particles at the $k$-th iteration: let $\xi_{k}^{(i)}\sim\mathcal{N}(\mathbf{0}, \Ib)$ for $\forall i$, 
\vspace{-0.3cm}
{\begin{align}\label{eq:particle_num}
{\theta}_{k+1}^{(i)} = &{\theta}_{k}^{(i)} -h_k\beta^{-1}G_k^{(i)}  - \frac{h_k}{M}\sum_{j=1}^{M}K(\theta_{k}^{(i)} - \theta_{k}^{(j)})G_k^{(j)}\nonumber \\
& +\frac{h_k}{M}\sum_{j=1}^{M}\nabla K({\theta}_{k }^{(i)} - {\theta}_{k }^{(j)}) + \sqrt{2\beta^{-1}h_k}\xi_{k}^{(i)}
\end{align}}


\begin{figure}[t!]
	\centering
	\includegraphics[width=0.49\columnwidth]{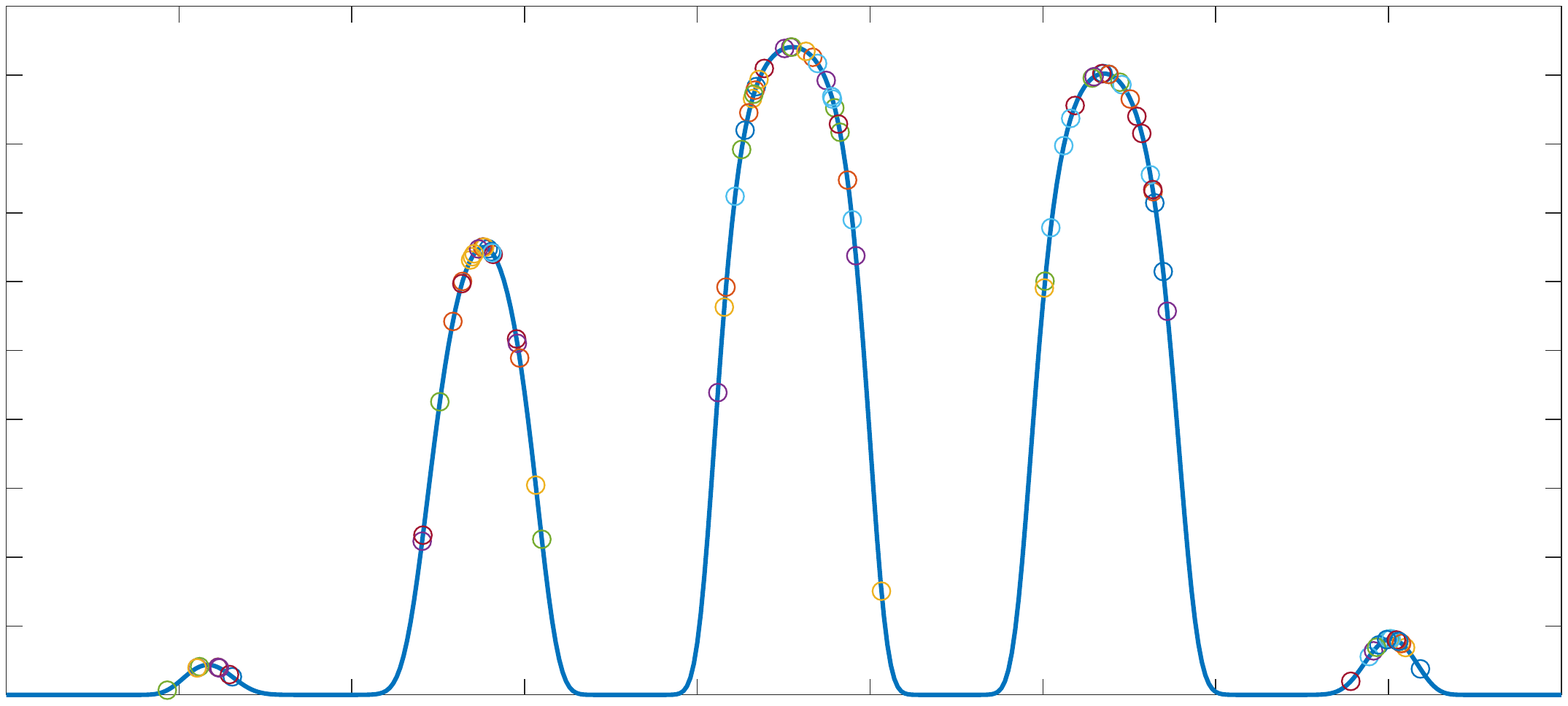}
	\includegraphics[width=0.49\columnwidth]{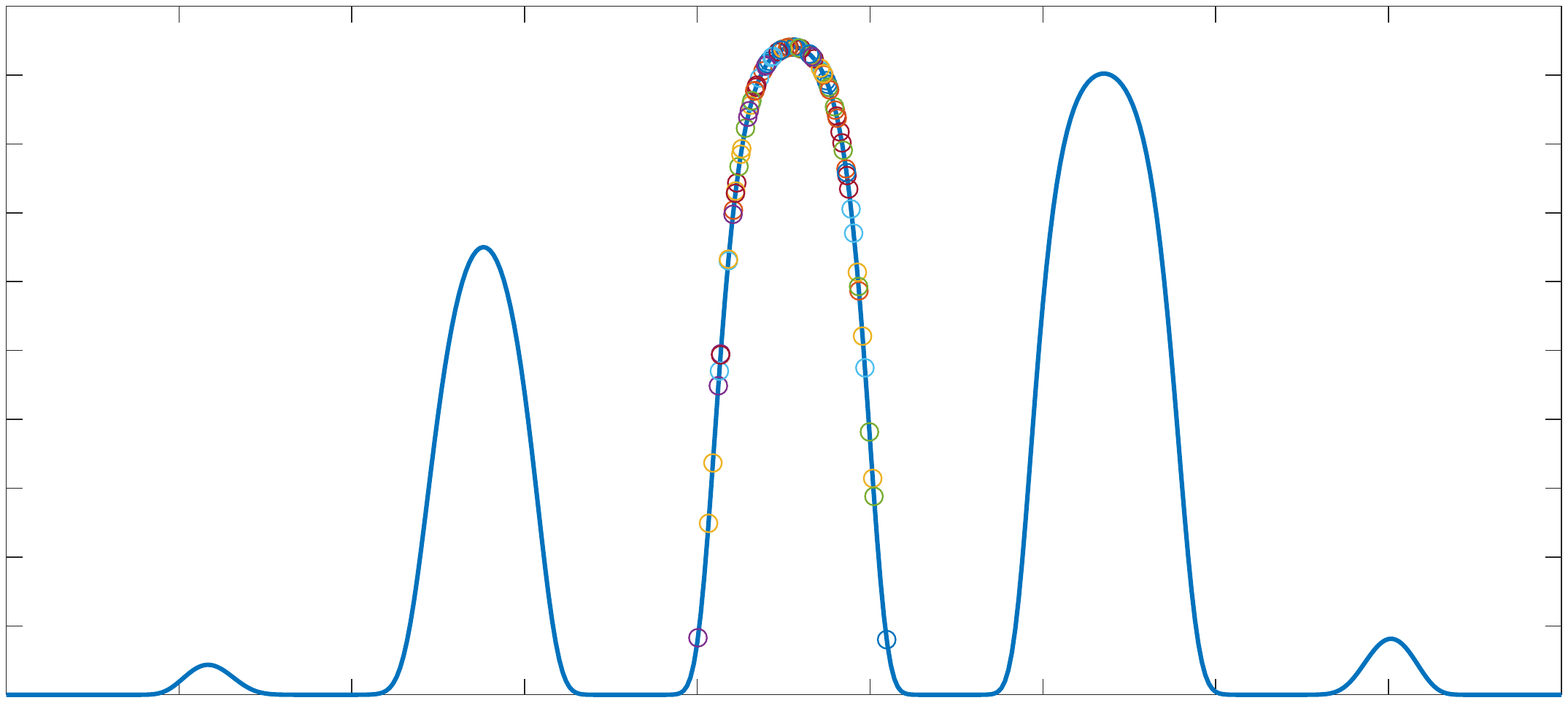}
	\vskip -0.1in
	\caption{Comparison of SPOS (left) and SVGD (right) on a multi-mode distribution. The circles with different colors are the final 100 particles, which are able to spread over all modes for SPOS.}
	\label{fig:multimode}
\end{figure}

We call the algorithm with particle update equations \eqref{eq:particle_num} stochastic particle-optimization sampling (Algorithm~1), in the sense that particles are optimized stochastically with extra random Gaussian noise. Intuitively, the added noise enhances the ability of the algorithm to jump out of local modes, leading to better exploration properties compared to standard SVGD. This serves as one of our motivations to generalize SVGD to SPOS. To illustrate the advantage of introducing the noise term, we compare SPOS and SVGD on sampling a difficult multi-mode distribution, with the density function given in Section~\ref{app:multimode} of the SM. The particles are initialized on a local mode close to zero. Note that in this example there is always positive probability to jump between modes. Figure~\ref{fig:multimode} plots the final locations of the particles along with the true density, which shows that particles in SPOS are able to reach different modes, while they are all trapped at one mode in SVGD. Theorem~\ref{theonew:chaospnew} below bounds the EPD of SPOS, in contrast with that for SVGD in Theorem~\ref{rem:rem2_}, which is intuitively obtained by taking the $\beta \rightarrow \infty$ limit.

\begin{theorem}\label{theonew:chaospnew}
	Under Assumption~\ref{assnew:ass1}, further assume that every $\{\thetab_{\tau}^{(i)}\}$ of \eqref{eq:particle} for approximating $\nu_{\tau}$ in \eqref{eq:unified} has the same initial probability law $\nu_0$ and $\Gamma \triangleq \mathbb{E}_{\thetab\sim\nu_{0}, \thetab^\prime \sim \nu_0}[\|\thetab-\thetab^\prime\|^{2}]<\infty$. Choose a $\beta$ such that $\lambda=\frac{m_F}{\beta}+m_K-H_FL_K > 0$. Then the EPD of SPOS is bounded as:
	$\mbox{EPD} \triangleq \sqrt{\sum_{i,j}^{M} \mathbb{E}\|\thetab_{\tau}^{(i)}-\thetab_{\tau}^{(j)}\|^{2} }\leq C_1e^{-2\lambda \tau}+4\sqrt{\frac{d}{\beta}}\frac{M}{\lambda}$, 
	where $C_1=M(M-1)\Gamma-4\sqrt{d\beta^{-1}}\frac{M}{\lambda}$.
\end{theorem}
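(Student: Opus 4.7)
The strategy is the standard granular-media route: apply It\^{o}'s formula to the squared pairwise distance $\|D_{ij}(\tau)\|^{2}$ where $D_{ij}(\tau)\triangleq \thetab^{(i)}_{\tau}-\thetab^{(j)}_{\tau}$, derive a closed linear differential inequality for $\mathbb{E}\|D_{ij}\|^{2}$, sum it over all $M(M-1)$ off-diagonal pairs, and close with Gr\"onwall. Subtracting the $i$-th and $j$-th copies of \eqref{eq:particle} and using that $\mathcal{W}^{(i)}$ and $\mathcal{W}^{(j)}$ are independent $d$-dimensional Brownian motions (so the effective diffusion coefficient of $D_{ij}$ has $\sigma\sigma^{\top}=4\beta^{-1}I_{d}$), It\^{o} applied to $\|D_{ij}\|^{2}$ followed by expectation kills the martingale part and leaves
\begin{equation*}
\tfrac{d}{d\tau}\mathbb{E}\|D_{ij}\|^{2}=2\,\mathbb{E}\langle D_{ij},\,b_{i}-b_{j}\rangle+4\beta^{-1}d,
\end{equation*}
where $b_{i}$ denotes the drift of $\thetab^{(i)}_{\tau}$ in \eqref{eq:particle}.

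Next I would bound the three pieces of $\langle D_{ij},b_{i}-b_{j}\rangle$ separately. The pointwise piece $-\beta^{-1}\langle D_{ij},F(\thetab^{(i)})-F(\thetab^{(j)})\rangle$ is $\le -m_{F}\beta^{-1}\|D_{ij}\|^{2}$ by the $m_{F}$-strong monotonicity of $F$ that the theorem tacitly invokes via $m_{F}$. The $K$-weighted cross term $-\tfrac{1}{M}\sum_{k}[K(\thetab^{(i)}-\thetab^{(k)})-K(\thetab^{(j)}-\thetab^{(k)})]F(\thetab^{(k)})$ is controlled using Cauchy--Schwarz together with Assumption~1.2 ($L_{K}$-Lipschitzness of $K$) and Assumption~2.2 ($\|F\|\le H_{F}$), giving $|\langle D_{ij},\cdot\rangle|\le H_{F}L_{K}\|D_{ij}\|^{2}$. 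For the repulsive piece $\tfrac{1}{M}\sum_{k}[\nabla K(\thetab^{(i)}-\thetab^{(k)})-\nabla K(\thetab^{(j)}-\thetab^{(k)})]$, Assumption~2.1 applied with $u=\thetab^{(i)}-\thetab^{(k)}$ and $v=\thetab^{(j)}-\thetab^{(k)}$ (so $u-v=D_{ij}$) yields $\langle D_{ij},\nabla K(u)-\nabla K(v)\rangle\le -m_{K}\|D_{ij}\|^{2}$ uniformly in $k$, and averaging preserves the constant. Combining the three pieces,
\begin{equation*}
\tfrac{d}{d\tau}\mathbb{E}\|D_{ij}\|^{2}\le -2\lambda\,\mathbb{E}\|D_{ij}\|^{2}+4\beta^{-1}d,
\end{equation*}
with $\lambda=m_{F}/\beta+m_{K}-H_{F}L_{K}$.

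Finally, set $S(\tau)\triangleq\sum_{i,j}\mathbb{E}\|D_{ij}\|^{2}$; summing the inequality over the $M(M-1)$ nontrivial pairs and applying Gr\"onwall gives $S(\tau)\le S(0)e^{-2\lambda\tau}+\tfrac{2dM(M-1)}{\beta\lambda}(1-e^{-2\lambda\tau})$, and the i.i.d. initialization from $\nu_{0}$ gives $S(0)=M(M-1)\Gamma$. Taking square roots and using $\sqrt{a+b}\le\sqrt{a}+\sqrt{b}$ together with $\sqrt{M(M-1)}\le M$ delivers the stated form $\mathrm{EPD}\le C_{1}e^{-2\lambda\tau}+4M\sqrt{d/\beta}/\lambda$ with $C_{1}=M(M-1)\Gamma-4M\sqrt{d/\beta}/\lambda$. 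The main technical obstacle is the sign-indefinite $K$-weighted cross term: it is not a contraction and must be dominated by the strict monotonicity contributions coming from $F$ and $\nabla K$. This is precisely why the hypothesis requires $\beta$ to be chosen so that $\lambda>0$, and why the boundedness assumption $\|F\|\le H_{F}$ (Assumption~2.2) is indispensable; without it the cross term could not be absorbed into the contraction and the exponential decay would fail, recovering in the limit $\beta\to\infty$ exactly the SVGD pitfall of Theorem~\ref{rem:rem2_}.
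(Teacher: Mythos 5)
Your drift-term analysis coincides with the paper's: the paper likewise forms $\mathrm{d}\bigl(\mathbb{E}\sum_{ij}\|\thetab_\tau^{(i)}-\thetab_\tau^{(j)}\|^2\bigr)$, bounds the $F$-term by $-2\beta^{-1}m_F$ times the sum (using Assumption~3.1, which the theorem indeed invokes only tacitly), the $\nabla K$-term by $-2m_K$ times the sum via Assumption~2.1, and the $K$-weighted cross term by $+2H_FL_K$ times the sum via Assumptions~1.2 and~2.2, exactly as you do. The one genuine divergence is the noise term. You apply It\^o's formula properly, so the martingale increment vanishes in expectation and you are left with the constant quadratic-variation correction $4\beta^{-1}d$ per pair, i.e.\ $z'(\tau)\le -2\lambda z(\tau)+4\beta^{-1}dM(M-1)$. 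The paper instead retains the increment $\langle \thetab_\tau^{(i)}-\thetab_\tau^{(j)},\,\mathrm{d}\mathcal{W}_\tau^{(i)}-\mathrm{d}\mathcal{W}_\tau^{(j)}\rangle$ and bounds it by a Cauchy--Schwarz step, producing the forcing term $4M\sqrt{d\beta^{-1}z(\tau)}$; substituting $y=\sqrt{z}$ then gives $y'\le-\lambda y+2M\sqrt{d/\beta}$, and it is this $\sqrt{z}$-type forcing that generates the stated asymptote $4M\sqrt{d/\beta}/\lambda$ and the stated $C_1$. Your version is the more standard stochastic-calculus computation; the paper's is the one that literally matches the theorem.

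The gap is in your last sentence. From $S(\tau)\le S(0)e^{-2\lambda\tau}+\tfrac{2dM(M-1)}{\beta\lambda}$, taking square roots with $\sqrt{a+b}\le\sqrt a+\sqrt b$ gives $\mathrm{EPD}\le\sqrt{M(M-1)\Gamma}\,e^{-\lambda\tau}+M\sqrt{2d/(\beta\lambda)}$. This does \emph{not} ``deliver the stated form'': the transient coefficient is $\sqrt{M(M-1)\Gamma}$ rather than $M(M-1)\Gamma$, the decay rate is $e^{-\lambda\tau}$ rather than $e^{-2\lambda\tau}$, and --- most substantively --- the steady-state level scales as $\lambda^{-1/2}$ rather than the theorem's $\lambda^{-1}$, so no amount of constant-juggling converts one into the other (your asymptote is smaller than the theorem's only when $\lambda\le 8$, and larger otherwise). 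You have in effect proved a clean and arguably preferable bound, but you have not proved the bound as stated, and the assertion that square roots plus $\sqrt{M(M-1)}\le M$ recover $C_1e^{-2\lambda\tau}+4M\sqrt{d/\beta}/\lambda$ with $C_1=M(M-1)\Gamma-4M\sqrt{d/\beta}/\lambda$ is false as written. To land on the paper's exact constants you would need to reproduce its $\sqrt{z}$ treatment of the noise term (or explicitly remark that you are proving a variant with a different, $\lambda^{-1/2}$-scaling additive constant).
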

\begin{remark}
	There are two interesting cases: $\RN{1})$ When $C_1 > 0$, the EPD would decrease to the bound $4\sqrt{d\beta^{-1}}M/\lambda$ with time $t$. This represents the phenomenon of an attraction force between particles; $\RN{2})$ When $C_1 < 0$, the EPD would increase to the same bound, which represents the phenomenon of a repulsive force between particles, {\it e.g.}, when particles are initialized with the same value ($\Gamma = 0$), they would be pushed away from each other until the EPD increases to the aforementioned bound. 
\end{remark}

\section{Non-Asymptotic Convergence Analysis}\label{sec:nonasym_cnovex}
In this section, we prove non-asymptotic convergence rates for the proposed SPOS algorithm under the 1-Wasserstein metric ${W}_1$, a special case of p-Wasserstein metric defined as
{\small$${W}_p(\mu,\nu)=\left(\inf_{\zeta \in \Gamma(\mu,\nu)}\int _{\mathbb{R}^d\times\mathbb{R}^d} \|X_{\mu}-X_{\nu}\|^p d\zeta(X_{\mu},X_{\nu})\right)^{1/p}
	$$} where $\Gamma(\mu,\nu)$ is the set of joint distributions on $\mathbb{R}^d\times\mathbb{R}^d$ with marginal distribution $\mu$ and $\nu$. Note that SPOS reduces to SVGD when $\beta \rightarrow \infty$, thus our theory also sheds light on the convergence behavior of SVGD, where non-asymptotic theory is currently mostly missing, despite the asymptotic theory developed recently \cite{liu2017stein_flow,LuLN:arxiv18}. 
 For analysis, we further impose the following assumptions.
\vspace{-0.2cm}
\begin{assumption}\label{assnew:ass3new}
	Assume $F$ and $\nu_0$ satisfy the following assumptions:
	\vspace{-0.15cm}
	\begin{itemize}		
		\item[3.1] There exists positive $m_F$ such that $\langle F(\thetab)-F(\thetab^\prime), \thetab-\thetab^\prime \rangle \geq m_F \|\thetab-\thetab^\prime\|^2$.
		\item[3.2] The initial probability law of each particle has a bounded and strictly positive density $\nu_0$ with respect to the Lebesgue measure on $\mathbb{R}^d$, and $\gamma_0 \triangleq \log \int_{\mathbb{R}^d}e^{\|\thetab\|^2}\nu_0(\thetab)d\thetab < \infty$
	\end{itemize}
\end{assumption}
A few remarks: $\RN{1})$ Assumption~3.1 indicates $U$ to be a convex function. Theory of non-convex $U$ is presented in Section~\ref{sec:generalU} of the SM with some extra assumptions. 
$\RN{2})$ Assumptions~3.1 is widely adopted in other theoretical works such as \cite{user-friendly,chatterji2018theory}
$\RN{3})$ Assumptions~3.2 has also been adopted in \cite{RaginskyRT:COLT17}

\vspace{-0.2cm}
\subsection{Basic setup and extra notation}\label{sec:setup}
\vspace{-0.2cm}
Due to the exchangeability of the particle system $\{\thetab_{\tau}^{(i)}\}_{i=1}^M$ in \eqref{eq:particle}, if we initialize all the particles $\thetab_{\tau}^{(i)}$ with the same distribution $\rho_0$, they would endow the same distribution for each time $\tau$. {\em We denote the distribution of each $\thetab_{\tau}^{(i)}$ as $\rho_{\tau}$}. Similar arguments hold for the particle system $\{\theta_k^{(i)}\}_{i=1}^M$ in \eqref{eq:particle_num}, and thus we denote the distribution of each $\theta_k^{(i)}$ as $\mu_k$ ($k = 1, 2, \cdots, T$). To this end, our analysis aims at bounding ${W}_1(\mu_T,\nu_{\infty})$ since $\nu_{\infty}$ is our target distribution $p(\thetab|\mathcal{X})$ according to Proposition \ref{prop:prop1}. 

In the following, for conciseness, we use a summation of stepsizes to represent the ``time index'' of some density, {\it e.g.}, $\rho_{\sum_{k=0}^{T-1} h_k}$. The high-level idea of bounding ${W}_1(\mu_T,\nu_{\infty})$ in this section is to decompose it as follows:
\begin{align}\label{eq:decomp}
{W}_1(\mu&_T,\nu_{\infty}) \leq {W}_1\left(\mu_T,\rho_{\sum\limits_{k=0}^{T-1} h_k}\right) \\
&+{W}_1\left(\rho_{\sum\limits_{k=0}^{T-1} h_k},\nu_{\sum\limits_{k=0}^{T-1} h_k}\right)+{W}_1\left(\nu_{\sum\limits_{k=0}^{T-1} h_k},\nu_{\infty}\right)~\nonumber.
\end{align}

\subsection{Bounds with stochastic particle approximation}\label{sec:spa_bound}
In this section, we bound ${W}_1(\rho_{\sum_{k=0}^{T-1} h_k},\nu_{\sum_{k=0}^{T-1} h_k})$ and ${W}_1(\nu_{\sum_{k=0}^{T-1} h_k},\nu_{\infty})$ in \eqref{eq:decomp}. The first term corresponds to a variant of granular media equation, but is much more challenging to bound.
\begin{theorem}\label{theonew:chaosp}
	Under Assumption~\ref{assnew:ass1}$\&$\ref{assnew:ass3new} and letting $\rho_0=\nu_0 $, there exist positive constants $c_{1}$ and $c_{2}$ independent of $(M,\tau)$ and satisfying $c_2 < \beta^{-1}$ such that
	\begin{align}\label{eq:bt21}
	{W}_1(\rho_{\tau},\nu_{\tau})\leq c_{1}(\beta^{-1}-c_{2})^{-1}M^{-1/2},~~~~\forall \tau.
	\end{align}
\end{theorem}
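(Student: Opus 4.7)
I propose to prove this via a propagation-of-chaos argument with synchronous coupling to the nonlinear (McKean--Vlasov) SDE naturally associated with \eqref{eq:unifiednew}. Introduce $M$ i.i.d.\ copies
\begin{align*}
d\bar{\thetab}_\tau^{(i)} =& \bigl(-\beta^{-1}F(\bar{\thetab}_\tau^{(i)}) - \mathbb{E}_{Y\sim\nu_\tau}[K(\bar{\thetab}_\tau^{(i)}-Y)F(Y)] \\
&+ \mathbb{E}_{Y\sim\nu_\tau}[\nabla K(\bar{\thetab}_\tau^{(i)}-Y)]\bigr)d\tau + \sqrt{2\beta^{-1}}\,d\mathcal{W}_\tau^{(i)},
\end{align*}
driven by the \emph{same} Brownian motions $\mathcal{W}_\tau^{(i)}$ as in \eqref{eq:particle} and started at $\bar{\thetab}_0^{(i)}=\thetab_0^{(i)}\sim\nu_0=\rho_0$. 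By standard McKean--Vlasov well-posedness together with Proposition~\ref{prop:prop1}, the one-particle marginal of each $\bar{\thetab}_\tau^{(i)}$ equals $\nu_\tau$, so this coupling immediately yields $W_1(\rho_\tau,\nu_\tau) \leq \bigl(\mathbb{E}\|\thetab_\tau^{(i)}-\bar{\thetab}_\tau^{(i)}\|^2\bigr)^{1/2}$.

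Set $\phi(\tau) := \mathbb{E}\|\thetab_\tau^{(i)}-\bar{\thetab}_\tau^{(i)}\|^2$. Because both systems share the diffusion term, It\^o's lemma eliminates the Brownian contribution and produces an ODE for $\phi$. The drift difference $-\beta^{-1}(F(\thetab^{(i)})-F(\bar{\thetab}^{(i)}))$ yields the dissipative contribution $-2\beta^{-1}m_F\phi(\tau)$ via Assumption~3.1. For the $K$-interaction difference, I split it as $A_1 + A_2$ with
\begin{align*}
A_1 &:= \tfrac{1}{M}\sum_j\bigl[K(\thetab^{(i)}-\thetab^{(j)})F(\thetab^{(j)})-K(\bar{\thetab}^{(i)}-\bar{\thetab}^{(j)})F(\bar{\thetab}^{(j)})\bigr], \\
A_2 &:= \tfrac{1}{M}\sum_j K(\bar{\thetab}^{(i)}-\bar{\thetab}^{(j)})F(\bar{\thetab}^{(j)})-\mathbb{E}_{Y\sim\nu_\tau}[K(\bar{\thetab}^{(i)}-Y)F(Y)].
\end{align*}
The term $A_1$ is bounded by Lipschitz continuity (Assumptions~1.1--1.2) together with a uniform-in-$\tau$ second-moment bound on $\nu_\tau$ (obtained separately from Assumption~3.1 and $F(\mathbf{0})=\mathbf{0}$), producing a contribution of the form $c_2'\,\phi(\tau)$. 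The term $A_2$ is a sample-mean fluctuation: conditional on $\bar{\thetab}^{(i)}$, the variables $\{\bar{\thetab}^{(j)}\}_{j\neq i}$ are i.i.d.\ with distribution $\nu_\tau$, so isolating the diagonal $j=i$ (which contributes $O(1/M^2)$) leaves an unbiased sample mean of $L^2$ norm $O(M^{-1/2})$. By Cauchy--Schwarz, the cross term $\mathbb{E}\langle e^{(i)},A_2\rangle$ with $e^{(i)}:=\thetab^{(i)}-\bar{\thetab}^{(i)}$ yields a contribution of order $\sqrt{\phi(\tau)/M}$. An identical splitting handles the $\nabla K$ interaction via $L_{\nabla K}$.

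Collecting everything yields a scalar differential inequality
\begin{align*}
\phi'(\tau) \leq -2(\beta^{-1}-c_2)\phi(\tau) + C_0\,M^{-1/2}\sqrt{\phi(\tau)},
\end{align*}
for constants $c_2=c_2(m_F,L_F,L_K,L_{\nabla K})$ and $C_0$ that are \emph{independent of} $M$ and $\tau$. Under the hypothesis $c_2<\beta^{-1}$, Young's inequality absorbs the cross term to give $\phi'(\tau) \leq -(\beta^{-1}-c_2)\phi(\tau) + \tilde{C}^2\bigl(M(\beta^{-1}-c_2)\bigr)^{-1}$; combined with $\phi(0)=0$ (exact coupling at initialization) and Gr\"onwall's lemma, this yields $\phi(\tau)\leq \tilde{C}^2\bigl(M(\beta^{-1}-c_2)^2\bigr)^{-1}$ uniformly in $\tau$. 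Taking square roots delivers the claim with $c_1=\tilde{C}$.

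\textbf{Main obstacle.} The decisive difficulty is making every estimate \emph{uniform in $\tau$}. This is precisely what forces the condition $c_2<\beta^{-1}$: the convexity-induced dissipation ($-2\beta^{-1}m_F\phi$) must strictly dominate the Lipschitz expansion produced by $A_1$ and by the $\nabla K$ analogue. Achieving the required uniformity rests on (i) uniform-in-$\tau$ second-moment bounds on $\nu_\tau$ (equivalently on $\bar{\thetab}_\tau^{(i)}$), which follow from Assumptions~1.1, 1.2, 3.1 and $F(\mathbf{0})=\mathbf{0}$; (ii) careful isolation of the $j=i$ diagonal in $A_2$, without which one would lose the sharp $M^{-1/2}$ rate; and (iii) verifying that $c_1$ and $c_2$ are genuinely $M$-independent, which holds because $\nu_\tau$ solves a fixed PDE not involving $M$.
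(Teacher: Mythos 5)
Your proposal is correct and follows essentially the same propagation-of-chaos strategy as the paper: couple the interacting system \eqref{eq:particle} to $M$ copies of the nonlinear McKean--Vlasov SDE whose time-marginal is $\nu_\tau$, split the drift difference into Lipschitz terms (contributing $O(\phi)$) and empirical-mean fluctuation terms (contributing $O(M^{-1/2}\sqrt{\phi})$ after Cauchy--Schwarz), and close with a Gronwall argument under the dissipativity condition $c_2<\beta^{-1}$. The one substantive difference is the choice of coupling. You use a synchronous coupling---same Brownian motions and same initial points---so the noise cancels pathwise, $\phi(0)=0$, and the auxiliary particles $\bar{\thetab}_\tau^{(j)}$ are genuinely i.i.d., which is exactly the structure your conditional-variance computation for the $A_2$ term needs to get the sharp $M^{-1/2}$ rate. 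The paper instead drives all $\bar{\thetab}_\tau^{(i)}$ by a single common Brownian motion independent of the $\mathcal{W}_\tau^{(i)}$ and starts them independently of the $\thetab_0^{(i)}$; as a result the noise cross term only vanishes in expectation, the initial gap satisfies $\gamma(0)\leq 4\gamma_0$ rather than zero, and the final Gronwall step requires the extra hypothesis that $\gamma_0$ is ``small enough'' so that the starting point sits below the fixed point of the differential inequality. Your coupling removes that restriction and keeps the i.i.d.\ structure needed for the fluctuation bound intact, so it is arguably the cleaner route to the same estimate.
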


\begin{remark}\label{rem:rem1}
	According to Theorem~\ref{theonew:chaosp}, we can bound the ${W}_1(\rho_{\sum_{k=0}^{T-1} h_k},\nu_{\sum_{k=0}^{T-1} h_k})$ term as ${W}_1(\rho_{\sum_{k=0}^{T-1} h_k},\nu_{\sum_{k=0}^{T-1} h_k})\leq \frac{c_{1}}{\sqrt{M}(\beta^{-1}-c_{2})}$. Furthermore, by letting $\tau \rightarrow \infty$, we have ${W}_1(\rho_{\infty},\nu_{\infty})\leq \frac{c_{1}}{\sqrt{M}(\beta^{-1}-c_{2})}$, an important result to prove the following theorem.
\end{remark}
\begin{theorem}\label{theonew:Bt2}
	Under Assumption~\ref{assnew:ass1}$\&$\ref{assnew:ass3new}, the following holds: ${W}_1(\nu_{\tau},\nu_{\infty})\leq c_{3}e^{-2\lambda_{1}\tau}$, 
	where $\lambda_{1}=\beta^{-1}m_F-L_F-2L_K$ and $c_{3}$ is some positive constant independent of $(M,\tau)$. Furthermore, the ${W}_1(\nu_{\sum_{k=0}^T h_k},\nu_{\infty})$ term in \eqref{eq:decomp} can be bounded as:
	\begin{align}\label{eq:bt22}
	&{W}_1(\nu_{\sum_{k=0}^{T-1} h_k},\nu_{\infty})\leq c_{3}\exp\left(-2\lambda_{1}(\sum_{k=0}^{T-1} h_k)\right)~.
	\end{align}
\end{theorem}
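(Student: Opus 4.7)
The plan is to realize $\nu_\tau$ as the time-marginal law of the McKean--Vlasov SDE corresponding to the nonlinear PDE \eqref{eq:unified}, and to bound $W_1(\nu_\tau,\nu_\infty)$ via a synchronous coupling against a stationary trajectory. Concretely, let $\bar\theta_\tau$ solve $d\bar\theta_\tau = -\beta^{-1}F(\bar\theta_\tau)\,d\tau - (\mathcal{K}*\nu_\tau)(\bar\theta_\tau)\,d\tau + \sqrt{2\beta^{-1}}\,d\mathcal{W}_\tau$ with $\bar\theta_0\sim\nu_0$, so that $\bar\theta_\tau\sim\nu_\tau$ by construction; let $\tilde\theta_\tau$ solve the same SDE with $\tilde\theta_0\sim\nu_\infty$ and driven by the identical Brownian motion, so that by Proposition~\ref{prop:prop1} and stationarity $\tilde\theta_\tau\sim\nu_\infty$ for every $\tau$.

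Setting $R_\tau := \mathbb{E}\|\bar\theta_\tau-\tilde\theta_\tau\|^2$, Itô's formula together with cancellation of the common diffusion term gives $\tfrac12\tfrac{dR_\tau}{d\tau} = -\beta^{-1}\mathbb{E}\langle \bar\theta_\tau-\tilde\theta_\tau,\,F(\bar\theta_\tau)-F(\tilde\theta_\tau)\rangle - \mathbb{E}\langle \bar\theta_\tau-\tilde\theta_\tau,\,(\mathcal{K}*\nu_\tau)(\bar\theta_\tau)-(\mathcal{K}*\nu_\infty)(\tilde\theta_\tau)\rangle$. Assumption~3.1 bounds the first expectation by $-\beta^{-1}m_F R_\tau$. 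For the interaction piece, I introduce conditionally independent exchangeable copies $(\bar\theta_\tau',\tilde\theta_\tau')$ with the same joint law as $(\bar\theta_\tau,\tilde\theta_\tau)$, so that $(\mathcal{K}*\nu_\tau)(\bar\theta_\tau)-(\mathcal{K}*\nu_\infty)(\tilde\theta_\tau) = \mathbb{E}'[\mathcal{K}(\bar\theta_\tau,\bar\theta_\tau') - \mathcal{K}(\tilde\theta_\tau,\tilde\theta_\tau')]$. Expanding $\mathcal{K}(\theta,y) = F(y)K(y-\theta) - \nabla K(y-\theta)$ and using the telescoping identity $F(y)K(y-\theta) - F(y')K(y'-\theta') = F(y)[K(y-\theta)-K(y'-\theta')] + [F(y)-F(y')]K(y'-\theta')$ together with the Lipschitz constants $L_F$, $L_K$, $L_{\nabla K}$ from Assumption~1, Cauchy--Schwarz, the exchangeability $\mathbb{E}\|\bar\theta_\tau'-\tilde\theta_\tau'\|^2 = R_\tau$, and uniform second-moment estimates on the two trajectories (propagated from Assumption~3.2 by a standard Lyapunov argument that uses strong convexity of $U$), this contribution is absorbed into a multiple of $(L_F + 2L_K)\,R_\tau$. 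Combining the two pieces yields a Gronwall-type inequality of the form $dR_\tau/d\tau \le -c\lambda_1 R_\tau$ with $\lambda_1 = \beta^{-1}m_F - L_F - 2L_K$ and an explicit absolute constant $c>0$.

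Integrating gives $R_\tau \le R_0\,e^{-c\lambda_1\tau}$. Since $(\bar\theta_\tau,\tilde\theta_\tau)$ is a valid coupling of $\nu_\tau$ and $\nu_\infty$, Kantorovich--Rubinstein duality yields $W_1(\nu_\tau,\nu_\infty) \le \mathbb{E}\|\bar\theta_\tau-\tilde\theta_\tau\| \le \sqrt{R_\tau}$, and the finiteness of $\sqrt{R_0}$ follows from Assumption~3.2 (which ensures $\nu_0$ has a finite exponential moment, hence finite second moment) and from the fact that strong convexity of $U$ gives $\nu_\infty$ a finite second moment as well. Absorbing numerical constants into $c_3$ produces $W_1(\nu_\tau,\nu_\infty) \le c_3\,e^{-2\lambda_1\tau}$ as claimed, and the second bound follows by evaluating the first at $\tau = \sum_{k=0}^{T-1}h_k$.

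The central difficulty is the interaction term, which involves a simultaneous shift in the base point ($\bar\theta_\tau\to\tilde\theta_\tau$) and in the convolving measure ($\nu_\tau\to\nu_\infty$); the estimate must close back onto $R_\tau$ alone, without leaving stray contributions proportional to $W_2(\nu_\tau,\nu_\infty)$ that would leave the Gronwall inequality open. The exchangeable-copies device accomplishes this, but care is needed so that (i) the product $F(y)K(y-\theta)$ contributes $L_F + L_K$ rather than $L_F\cdot L_K$ after the telescoping decomposition, and (ii) the prefactors $\|F(y')\|$ and $|K(\cdot)|$ generated when Lipschitz-bounding differences of $K$ and $F$ are tamed by uniform second-moment estimates. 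These moment estimates are the standard Lyapunov-type bounds available for McKean--Vlasov diffusions with a strongly convex confining potential and globally Lipschitz interaction kernel, and they rely on Assumption~3.2 as the source of finite initial moments.
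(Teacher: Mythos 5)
Your strategy is sound but genuinely different from the paper's. You couple the nonlinear McKean--Vlasov process started from $\nu_0$ directly against its stationary version started from $\nu_\infty$, using the independent-copy representation $(\mathcal{K}*\nu_\tau)(\bar{\thetab}_\tau)-(\mathcal{K}*\nu_\infty)(\tilde{\thetab}_\tau)=\mathbb{E}'[\mathcal{K}(\bar{\thetab}_\tau,\bar{\thetab}'_\tau)-\mathcal{K}(\tilde{\thetab}_\tau,\tilde{\thetab}'_\tau)]$ to close the Gronwall inequality on $R_\tau$ alone. The paper instead routes everything through the finite interacting-particle system \eqref{eq:particle}: it writes $W_2(\nu_\tau,\nu_\infty)\leq W_2(\nu_\tau,\rho_{\tau,1})+W_2(\rho_{\tau,1},\rho_{\tau,2})+W_2(\rho_{\tau,2},\nu_\infty)$, controls the two outer terms by the propagation-of-chaos bound of Theorem~\ref{theonew:chaosp} applied to particle systems initialized at $\nu_0$ and at $\nu_\infty$, couples the two particle systems synchronously for the middle term, and finally sends $M\to\infty$ to kill the $O(M^{-1/2})$ contributions. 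Your route avoids the particle system and the $M\to\infty$ limit entirely and is the more classical granular-media argument; the paper's route buys one specific simplification, described next.

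The gap is in the interaction estimate. After your telescoping, the term $\mathbb{E}'\bigl[F(\bar{\thetab}'_\tau)\bigl(K(\bar{\thetab}'_\tau-\bar{\thetab}_\tau)-K(\tilde{\thetab}'_\tau-\tilde{\thetab}_\tau)\bigr)\bigr]$ does not reduce to a multiple of $(L_F+2L_K)R_\tau$ with an absolute constant: Lipschitz-bounding the difference of $K$ leaves the prefactor $\|F(\bar{\thetab}'_\tau)\|$, which is only controlled by a uniform second-moment bound $H_\theta\geq\sup_\tau(\mathbb{E}\|F(\bar{\thetab}_\tau)\|^2)^{1/2}$, so this piece contributes an additive $O(H_\theta L_K)\,R_\tau$ to the Gronwall inequality. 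The resulting contraction rate is $\beta^{-1}m_F-L_F-2L_{\nabla K}-O(H_\theta L_K)$, strictly worse than the stated $\lambda_1=\beta^{-1}m_F-L_F-2L_K$, and its positivity requires a stronger smallness condition on $\beta$; this is exactly the $H_\theta L_K$ penalty that appears in the paper's own Theorems~\ref{rem:rem2_}, \ref{theonew:chaospnew} and \ref{theonew:chaosp}, where the analogous cross term cannot be avoided. The paper's proof of Theorem~\ref{theonew:Bt2} escapes it only by a special construction: within each auxiliary particle system all particles (and their driving Brownian motions) are taken identical, so every kernel argument is $\mathbf{0}$ and the corresponding term $\xi^4_{ij}$ vanishes exactly. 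In your coupling $\bar{\thetab}'_\tau$ is a genuinely independent copy, so no such cancellation occurs. To repair your argument you must either accept the degraded rate (and a correspondingly stronger assumption on $\beta$) or introduce a device that makes the kernel arguments degenerate as the paper does. Also note that your final passage from $R_\tau\leq R_0e^{-2\lambda_1\tau}$ to $W_1\leq c_3e^{-2\lambda_1\tau}$ loses a factor of two in the exponent since $W_1\leq\sqrt{R_\tau}$, though the paper commits the same imprecision.
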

To ensure ${W}_1(\nu_{\sum_{k=0}^{T-1} h_k},\nu_{\infty})$ decreases over time, one needs to choose $\beta$ small enough such that $\lambda_1 > 0$. This also sheds light on a failure case of SVGD (where $\beta\rightarrow\infty$) discussed in Section~\ref{sec:svgdpifall}.
\vspace{-0.2cm}
\subsection{Bounds with a numerical solution}
\vspace{-0.2cm}
\label{sec:num_bound}
To bound the ${W}_1(\mu_T,\rho_{\sum_{k=0}^{T-1} h_k})$ term in \eqref{eq:decomp}, we adopt techniques from \cite{RaginskyRT:COLT17,XuCZG:arxiv17} on analyzing the behavior of SGLD, and derive the following results for our SPOS algorithm:
\begin{theorem}\label{theonew:num_error}
	Under Assumptions~\ref{assnew:ass1}$\&$\ref{assnew:ass3new}, for a fixed step size $h_k=h$ ($\forall k$) that is small enough, the corresponding  ${W}_1(\mu_T,\rho_{Th})$ is bounded as: 
	{\small\begin{align}
		{W}_1(\mu_T,\rho_{Th})\leq c_4Md^{\frac{3}{2}}\beta^{-3}(c_5\beta^2B^{-1}+c_6h)^{\frac{1}{2}}T^{\frac{1}{2}}h^{\frac{1}{2}} \label{eq:Btprime}
		\end{align}}
	where $B$ is the minibatch size and $(c_4,c_5,c_6)$ are some positive constants independent of $(M,T,h)$.
\end{theorem}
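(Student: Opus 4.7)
The plan is to bound $W_1(\mu_T,\rho_{Th})$ by the standard route of (i) constructing a continuous-time interpolation of the discrete Euler iterates, (ii) coupling it synchronously with the true interacting SDE \eqref{eq:particle} through a common Brownian motion and common initial law $\nu_0$, and (iii) controlling the resulting mean-square error via Gronwall's lemma. Since $W_1\le W_2\le (\mathbb{E}\|\bar\theta-\theta\|^2)^{1/2}$, an $L^2$ coupling bound immediately gives a $W_1$ bound. Because the system is exchangeable, it suffices to bound the error for a single tagged particle $i$; the factor $M$ in the stated bound will come out of the kernel sum $\tfrac{1}{M}\sum_j$, where each of the $M$ summands contributes its own discretization error.

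First I would define the interpolation $\bar\theta_\tau^{(i)}$ on $[kh,(k+1)h]$ to be the solution of
\begin{align*}
\mathrm{d}\bar\theta_\tau^{(i)}=\Bigl(-\beta^{-1}G_k^{(i)}-\tfrac{1}{M}\sum_{j}K(\bar\theta_{kh}^{(i)}-\bar\theta_{kh}^{(j)})G_k^{(j)}+\tfrac{1}{M}\sum_{j}\nabla K(\bar\theta_{kh}^{(i)}-\bar\theta_{kh}^{(j)})\Bigr)\mathrm{d}\tau+\sqrt{2\beta^{-1}}\,\mathrm{d}\mathcal{W}_\tau^{(i)},
\end{align*}
so that $\bar\theta_{kh}^{(i)}\stackrel{d}{=}\theta_k^{(i)}$ and in particular $\bar\theta_{Th}^{(i)}\sim \mu_T$. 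I then couple this with \eqref{eq:particle} by sharing the Brownian motions $\mathcal W^{(i)}$ and the initial condition. Writing $e_\tau^{(i)}\triangleq\bar\theta_\tau^{(i)}-\thetab_\tau^{(i)}$ and applying It\^o's formula to $\|e_\tau^{(i)}\|^2$, the noise terms cancel and I obtain a differential inequality whose right-hand side splits into: (a) a Lipschitz term of the form $L_F\|e_\tau^{(i)}\|+\tfrac{1}{M}\sum_j(L_K\|F\|_\infty+\ldots)(\|e_\tau^{(i)}\|+\|e_\tau^{(j)}\|)$, (b) a \emph{discretization error} $\bar\theta_\tau^{(i)}-\bar\theta_{kh}^{(i)}$ inside the drift, and (c) a \emph{minibatch error} $G_k^{(i)}-F(\bar\theta_{kh}^{(i)})$, which is conditionally zero-mean with variance $O(\beta^{-2}B^{-1})$ times a moment of the gradient.

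The next step is uniform moment control: one shows, using Assumptions~1.1 and 3.1 and the bootstrap argument of \cite{RaginskyRT:COLT17}, that $\sup_{\tau\in[0,Th]}\mathbb{E}\|\bar\theta_\tau^{(i)}\|^2$ and $\sup_\tau\mathbb{E}\|\thetab_\tau^{(i)}\|^2$ are bounded by a constant depending polynomially on $d\beta^{-1}$ but independent of $T$ and $h$. Using Assumption~1.1, a one-step calculation yields $\mathbb{E}\|\bar\theta_\tau^{(i)}-\bar\theta_{kh}^{(i)}\|^2\le C(d\beta^{-1}h+h^2\mathbb{E}\|G_k^{(i)}\|^2)$ for $\tau\in[kh,(k+1)h]$, which together with the minibatch variance bound $\mathbb{E}\|G_k^{(i)}-F(\bar\theta_{kh}^{(i)})\|^2\le C\sigma^2/B$ produces a per-step increment in $\mathbb{E}\|e_\tau^{(i)}\|^2$ of order $h\cdot(c_5\beta^2B^{-1}+c_6h)\cdot d\beta^{-\alpha}$. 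Taking expectation over all $i$, exchangeability collapses the $\tfrac{1}{M}\sum_j$ into the same average, and a Gronwall argument over $[0,Th]$ multiplies this per-step error by the horizon, giving $\mathbb{E}\|e_{Th}^{(i)}\|^2\lesssim M d^3\beta^{-6}(c_5\beta^2B^{-1}+c_6h)\,Th$; taking square roots yields the claim.

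The main obstacle I expect is handling the interaction term cleanly. Unlike the SGLD analysis of \cite{RaginskyRT:COLT17,XuCZG:arxiv17}, the drift here is a McKean--Vlasov drift depending on all $M$ particles through $K$ and $\nabla K$, so the discretization error in particle $j$ feeds back into the drift of particle $i$. The factor $M$ in front of the bound comes precisely from the inability to average out those cross-particle errors without a more delicate propagation-of-chaos argument; one has to accept a crude triangle-inequality bound that sums the $M$ individual discretization errors. A secondary technical point is ensuring the moment bounds on $\bar\theta_\tau^{(i)}$ hold uniformly in $T$ despite the interaction and minibatch noise, which is why Assumption~3.1 (strong convexity of $U$) is invoked and why the step size $h$ must be taken small enough relative to $\beta,L_F,L_K$; this restriction is absorbed into the ``small enough'' hypothesis of the theorem.
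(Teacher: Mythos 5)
Your overall strategy (synchronous coupling of a continuous-time interpolation of the Euler iterates with the exact particle SDE \eqref{eq:particle}, followed by an $L^2$/Gronwall estimate and $W_1\le W_2$) is a genuinely different route from the paper's, but as written it cannot produce the stated bound: the gap is in the final Gronwall step. When you apply It\^o's formula to $\|e_\tau^{(i)}\|^2$ and control the drift difference by Lipschitz constants as in your term (a), you obtain a differential inequality of the form $\frac{\mathrm{d}}{\mathrm{d}\tau}\mathbb{E}\|e_\tau\|^2\le C_L\,\mathbb{E}\|e_\tau\|^2+\mathbb{E}\|\delta_\tau\|^2$ with $C_L>0$, and Gronwall then yields $\mathbb{E}\|e_{Th}\|^2\le e^{C_L Th}\int_0^{Th}\mathbb{E}\|\delta_s\|^2\,\mathrm{d}s$ --- an \emph{exponential} factor in $Th$ multiplying the accumulated one-step errors, not the bare factor $Th$ you assert. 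The only way to remove that exponential inside a coupling argument is to exploit contractivity of the full interacting drift, but then Gronwall gives a \emph{time-uniform} bound $\sup_\tau\mathbb{E}\|e_\tau\|^2\lesssim \sup_s\mathbb{E}\|\delta_s\|^2$ with no $Th$ factor at all. Neither regime reproduces the $T^{1/2}h^{1/2}$ scaling of \eqref{eq:Btprime}; that scaling is the signature of a relative-entropy computation on path space, where the KL divergence between the interpolated and exact path measures grows linearly in elapsed time by Girsanov, and is then converted to $W_1$ via a weighted transportation inequality at the price of a multiplicative constant.

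That is what the paper actually does: it concatenates the $M$ particles into a single $(Md)$-dimensional diffusion, verifies dissipativity and Lipschitz continuity of the lifted drift $F_{\Thetab}$ (Lemma~\ref{lem:lip}), bounds $D_{KL}(\mu_k^{\Theta}\|\rho_{kh}^{\Thetab})\le (A_0\beta B^{-1}+A_1h)kh$ following \cite{RaginskyRT:COLT17,XuCZG:arxiv17}, applies the Bolley--Villani inequality \cite{BolleyV:Ann05} together with Proposition 4.2 of \cite{CattiauxGM:PTRF08} to get $W_1(\mu_k^{\Theta},\rho_{kh}^{\Thetab})\le\mathcal{C}\sqrt{D_{KL}}$ with $\mathcal{C}\lesssim\beta^{-1}Md$, and finally descends to the single-particle marginals via $W_1(\mu_k,\rho_{kh})\le M^{-1/2}W_1(\mu_k^{\Theta},\rho_{kh}^{\Thetab})$ (Lemma~\ref{lem:W}). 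This also explains the prefactor $Md^{3/2}\beta^{-3}$, which your sketch does not recover: your own accounting, taken at face value, yields $M^{1/2}$ rather than $M$, and the linear $M$ in the theorem enters through the transportation constant $\mathcal{C}$ of the $(Md)$-dimensional measure, not through a triangle inequality over cross-particle discretization errors. To repair your argument you would need either to establish genuine contraction of the coupled dynamics (and accept a qualitatively different, time-uniform bound) or to replace the $L^2$ coupling with the path-space entropy argument.
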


Combining bounds from Theorems~\ref{theonew:chaosp} and \eqref{theonew:num_error}, given $T$, the optimal bound over $h$ can be seen to decrease at a rate of $O(M^{-1/2})$. Furthermore, the dependence of $T$ in the bound of Theorem~\ref{theonew:num_error} makes the bound relatively loose. Fortunately, the bound can be made independent of $T$ by considering a decreasing-stepsize SPOS algorithm, as stated in Theorem~\ref{theonew:adanum_error}.
\begin{theorem}\label{theonew:adanum_error}
	Under Assumptions~\ref{assnew:ass1}$\&$\ref{assnew:ass3new}, for a decreasing step size $h_{k}=h_0/(k+1)$,  and letting the minibatch size in each iteration $k$ be $B_k=B_0+[\log(k+1)]^{100/99}$ with $B_0$ the initial batch size, the corresponding ${W}_1(\mu_T,\rho_{\sum_{k=0}^{T-1} h_k})$ term is bounded, for some $\beta$ small enough, as:
	\begin{align}\label{eq:adaBtprime}
	{W}_1\left(\mu_T,\rho_{\sum\limits_{k=0}^{T-1} h_k}\right) \leq c_4 \beta^{-3}Md^{\frac{3}{2}}&\left(c_7h_0^3+c_8\beta^3h_0/B_0\right. \nonumber\\
	&\left.+c_9h_0^2\beta^2\right)^{1/2}~,
	\end{align}
	where $(c_4, c_7, c_8, c_9)$ are positive constants independent of $(M,T,h_0)$.
\end{theorem}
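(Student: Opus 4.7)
The plan is to adapt the proof of Theorem~\ref{theonew:num_error} by tracking how the decreasing stepsize $h_k=h_0/(k+1)$ and the increasing batch size $B_k=B_0+[\log(k+1)]^{100/99}$ propagate through the per-step error recursion. First I would set up a synchronous coupling between the exact particle SDE \eqref{eq:particle} and its Euler--Maruyama discretization \eqref{eq:particle_num}, where the Gaussian increment $\xi_k^{(i)}$ is generated from the same Brownian motion $\mathcal{W}^{(i)}$ as the continuous system over the interval $[\tau_k,\tau_{k+1}]$ with $\tau_k\triangleq \sum_{j<k}h_j$. Setting $e_k^2\triangleq \mathbb{E}\|\theta_k^{(i)}-\thetab_{\tau_k}^{(i)}\|^2$ and using exchangeability of the $M$ particles, the same decomposition into drift error, Taylor remainder, and minibatch variance as in Theorem~\ref{theonew:num_error} yields a recursion of the form
\begin{align*}
e_{k+1}^2 \le (1-2\lambda_1 h_k)e_k^2 + C_1 h_k^3 + C_2 h_k^2 + C_3\, h_k^2/B_k,
\end{align*}
where the contractive factor comes from Assumption~3.1 combined with the $K$-Lipschitz bounds (this is the same $\lambda_1$ as in Theorem~\ref{theonew:Bt2}), and the constants $C_1,C_2,C_3$ scale in $(\beta,d,M)$ exactly as in the fixed-stepsize proof.

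Iterating the recursion and bounding $\prod_{j=k+1}^{T-1}(1-2\lambda_1 h_j)\le 1$ produces
\begin{align*}
e_T^2 \le C_1 \sum_{k=0}^{T-1} h_k^3 + C_2 \sum_{k=0}^{T-1} h_k^2 + C_3 \sum_{k=0}^{T-1} h_k^2/B_k.
\end{align*}
In the fixed-stepsize regime each of these sums contributes a factor of $T$ (hence the $T^{1/2}$ in Theorem~\ref{theonew:num_error}), but here all three summations converge as $T\to\infty$: $\sum_{k\ge 0} h_k^2\le h_0^2\pi^2/6$, $\sum_{k\ge 0} h_k^3\le h_0^3\zeta(3)$, and an integral-test argument on $\int_2^\infty dx/(x\log^{100/99}x)$ shows that $\sum_{k\ge 0} h_k/B_k$ is bounded uniformly in $T$ by $h_0/B_0$ plus a finite constant depending only on $h_0$. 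Reorganizing the three bounded sums into the grouping $c_7 h_0^3 + c_8\beta^3 h_0/B_0+ c_9 h_0^2\beta^2$, absorbing the $\beta^{-3}d^{3/2}M$ prefactor from the fixed-stepsize estimates, and applying the Kantorovich--Rubinstein inequality to convert the $L^2$ coupling error to $W_1$ then gives \eqref{eq:adaBtprime}.

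The main technical obstacle is ensuring the contractive factor $(1-2\lambda_1 h_k)$ genuinely survives in the coupled $M$-particle recursion: the drift in \eqref{eq:particle} mixes all particles through $K(\thetab^{(i)}-\thetab^{(j)})F(\thetab^{(j)})$ and $\nabla K(\thetab^{(i)}-\thetab^{(j)})$, so strong convexity of $F$ alone is not enough and one must carefully track the cross-particle terms and verify that they do not cancel the contraction. This forces the ``$\beta$ small enough'' hypothesis needed to keep $\lambda_1=\beta^{-1}m_F-L_F-2L_K>0$ as in Theorem~\ref{theonew:Bt2}, and also constrains $h_0$ so that $1-2\lambda_1 h_k\in[0,1]$ holds for every $k$. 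A secondary delicate point is the choice of batch-size schedule: the exponent $100/99$ is the mildest super-logarithmic growth that still makes $\sum h_k/B_k$ summable under a harmonic stepsize, and verifying that the minibatch noise is genuinely of order $h_k^2/B_k$ (rather than $h_k/B_k$) inside the mean-squared recursion is what makes the stochastic-gradient contribution collapse to the finite $h_0/B_0$ term in the final bound.
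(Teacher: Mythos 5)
Your proposal takes a genuinely different route from the paper, and it contains gaps that would need real work to close. The paper does not use a synchronous coupling at all: it concatenates the $M$ particles into $\Thetab_\tau\in\mathbb{R}^{Md}$, builds a continuous-time interpolation of the discrete iterates, computes the path-space KL divergence between the interpolated process and the true SDE via a Girsanov-type identity ($D_{KL}=\frac{\beta}{4}\int_0^\tau\mathbb{E}\|F_{\Thetab}(\underline{\Thetab}(s))-\underline{G}_s(\underline{\Thetab}(s))\|^2\,\mathrm{d}s$), shows the resulting sums $\sum_k h_k^3$, $\sum_k h_k^2$ and $\sum_k h_k/B_k$ are finite under the stated schedules, and then converts KL to $W_1$ through a weighted transportation inequality (Bolley--Villani together with Proposition 4.2 of Cattiaux et al.), finally descending to the per-particle $W_1$ via Lemma~\ref{lem:W} at the cost of a $1/\sqrt{M}$ factor. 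Crucially, that route needs no contraction of any discrete recursion --- only Lipschitz bounds, uniform moment bounds (Theorem~\ref{app:use3}), and dissipativity to invoke the transportation inequality. Your route instead rests entirely on the contractive factor $(1-2\lambda_1 h_k)$ in the coupled $M$-particle recursion, which you correctly identify as the main obstacle but do not establish. This is not a cosmetic omission: without it, $\prod_k(1+Ch_k)\approx T^{Ch_0}$ under harmonic stepsizes and the $T$-independence of the bound is lost, and the interaction terms $K(\thetab^{(i)}-\thetab^{(j)})F(\thetab^{(j)})$ and $\nabla K(\thetab^{(i)}-\thetab^{(j)})$ are exactly what make one-sided monotonicity of the concatenated drift nontrivial (the paper proves a related statement, $\langle F_{\Thetab}(\Thetab_1)-F_{\Thetab}(\Thetab_2),\Thetab_1-\Thetab_2\rangle\geq(\beta^{-1}m_F-a_5)\|\Thetab_1-\Thetab_2\|^2$, inside the proof of Theorem~\ref{theonew:num_error}, but only to verify the hypotheses of the transportation inequality, not to run a contraction argument).

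Your treatment of the minibatch term is also internally inconsistent and points to a structural mismatch with the stated bound. In a mean-squared synchronous-coupling recursion the stochastic-gradient noise is conditionally centered, so its cross term with $e_k$ vanishes and it enters only at order $h_k^2/B_k$; then $\sum_k h_k^2/B_k\leq B_0^{-1}\sum_k h_k^2=O(h_0^2/B_0)$ converges for \emph{constant} batch size, which would make the $B_k=B_0+[\log(k+1)]^{100/99}$ schedule pointless and would produce $h_0^2/B_0$ rather than the theorem's $h_0/B_0$. The first-order $h_k/B_k$ dependence --- and hence the necessity of the barely-superlogarithmic batch growth to make $\sum_k h_k/B_k$ finite --- is a feature of the KL route, where Girsanov charges $\mathbb{E}\|F_{\Thetab}-G^{\Thetab}_{\mathcal{I}_k}\|^2$ once per unit of time rather than once per squared step. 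Your closing remark asserting the noise is ``genuinely of order $h_k^2/B_k$'' therefore has the logic backwards relative to the theorem you are trying to prove. Finally, the prefactor $\beta^{-3}Md^{3/2}$ cannot simply be ``absorbed from the fixed-stepsize estimates'': the fixed-stepsize proof of Theorem~\ref{theonew:num_error} is itself a KL-plus-transportation argument, so there is no coupling-based constant tracking to inherit, and your route would have to redo the $(\beta,d,M)$ bookkeeping from scratch.
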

Note $B_k$ increases at a very low speed, {\it e.g.}, only by 15 after $10^5$ iterations, thus it does not affect algorithm efficiency. Consequently, ${W}_1(\mu_T,\rho_{\sum_{k=0}^{T-1} h_k}) $ would approach zero when $h_0^{1/2}M\rightarrow 0$. 

\paragraph{The Overall Non-Asymptotic Bounds}
By directly combining results from Theorems~\ref{theonew:chaosp}--\ref{theonew:adanum_error}, one can easily bound the target ${W}_1(\mu_T,\nu_{\infty})$, stated in Theorem~\ref{theo:fixed} and Theorem~\ref{theo:decrease}.

\begin{figure*}[htb]
	\centering
	\includegraphics[width=0.4\linewidth]{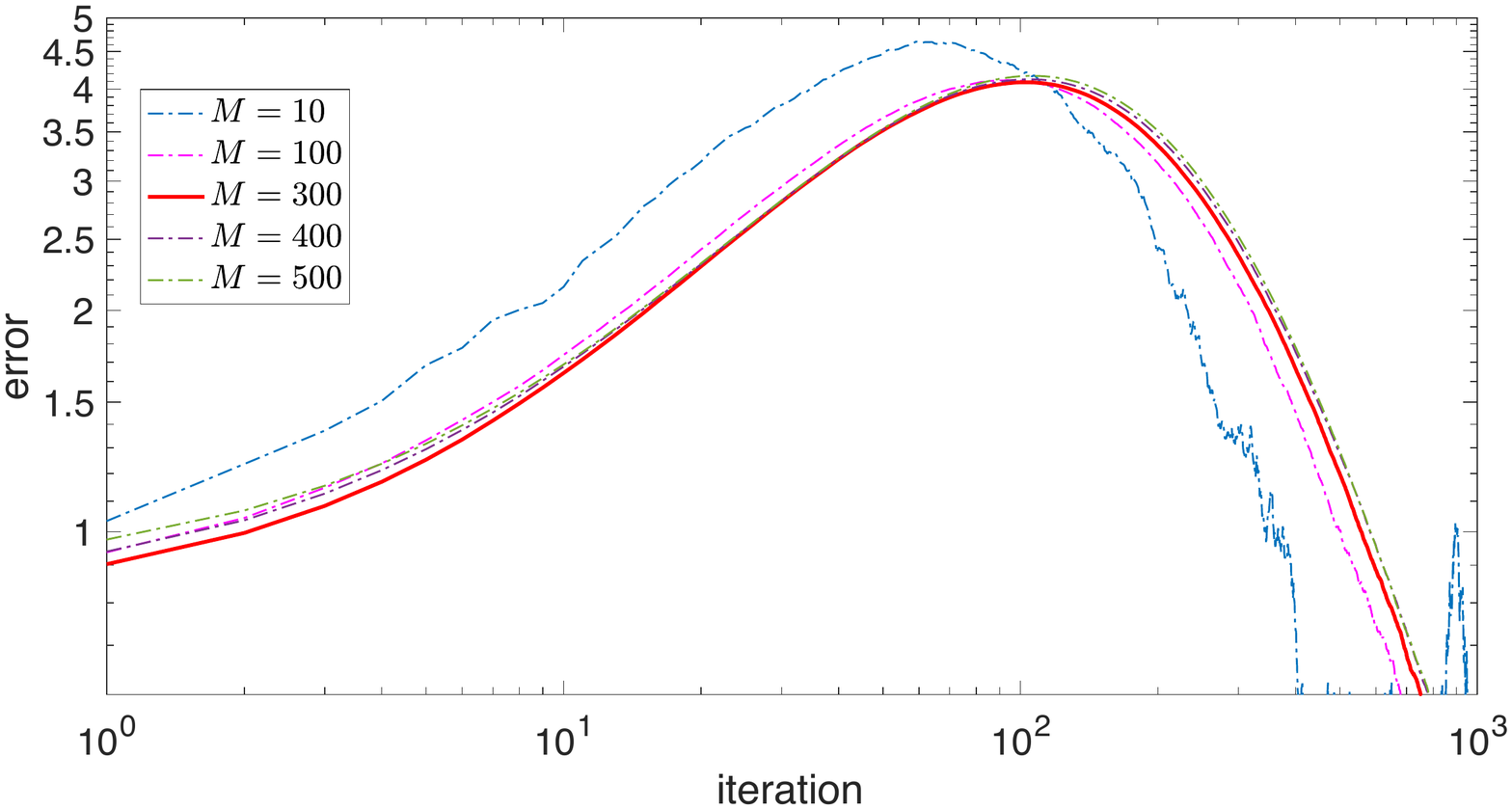}
	\hspace{0.3cm}
	\includegraphics[width=0.4\linewidth]{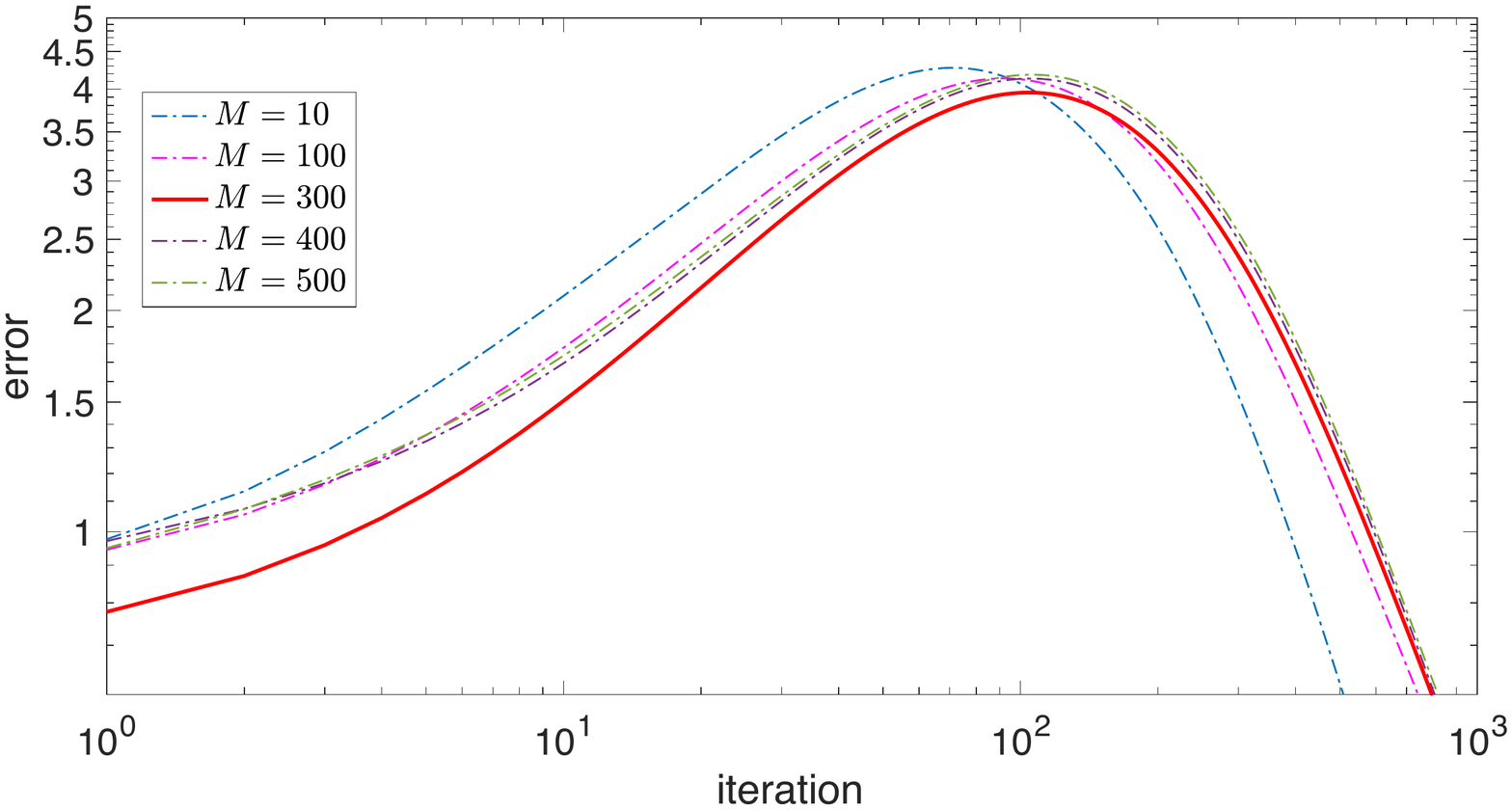}
	\vspace{-0.2cm}
	\caption{Estimation errors versus number of iterations for SPOS (left) and SVGD (right).}\label{fig:gau}
	\vspace{-0.5cm}
\end{figure*}

\begin{theorem}[Fixed Stepsize]\label{theo:fixed}
	Under Assumption~\ref{assnew:ass1}$\&$\ref{assnew:ass3new} and setting $h_k=h_0$, $B_k=B_0$, ${W}_1(\mu_T, \nu_{\infty})$ is bounded as: ${W}_1(\mu_T, \nu_{\infty})\leq $
	\begin{align}\label{eq:fixedbound}
	&\frac{c_1}{\sqrt{M}(\beta^{-1} - c_2)} + c_6Md^{\frac{3}{2}}\beta^{-3}(c_4\beta^2B^{-1}+c_5h)^{\frac{1}{2}}T^{\frac{1}{2}}h^{\frac{1}{2}} \nonumber\\
	&+ c_3\exp\left\{-2\left(\beta^{-1}m_F-L_F-2L_K\right)Th\right\},
	\end{align}
	where $(c_1, c_2, c_3, c_4, c_5, c_6, \beta)$ are positive constants such that $\frac{1}{\beta} > c_2$ and $\frac{m_F}{\beta} >L_F+2L_K$.
\end{theorem}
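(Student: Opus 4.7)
The plan is a direct assembly of the three preceding Theorems~\ref{theonew:chaosp}, \ref{theonew:Bt2} and \ref{theonew:num_error} via the triangle-style decomposition in \eqref{eq:decomp}. With fixed stepsize $h_k\equiv h_0=h$ we have $\sum_{k=0}^{T-1} h_k = Th$, so \eqref{eq:decomp} specializes to
\begin{align*}
W_1(\mu_T,\nu_\infty) \;\leq\; W_1(\mu_T,\rho_{Th}) \;+\; W_1(\rho_{Th},\nu_{Th}) \;+\; W_1(\nu_{Th},\nu_\infty).
\end{align*}

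First I would apply Theorem~\ref{theonew:chaosp} with $\tau=Th$ to the middle piece, giving $W_1(\rho_{Th},\nu_{Th})\leq c_1(\beta^{-1}-c_2)^{-1}M^{-1/2}$; the bound is meaningful precisely when $\beta^{-1}>c_2$, which is one of the two hypotheses on $\beta$ listed in Theorem~\ref{theo:fixed}. Next I would apply Theorem~\ref{theonew:num_error} (with the constant choices $B_k\equiv B_0$ and $h_k\equiv h$) directly to the first piece $W_1(\mu_T,\rho_{Th})$, which reproduces the middle summand of \eqref{eq:fixedbound} up to a purely cosmetic permutation of the constant labels $(c_4,c_5,c_6)$. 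Finally, I would apply Theorem~\ref{theonew:Bt2} at $\tau=Th$ to the third piece, obtaining $c_3\exp(-2\lambda_1 Th)$ with $\lambda_1=\beta^{-1}m_F-L_F-2L_K$; this yields an actually decaying bound only when $\lambda_1>0$, i.e.\ when $m_F/\beta > L_F+2L_K$, which is the second hypothesis in Theorem~\ref{theo:fixed}. Summing the three pieces recovers exactly \eqref{eq:fixedbound}.

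The two smallness constraints on $\beta$, namely $\beta^{-1}>c_2$ and $m_F/\beta > L_F+2L_K$, are compatible: any $\beta$ with $\beta<\min\{1/c_2,\,m_F/(L_F+2L_K)\}$ satisfies both, which is why the theorem merely asserts the existence of such $\beta$ and constants rather than constructing them. I would also remark that the constants $c_1,\dots,c_6$ delivered by the three sub-theorems are all asserted to be independent of $(M,T,h)$, so no circular dependence arises when they are combined.

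There is essentially no obstacle of real depth at this assembly step: the heavy lifting has already been done in the three sub-theorems (the granular-media estimate for Theorem~\ref{theonew:chaosp}, the synchronous-coupling/strong-convexity contraction for Theorem~\ref{theonew:Bt2}, and the SGLD-style discretization analysis in the style of \cite{RaginskyRT:COLT17,XuCZG:arxiv17} for Theorem~\ref{theonew:num_error}). If anything, the only subtlety worth flagging in the write-up is that the two conditions on $\beta^{-1}$ point in the same direction (both require $\beta$ small), so the existence of an admissible $\beta$ is immediate; and that the choice of $\tau=Th$ in Theorems~\ref{theonew:chaosp} and \ref{theonew:Bt2} is valid because those bounds hold uniformly in $\tau$.
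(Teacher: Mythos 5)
Your proposal is correct and coincides with the paper's own argument: the paper proves Theorem~\ref{theo:fixed} exactly by ``directly combining results from Theorems~\ref{theonew:chaosp}--\ref{theonew:adanum_error}'' through the decomposition \eqref{eq:decomp} with $\sum_{k=0}^{T-1}h_k=Th$, which is precisely your assembly. The constant relabeling and the compatibility remark about the two smallness conditions on $\beta$ are consistent with the statement as given.
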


\begin{theorem}[Decreasing Stepsize]\label{theo:decrease}
	Denote $\tilde{h}_T \triangleq \sum_{k=0}^{T-1}h_k$. Under Assumption ~\ref{assnew:ass1}$\&$\ref{assnew:ass3new}, if we set $h_k=h_0/(k+1)$ and $B_k=B_0+[\log(k+1)]^{100/99}$, ${W}_1(\mu_T, \nu_{\infty})$ is bounded as:
	\begin{align}\label{eq:decreasebound}
	{W}_1&(\mu_T, \nu_{\infty})\leq \frac{c_1}{\sqrt{M}(\beta^{-1} - c_2)} \\
	&+ c_3\exp\{-2\left(\beta^{-1}m_F-L_F-2L_K\right)\tilde{h}_T\} \nonumber\\
	&+ c_{10} \beta^{-3}Md^{\frac{3}{2}}(c_9h_0^3+c_7\beta^3h_0/B_0+c_8h_0^2\beta^2)^{\frac{1}{2}}~. \nonumber
	\end{align}\par \vspace{-0.3cm}
	where $(c_1, c_2, c_3, c_7, c_7, c_8,c_9,c_10, \beta)$ are positive constants such that $\frac{1}{\beta} > c_2$ and $\frac{m_F}{\beta} >L_F+2L_K$.
\end{theorem}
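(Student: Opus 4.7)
The plan is to assemble Theorem~\ref{theo:decrease} as a direct corollary of the three one-sided bounds already proved: Theorem~\ref{theonew:chaosp} (granular-media/particle-approximation bound), Theorem~\ref{theonew:Bt2} (ergodicity of the exact PDE solution), and Theorem~\ref{theonew:adanum_error} (numerical discretization with a decreasing stepsize schedule). All three have been established under the same Assumptions~\ref{assnew:ass1} and \ref{assnew:ass3new} used in the theorem's hypothesis, so no additional machinery is needed beyond a careful triangle inequality.

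First, I would invoke the decomposition \eqref{eq:decomp}, namely
\begin{align*}
W_1(\mu_T,\nu_{\infty}) &\leq W_1\!\left(\mu_T,\rho_{\tilde h_T}\right) + W_1\!\left(\rho_{\tilde h_T},\nu_{\tilde h_T}\right) + W_1\!\left(\nu_{\tilde h_T},\nu_{\infty}\right),
\end{align*}
where $\tilde h_T = \sum_{k=0}^{T-1} h_k$. The decomposition is valid since $W_1$ is a metric on the relevant space of probability measures; the intermediate measures $\rho_{\tilde h_T}$ and $\nu_{\tilde h_T}$ are the exact marginal of the interacting particle SDE \eqref{eq:particle} and the exact solution of the PDE \eqref{eq:unifiednew}, respectively, both at continuous time $\tilde h_T$, which are exactly the objects controlled by the other theorems.

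Next, I would bound each of the three summands. For the middle term, Theorem~\ref{theonew:chaosp} gives $W_1(\rho_{\tau},\nu_{\tau}) \leq c_1(\beta^{-1}-c_2)^{-1} M^{-1/2}$ uniformly in $\tau$, so the bound holds in particular at $\tau=\tilde h_T$; this is where the condition $\beta^{-1}>c_2$ must be imposed. For the last term, Theorem~\ref{theonew:Bt2} yields $W_1(\nu_{\tilde h_T},\nu_{\infty}) \leq c_3 \exp\{-2(\beta^{-1}m_F - L_F - 2L_K)\tilde h_T\}$, which requires $m_F/\beta > L_F+2L_K$ to guarantee the exponent is negative. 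For the first term, Theorem~\ref{theonew:adanum_error} with the schedule $h_k = h_0/(k+1)$ and $B_k = B_0 + [\log(k+1)]^{100/99}$ gives the $T$-independent bound $c_{10}\beta^{-3} M d^{3/2}(c_9 h_0^3 + c_7 \beta^3 h_0/B_0 + c_8 h_0^2 \beta^2)^{1/2}$. Summing the three contributions and collecting constants yields \eqref{eq:decreasebound}.

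Because this is assembly-only, there is no genuine obstacle; the single thing to get right is bookkeeping of the constants $(c_1, c_2, c_3, c_7, c_8, c_9, c_{10})$ so that they are inherited verbatim from the earlier theorems and are indeed independent of $(M, T, h_0)$, and verifying that the two conditions $\beta^{-1} > c_2$ and $m_F/\beta > L_F+2L_K$ are simultaneously realizable (e.g., take $\beta$ small enough so both hold; this is possible because $c_2$ is absolute and $m_F, L_F, L_K$ are absolute). The mildly subtle point is noting that the bound in Theorem~\ref{theonew:chaosp} is \emph{uniform in $\tau$}, which is essential because the intermediate time $\tilde h_T$ grows like $\log T$ under the harmonic stepsize schedule; without uniformity, one could not push the middle summand past a finite horizon.
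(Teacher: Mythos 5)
Your proposal matches the paper's own argument: the paper explicitly states that Theorems~\ref{theo:fixed} and \ref{theo:decrease} follow by directly combining Theorems~\ref{theonew:chaosp}--\ref{theonew:adanum_error} through the decomposition \eqref{eq:decomp}, which is exactly the triangle-inequality assembly you carry out. Your additional remarks on the uniformity in $\tau$ of Theorem~\ref{theonew:chaosp} and the simultaneous realizability of the two conditions on $\beta$ are correct and consistent with the paper's (omitted) reasoning.
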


\begin{remark}\label{remark:4}
	Four implications are highlighted from the theorems: $\RN{1})$ $M$ and $T$ play a similar role when bounding the numerical errors (the third term in the RHS of \eqref{eq:fixedbound}). The bound increases with increasing $M$ and $T$, which seems unavoidable and is consistent with the latest result for SGLD, whose bound is proved to increase w.r.t.\! $T$ \cite{RaginskyRT:COLT17}. $\RN{2})$ The increasing bound w.r.t.\! $T$ can be compromised by using decreasing stepsizes shown in Theorem~\ref{theo:decrease}. Unfortunately, this does not seem to eliminate the effect of $M$. To accommodate this, one should either use a smaller $h$ or a larger $\beta$. We believe future work is needed to improve the bound w.r.t.\! $M$. However, this is nontrivial as recent theory shows coordinate-wise SGLD scales linearly w.r.t.\! parameter dimension \cite{ShenBG:arxiv19} (corresponding to scaling linearly w.r.t.\! $M$ in our case, consistent with our theory). $\RN{3})$ When $T\times M$ (proportional to computation cost) is not too large, the error is bounded above by $O(M^{-1/2} + M)$, indicating the existence of an optimal $M$, {\it i.e.}, one should not choose arbitrary many particles as it would induce larger numerical-error bounds. This is somewhat surprising and counter-intuitive compared with the asymptotic theory \cite{liu2017stein_flow,LuLN:arxiv18}. However, we will demonstrate this is true with experiments on synthesized data, where the phenomenon is also observed in SVGD. $\RN{4})$ When $T\times M$ is large enough, the $O(M)$ term dominates, indicating an increasing error w.r.t.\! $M$. This is verified by the experiments in Section~\ref{app:extgau} (Figure~\ref{fig:err_vs_M}), although the bound might not be strictly tight.
\end{remark}

\vspace{-0.2cm}
\section{Experiments}\label{sec:exp}
\vspace{-0.2cm}
We use simple experiments with synthetic data to demonstrate the non-asymptotic convergence behaviors of SPOS indicated by our theory. In addition to the results presented below, more experiments and real applications are provided in Section~\ref{app:exp} of the SM, considering comparisons of SPOS with SVGD and SGLD on Bayesian learning of deep neural network and Bayesian exploration in deep reinforcement learning (RL).

\subsection{Sampling a Gaussian distribution}
We apply the algorithms to sample from a simple 1-D Gaussian distribution with mean 2 and variance 1. 
Since the 1-Wasserstein distance is infeasible to calculate, we follow \cite{VollmerZT:arxiv15,ChenDC:NIPS15} and measure the convergence using $\text{err} \triangleq |\mathbb{E}_{\theta \sim \mu_T}[f(\theta)] - \mathbb{E}_{\theta \sim \mathcal{N}(2, 1)}[f(\theta)]|$ with a test function
$f(\theta) \triangleq \theta^2$.
We fix $T = 1000$ and $h = 0.03$. Particles are initialized as being drawn from $\mathcal{N}(0,1)$. Figure~\ref{fig:gau} plots the estimation errors versus the number of iterations for different particles $M$. For both SPOS and SVGD, it is observed that when $T$ is not too large ($\approx 100$), the errors increase w.r.t.\! $T$, and the optimal $M$ is around 300, consistent with our theory. When $T$ is large enough, the errors decrease w.r.t.\! $T$, and larger $M$ induces larger errors. This is also consistent with our theory, because the last term in Theorem~\ref{theo:fixed} dominates when $T$ is large, leading to increasing errors with larger $M$. The only concern seems to be the tightness of the bound, which might be due to technical difficulty as current techniques for SGLD also indicate an increasing bound w.r.t.\! $T$ \cite{RaginskyRT:COLT17}. The large optimal $M$ also suggests using a relatively large $M$ should not be a problem in real applications.

\paragraph{Impact of particle number $M$}\label{app:extgau}
In addition to the above result to demonstrate the existence of an optimal $M$, we further verify that when $T\times M$ is large enough, for a fixed $T$, we observe the errors increase with increasing $M$. We use the same setting as above. Figure~\ref{fig:err_vs_M} plots the curves of errors versus number of particles. We see that errors indeed increase w.r.t.\! particle numbers, consistent with our theory. Although the rate of the bound from our theory might not match exactly with the experimental results, we believe this is still significant as the problem has never been discovered before, which is somewhat counter-intuitive. On the other hand, the results are also reasonable, as more particles would need much more updates to fit a distribution well. The results indicate that to get a smaller error, one should increase number of iterations faster than increasing the number of particles.

\begin{figure}[t!]
	\centering
	\hspace{-0.0cm}
	\includegraphics[width=1.0\linewidth]{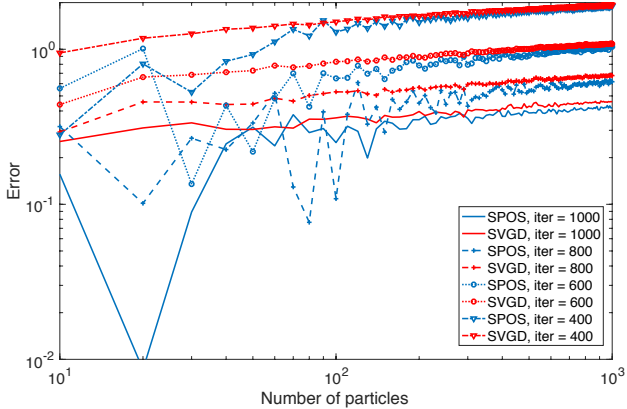}\\
	\vspace{-0.2cm}
	\caption{Errors versus Number of particles. Errors increase with increasing particle numbers.}\label{fig:err_vs_M}
	\vspace{-0.3cm}
\end{figure}

\vspace{-0.2cm}
	\begin{table}[htp]
	\vspace{-0.3cm}
	\centering
	\caption{\small Averaged RMSE with standard deviations.}
	\label{tab:reg_1}
	\vskip 0.02in
	\scalebox{0.9}{
		\begin{adjustbox}{width=\linewidth,tabular=c|ccc,center}
			\hline
			& \multicolumn{3}{c}{Test RMSE} \\
			Dataset &SGLD&SVGD&SPOS \\
			\hline
			Boston & 3.114 {\scriptsize $\pm$ 0.144} & 2.961 {\scriptsize $\pm$ 0.109} & \textbf{ 2.829} {\scriptsize $\pm$ \textbf{0.126}}  \\
			Concrete & 5.508 {\scriptsize $\pm$ 0.275} & 5.157 {\scriptsize $\pm$ 0.082} & \textbf{ 5.071} {\scriptsize $\pm$ \textbf{0.150}} \\
			Energy& 0.842 {\scriptsize $\pm$ 0.060} & 1.291 {\scriptsize $\pm$ 0.029} & \textbf{ 0.752} {\scriptsize $\pm$ \textbf{0.029}} 
			\\
			Kin8nm & 0.080 {\scriptsize $\pm$ 0.001}   & 0.090 {\scriptsize $\pm$ 0.001} & \textbf{ 0.079} {\scriptsize $\pm$ \textbf{0.001}}\\
			Naval& 0.004 {\scriptsize $\pm$ 0.000} & 0.004 {\scriptsize $\pm$ 0.000} & \textbf{ 0.004} {\scriptsize $\pm$ \textbf{0.000}} \\
			CCPP& 4.059 {\scriptsize $\pm$ 0.080} & 4.127 {\scriptsize $\pm$ 0.027}& \textbf{ 3.939} {\scriptsize $\pm$ \textbf{0.049}} \\
			Wine &0.632 {\scriptsize $\pm$ 0.022} & 0.604 {\scriptsize $\pm$ 0.007} & \textbf{0.598} {\scriptsize $\pm$ \textbf{0.014}} \\
			Yacht& 1.183 {\scriptsize $\pm$ 0.263} & 1.597 {\scriptsize $\pm$ 0.099} & \textbf{ 0.840} {\scriptsize $\pm$ \textbf{0.087}}  \\
			Protein& 4.281 {\scriptsize $\pm$ 0.011} & 4.392 {\scriptsize $\pm$ 0.015} & \textbf{ 4.254} {\scriptsize $\pm$ \textbf{0.005}} \\
			YearPredict& 8.707 {\scriptsize $\pm$ NA} & 8.684 {\scriptsize $\pm$ NA} & \textbf{ 8.681} {\scriptsize $\pm$ NA} \\
			\hline
		\end{adjustbox}
	}
	\vspace{-0.3cm}
	\end{table}

\subsection{BNNs for regression}
\vspace{-0.2cm}
We next conduct experiments for Bayesian learning of deep neural networks (DNNs) to empirically compare SGLD, SVGD and SPOS for posterior sampling of BNN weights with standard Gaussian priors. 
We use a RBF kernel with the bandwidth set to the medium of particles. Following \cite{li2015stochastic}, 10 UCI public datasets are considered, and we consider 100 hidden units for 2 large datasets (Protein and YearPredict), and 50 hidden units for the other 8 small datasets. We use the same setting as \cite{ZhangLCC:AISTATS18}. The datasets are randomly split into 90\% training and 10\% testing. For a fair comparison, we use the same split of data (train, val and test) for all methods. 
We report the root mean squared error (RMSE) in Table~\ref{tab:reg_1}. The proposed SPOS outperforms both SVGD and SGLD. More detailed settings and results are given in Section~\ref{app:exp} of the SM.

\vspace{-0.1cm}
\subsection{Bayesian exploration in deep RL}
\vspace{-0.1cm}
It is well accepted that RL performance directly measures how well the uncertainty is learned, due to the need for exploration. We apply SPOS for RL, and compare it with SVPG, a SVGD version of the policy gradient method \cite{liu2017stein}. 
Following \cite{liu2017stein,ZhangCLC:ICML}, we define policies with Bayesian DNNs. This naturally introduces uncertainty into action selection, rendering Bayesian explorations to make policy learning more effective.

We follow the same setting as in \cite{liu2017stein}, except that we use simpler policy-network architectures, as in \cite{houthooft2016vime}. We conduct experiments on three classical continuous control tasks are considered: Cartpole Swing-Up, Double Pendulum, and Cartpole. Detailed experimental settings are given in the SM. Figure~\ref{fig:bayesian_exp1} plots the cumulative rewards over time on the Cartpole environment, which clearly shows the advantage of our method over SVPG. More results are provided in the SM.

\begin{figure}[H]
    \vspace{-0.2cm}
	\centering
    \includegraphics[width=0.49\linewidth]{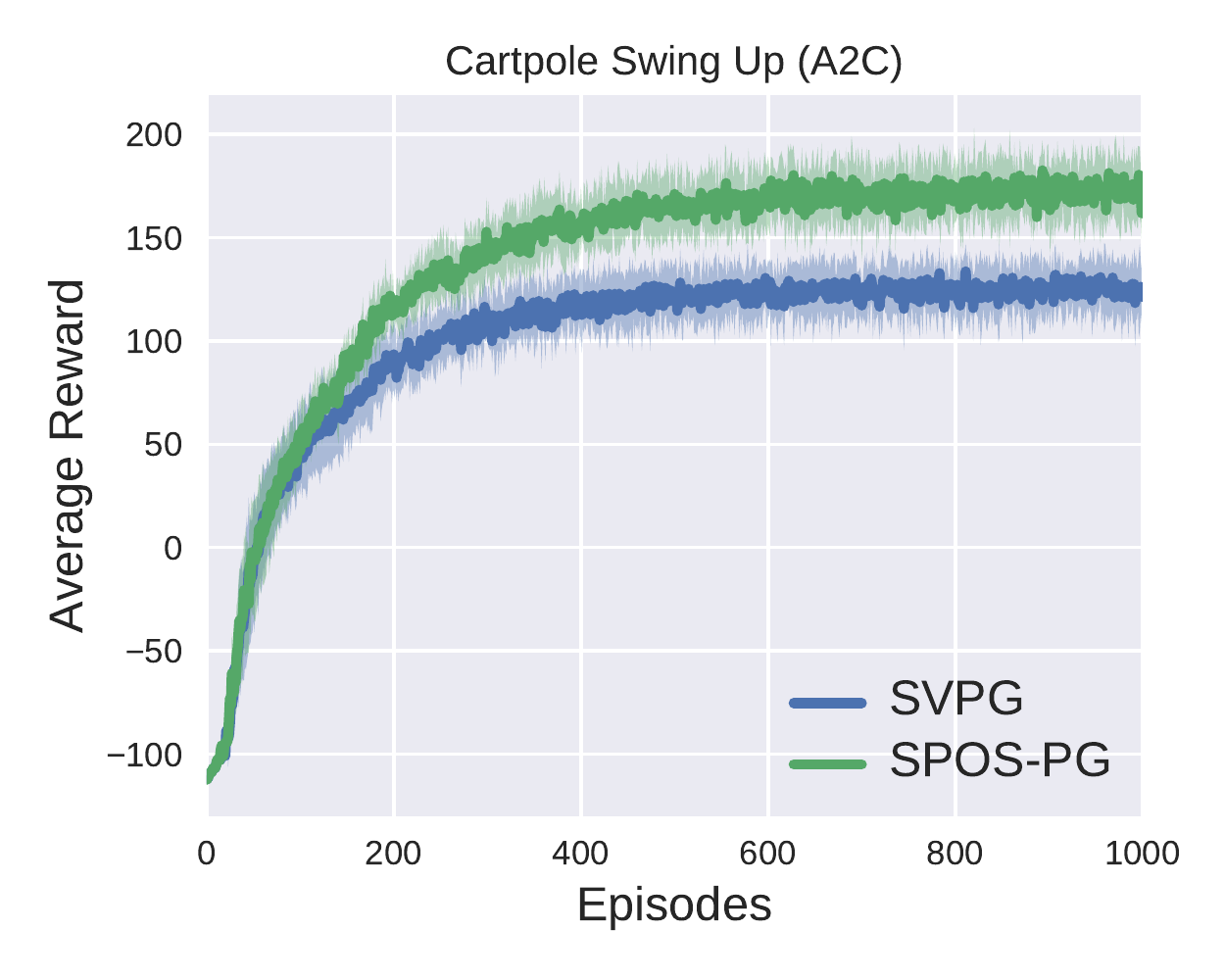}
	\includegraphics[width=0.49\linewidth]{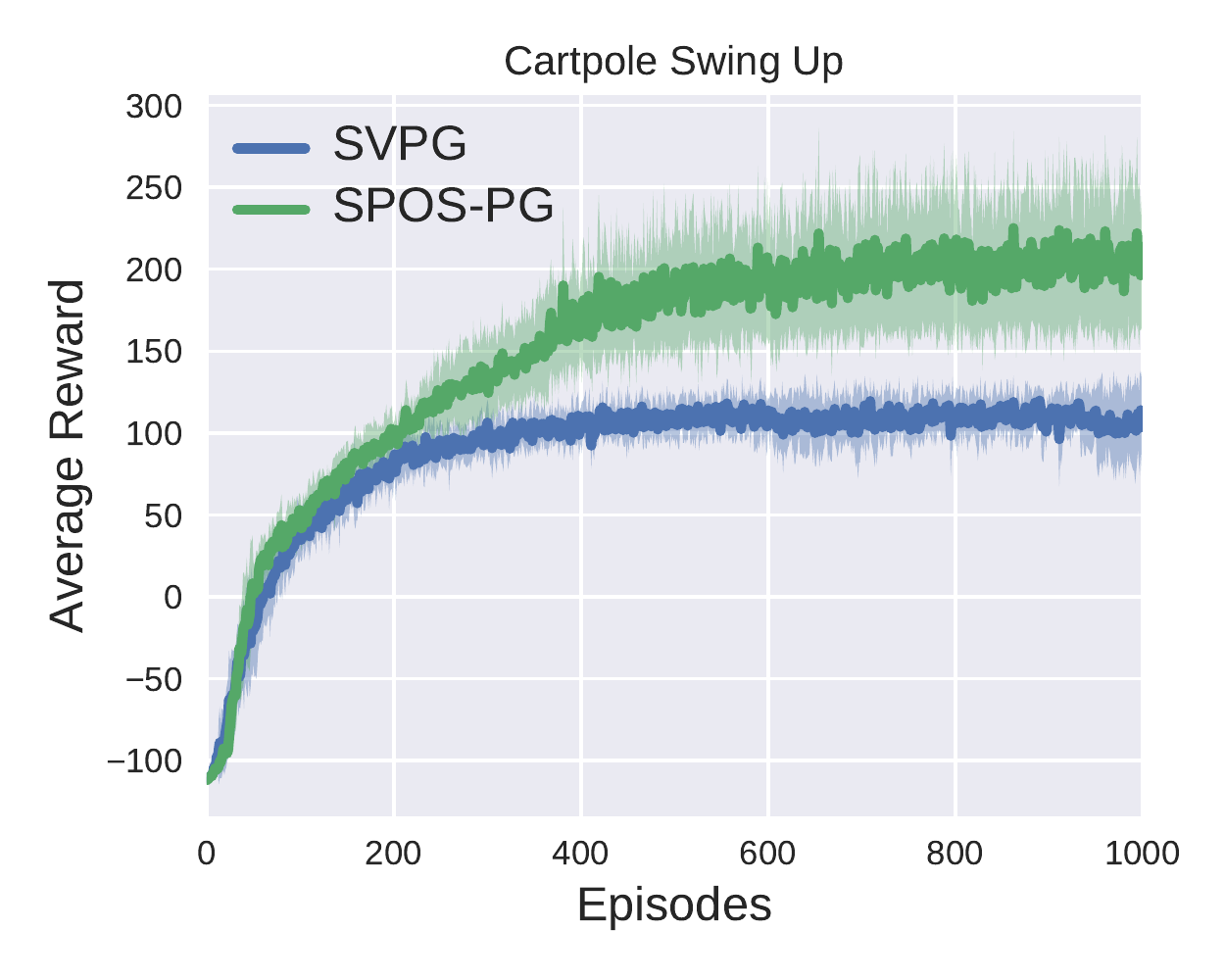}
	\vspace{-0.7cm}
	\caption{Policy learning with Bayesian exploration in policy-gradient methods with SVPG and SPOS-PG.}\label{fig:bayesian_exp1}
	\vspace{-0.2cm}
\end{figure}

\vspace{-0.3cm}
\section{Conclusion}
\vspace{-0.1cm}
We have proposed an approach for particle-optimization-based sampling that overcomes a potential pitfall of standard SVGD. Notably, for the first time, we develop non-asymptotic convergence theory for the proposed SPOS framework, a missing yet important theoretical result since the development of SVGD. Within our theoretical framework, a pitfall of SVGD, which has been studied empirically \cite{wang2017stein,ZhuoLSZCZ:arxiv17}, is formally analyzed. Our theory is practically significant as it provides nonasymptotic theoretical guarantees for the recently proposed particle-optimization-based algorithms such as the SVGD, whose advantages have also been extensively examined in real applications. Surprisingly, our theory indicates the existence of an optimal particle size, {\it i.e.}, increasing particle size does not necessarily guarantee performance improvement. This is also observed for SVGD in an experiment with synthesized data. There are a number of interesting future works. For example, one might explore more recently developed techniques such as \cite{ChengCABJ:arxiv18,LiuW:NIPS18} to improve the convergence bound; one can also adopt the SPOS framework for non-convex optimization like where SG-MCMC is used, and develop corresponding theory to study the convergence properties of the algorithm to the global optimum.

\clearpage
%

\bibliography{reference}
\bibliographystyle{apalike}

\newpage 
\onecolumn
\appendix

\section{Density Function of the Multi-Mode Distribution in Section~\ref{sec:spos}}\label{app:multimode}
The negative log-density function of the multi-mode distribution in Section~\ref{sec:spos} is defined as:
\begin{align*}
U(\thetab) \triangleq e^{\frac{3}{4}\thetab^2 - \frac{3}{2}\sum_{i=1}^{10}c_i \sin\left(\frac{1}{4}\pi i (\thetab + 4)\right)}~,
\end{align*}
where $c = (-0.47,-0.83,-0.71,-0.02,0.24,0.01,0.27,-0.37,0.87,-0.37)$ is a vector, $c_i$ is the $i$-th element of $c$.

\section{Gronwall Lemma}

The Gronwall Lemma plays an important role in parts of our proofs, which is stated in Lemma~\ref{lem:Gronwall}.

\begin{lemma}[Gronwall Lemma]\label{lem:Gronwall}
	Let $\mathcal{I}$ denotes an interval of the form $[a,+\infty)$ for some $a\in\mathbb{R}$. If $v(\tau)$, defined on $\mathcal{I}$, is differentiable in $\mathcal{I}$ and satisfies the following inequality:
	\begin{align*}
	v^{\prime}(\tau)\leq \beta(\tau)v(\tau)~,
	\end{align*}
	where $\beta(\tau)$ is a real-value continuous function defined on $\mathcal{I}$.
	Then $v(\tau)$ can be bounded as:
	\begin{align*}
	v(\tau)\leq v(a)\exp\left(\int_{a}^{\tau}\beta(s)\mathrm{d}s\right)
	\end{align*}
\end{lemma}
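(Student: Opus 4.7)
The plan is to use the classical integrating-factor argument. Define the auxiliary function $w(\tau) \triangleq v(\tau)\exp\bigl(-\int_a^\tau \beta(s)\,\mathrm{d}s\bigr)$ on $\mathcal{I}$. Since $\beta$ is continuous on $\mathcal{I}$, the integral $B(\tau) \triangleq \int_a^\tau \beta(s)\,\mathrm{d}s$ is well-defined and differentiable with $B'(\tau)=\beta(\tau)$ by the fundamental theorem of calculus. Combined with the assumed differentiability of $v$, this makes $w$ differentiable on $\mathcal{I}$.

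The next step is to differentiate $w$ using the product rule, yielding
\[
w'(\tau) = v'(\tau)\exp(-B(\tau)) - v(\tau)\beta(\tau)\exp(-B(\tau)) = \bigl(v'(\tau) - \beta(\tau)v(\tau)\bigr)\exp(-B(\tau)).
\]
By the hypothesis $v'(\tau)\le \beta(\tau)v(\tau)$ and the fact that the exponential factor is strictly positive, we conclude $w'(\tau)\le 0$ on $\mathcal{I}$. Hence $w$ is nonincreasing, so $w(\tau)\le w(a) = v(a)\cdot 1 = v(a)$ for every $\tau\ge a$.

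Finally, multiplying the inequality $w(\tau)\le v(a)$ by $\exp(B(\tau))>0$ and substituting back the definition of $w$ gives exactly $v(\tau)\le v(a)\exp\bigl(\int_a^\tau \beta(s)\,\mathrm{d}s\bigr)$, as desired. There is no real obstacle in this proof: the only subtle point is ensuring $B(\tau)$ is well-defined and differentiable on $\mathcal{I}$, which follows immediately from the continuity assumption on $\beta$, so monotonicity of $w$ transfers cleanly to the claimed exponential bound on $v$.
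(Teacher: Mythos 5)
Your proof is correct and complete: the integrating-factor argument (differentiating $w(\tau)=v(\tau)\exp(-\int_a^\tau\beta(s)\,\mathrm{d}s)$, observing $w'\le 0$, and unwinding) is the standard proof of this form of Gronwall's inequality, and it works without any sign assumptions on $v$ or $\beta$. The paper itself states this lemma as a classical fact and gives no proof, so there is nothing to compare against; your argument would serve as a valid self-contained justification.
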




\section{Proof of Theorem~\ref{theo:particle_app}}

Proofs of Theorem~\ref{rem:rem2_} and \ref{theonew:chaospnew} rely on techniques in the proofs for Section~\ref{sec:nonasym_cnovex}. As a result, we defer the proofs of Theorem~\ref{rem:rem2_} and \ref{theonew:chaospnew} to the later part.

To prove Theorem~\ref{theo:particle_app}, we rely on the definition of generalized derivative in Definition~\ref{def:gd}.

\begin{definition}[Generalized Derivative]\label{def:gd}
	Let $g$ and $\phi$ be locally integrable functions on an open set $\Omega \subset \mathbb{R}^d$, that is, Lebesgue integrable on any closed bounded set $\mathcal{F} \subset \omega$. Then $\phi$ is the generalized derivative of $g$ with respect to $\thetab_j$ on $\Omega$, written as $\phi = \partial_{\thetab_j} g$, if for any infinitely-differentiable function $u$ with compact support in $\Omega$, we have
	\begin{align*}
	\int_{\Omega}g(\thetab)\partial_{\thetab_j} u(\thetab) \mathrm{d}\thetab = -\int_{\Omega}\phi(\thetab)u(\thetab)\mathrm{d}\thetab~.
	\end{align*}
\end{definition}
where $\thetab=(\thetab_1,\thetab_2,...,\thetab_{d})$ on $\Omega \subset \mathbb{R}^d$. 

\begin{proof}
	The proof relies on further expansions on the definition of generalized derivative on specific functions. Specifically, let the function $g$ in Definition~\ref{def:gd} be in a form of $g \triangleq Gf$ for the product of two functions $G$ and $f$ (specified below).
	The generalized derivative of $(Gf)$ with respect to $\thetab_j$, written as $\partial_{\thetab_j}(Gf)$, satisfies 
	\begin{align}\label{eq:generalizedD}
	\int \partial_{\thetab_j}(Gf) \ u(\thetab)\mathrm{d}\thetab = -\int Gf \ \partial_{\thetab_j} u(\thetab)\mathrm{d}\thetab
	\end{align}
	for all differentiable function $u(\cdot)$.
	
	In Theorem~\ref{theo:particle_app}, we want to prove a particle representation of the following PDE:
	\begin{align*}
	&\partial_{\tau}\nu_{\tau} = F_1 = \nabla_{\thetab}\cdot \left(\nu_{\tau}F(\thetab) + (\mathcal{K}*\nu_{\tau})\nu_{\tau}\right) \triangleq -\sum_j^d\partial_{\thetab_j}(Gf)~,
	\end{align*}
	where we set $f(\thetab) = \nu_{\tau}(\thetab)$ and $G(\thetab)\triangleq -F(\thetab) - (\mathcal{K}*\nu_{\tau})(\thetab)$. Taking integration on both sides for any continuous function $u(\thetab)$, we have
	\begin{align}\label{eq:ff1}
	&\int \partial_{\tau}\nu_{\tau} u(\thetab) \mathrm{d}\thetab = -\int \sum_j^d\partial_{\thetab_j}(Gf)\ u(\thetab) \mathrm{d}\thetab \nonumber\\
	&\Rightarrow \int \partial_{\tau}f \ u(\thetab)\mathrm{d}\thetab = -\sum_j^d \int \partial_{\thetab_j}(Gf) \ u(\thetab)\mathrm{d}\thetab
	\end{align}
	
	By applying \eqref{eq:generalizedD} in \eqref{eq:ff1}, we have
	\begin{align*}
	&\int \partial_{\tau}f \ u(\thetab)\mathrm{d}\thetab = - \sum_j^d \int \partial_{\thetab_j}(Gf) \ u(\thetab)\mathrm{d}\thetab\\
	&= \sum_j^d \int Gf  \partial_{\thetab_j}u(\thetab))\mathrm{d}\thetab~.
	\end{align*}
	Since $f = \nu_{\tau}(\thetab)$ and we can set $u(\thetab) = \thetab$, we will derive
	\begin{align}\label{eq:ff2}
	&\int \partial_{\tau}\nu_{\tau}(\thetab)u(\thetab) \mathrm{d}\thetab= \sum_j^d \int G(\thetab) \nu_{\tau}(\thetab) \ \partial_{\thetab_j}u(\thetab)\mathrm{d}\thetab \nonumber\\
	\Rightarrow& \frac{\mathrm{d}}{\mathrm{d}\tau}\int \nu_{\tau}(\thetab)\thetab  \mathrm{d}\thetab = \int G (\thetab)\ \nu_{\tau}(\thetab)\mathrm{d}\thetab  \nonumber\\
	\Rightarrow& \frac{\mathrm{d}}{\mathrm{d}\tau}\mathbb{E}_{\nu_{\tau}}[\thetab] = \mathbb{E}_{\nu_{\tau}}[G(\thetab)]~.
	\end{align}
	In particle approximation, we have $\nu_{\tau}(\thetab) \approx \frac{1}{M}\sum_{i=1}^M\delta_{(\thetab_{\tau}^{(i)})}(\thetab)$. For each particle,according to the definition of $\mathcal{K}*\nu_{\tau}$ in Sec \ref{POS_sub}, \eqref{eq:ff2} reduces to the following equation:
	\begin{align*}
	\mathrm{d}\thetab_{\tau}^{(i)} &= G(\thetab_{\tau}^{(i)}) \mathrm{d}\tau \nonumber \\
	&=-\beta^{-1}F(\thetab_{\tau}^{(i)})\mathrm{d}\tau - \frac{1}{M}\sum_{j=1}^{M}K(\thetab_{\tau}^{(i)} - \thetab_{\tau}^{(j)})F(\thetab_{\tau}^{(j)})\mathrm{d}\tau +\frac{1}{M}\sum_{j=1}^{M}\nabla K(\thetab_{\tau}^{(i)} - \thetab_{\tau}^{(j)})\mathrm{d}\tau,
	\end{align*}
	which is the update equation in Theorem~\ref{theo:particle_app}. This completes the proof.
\end{proof}

\section{Proof of Theorem~\ref{theonew:chaosp}}
Note that one challenge in our analysis compared with the analysis for diffusion-based methods, such as those for SG-MCMC \cite{VollmerZT:arxiv15,ChenDC:NIPS15}, is how to bound the gap between the original nonlinear PDE \eqref{eq:unified} and the reduced SDE \eqref{eq:particle}. Following the analysis of granular media equations such as \cite{Malrieu:AAP03,CattiauxGM:PTRF08,2018arXiv180511387D}, we introduce a intermediate SDE in-between \eqref{eq:unifiednew} and \eqref{eq:particle}, defined as:
\begin{align}\label{eq:nonlinear_sde}
\begin{cases}
\mathrm{d}\bar{\thetab}_{\tau}=&-\beta^{-1}F(\bar{\thetab}_{\tau})\mathrm{d}\tau -\mathbb{E}_{Y \sim\nu_{\tau}}K(\bar{\thetab}_{\tau}-Y)F(Y)\mathrm{d}\tau +\nabla K*\nu_{\tau}(\bar{\thetab}_{\tau})\mathrm{d}\tau
+ \sqrt{2\beta^{-1}}\mathrm{d}\bar{\mathcal{W}}_{\tau}~\\
\mathcal{L}(\bar{\thetab}_{\tau}) = &\nu_{\tau}\mathrm{d}\thetab
\end{cases}~
\end{align}
where $\mathcal{L}(\bar{\thetab}_{\tau})$ denotes the probability law of $\bar{\thetab}_{\tau}$, $\bar{\mathcal{W}_{\tau}} \in \mathbb{R}^{d}$ is a $d$-dimensional Brownian motion independent of $\bar{\thetab}_{\tau}$ and $Y$ is a random variable independent of $\bar{\thetab}_{\tau}$, which is integrated out. In order to match $\bar{\thetab}_{\tau}$ with  the particles $\{\thetab_{\tau}^{(i)}\}_{i=1}^M$ in the SDE system \eqref{eq:particle}, we duplicate \eqref{eq:nonlinear_sde} $M$ times, each endowing an exact solution $\bar{\thetab}_{\tau}^{(i)}$ indexed by $i$. The distribution of each particles $\{\bar{\thetab}_{\tau}^{(i)}\}_{i=1}^M$ is denoted as $\nu_{\tau}$. Note since \eqref{eq:nonlinear_sde} is introduced for the purpose of proof convenience without any restrictions, we construct it in a way such that all the $\bar{\mathcal{W}}_{\tau}^{(i)}$ are {\em exactly the same}, but independent of each ${\mathcal{W}}_{\tau}^{(i)}$, {\it i.e.},
\begin{align}\label{eqnew:nonlinear_sde}
\begin{cases}
\mathrm{d}\bar{\thetab}_{\tau}^{(i)} =&-\beta^{-1}F(\bar{\thetab}_{\tau}^{(i)})\mathrm{d}\tau -\mathbb{E}_{Y_i \sim\nu_{\tau}}K(\bar{\thetab}_{\tau}^{(i)}-Y_i)F(Y_i)\mathrm{d}\tau +\nabla K*\nu_{\tau}(\bar{\thetab}_{\tau}^{(i)})\mathrm{d}\tau
+ \sqrt{2\beta^{-1}}\mathrm{d}\bar{\mathcal{W}}_{\tau}^{(i)}~\\
\mathcal{L}(\bar{\thetab}_{\tau}^{(i)}) = &\nu_{\tau}\mathrm{d}\thetab
\end{cases}~
\end{align}
where, similarly, $Y_i$ is a random variable independent of $\bar{\thetab}_{\tau}^{(i)}$, introduced for the convenience of the proof. Furthermore, we set all the $\bar{\thetab}_{0}^{(i)}$ exact the same but independent of each ${\thetab}_{0}^{(i)}$. Consequently, all the $\bar{\thetab}_{\tau}^{(i)}$ are also exactly the same but independent of each other. Please note these settings do not affect our algorithm, as \eqref{eqnew:nonlinear_sde} are only introduced for the purpose of proof. Now it is ready to prove Theorem~\ref{theonew:chaosp}.

\begin{proof}[Proof of Theorem~\ref{theonew:chaosp}]
	First, from the definitions, we have
	\begin{align*}
	\mathrm{d}&\left(\thetab_{\tau}^{(i)} - \bar{\thetab}_{\tau}^{(i)}\right) = -\beta^{-1}\left(F(\thetab_{\tau}^{(i)}) - F(\bar{\thetab}_{\tau}^{(i)})\right)\mathrm{d}\tau \\
	&+\frac{1}{M}\sum_j^{M}\left[\nabla K(\thetab_{\tau}^{(i)}- \thetab_{\tau}^{(j)}) - \nabla K*\nu_{\tau}(\bar{\thetab}_{\tau}^{(i)})\right]\mathrm{d}\tau \\
	&-\frac{1}{M}\sum_j^{M}\left(F(\thetab_{\tau}^{(j)})K(\thetab_{\tau}^{(i)}-\thetab_{\tau}^{(j)}) - \mathbb{E}_{Y_{j}\sim\nu_{\tau}}F(Y_{j})K(\bar{\thetab}_{\tau}^{(i)}-Y_{j})\right)\mathrm{d}\tau  \nonumber\\
	&+(\sqrt{2\beta^{-1}}\mathrm{d}\mathcal{W}_{\tau}^{(i)}-\sqrt{2\beta^{-1}}\mathrm{d}\bar{\mathcal{W}}_{\tau}^{(i)})\mathrm{d}\tau
	\end{align*}
	Hence,
	\begin{align}
	&\Rightarrow  \mathrm{d}\left(\sum_i^{M}\left\|\thetab_{\tau}^{(i)} - \bar{\thetab}_{\tau}^{(i)}\right\|^2\right) \\
	=&\frac{2}{M}\sum_{i,j}^{M} (A_{ij}(\tau) + B_{ij}(\tau) + C_{ij}(\tau)+F_{ij}(\tau) +G_{ij}(\tau)+H_{ij}(\tau)+I_{i,j}(\tau))\mathrm{d}\tau \label{eqnew:ABCDGH}~,
	\end{align}
	where
	\begin{align}
	&A_{ij}(\tau) = -\beta^{-1}\left(F(\thetab_{\tau}^{(i)}) - F(\bar{\thetab}_{\tau}^{(i)})\right)\cdot \left(\thetab_{\tau}^{(i)} - \bar{\thetab}_{\tau}^{(i)}\right)\nonumber\\
	&B_{ij}(\tau)= \left(\nabla K(\thetab_{\tau}^{(i)} - \thetab_{\tau}^{(j)}) - \nabla K(\bar{\thetab}_{\tau}^{(i)} - \bar{\thetab}_{\tau}^{(j)})\right) \cdot\left(\thetab_{\tau}^{(i)} - \bar{\thetab}_{\tau}^{(i)}\right) \nonumber\\
	&C_{ij}(\tau)= \left(\nabla K(\bar{\thetab}_{\tau}^{(i)} - \bar{\thetab}_{\tau}^{(j)}) - \nabla K*\nu_{\tau}(\bar{\thetab}_{\tau}^{(i)})\right) \cdot \left(\thetab_{\tau}^{(i)} - \bar{\thetab}_{\tau}^{(i)}\right) \nonumber\\	
	&F_{ij}(\tau) = -\left(F({\thetab}_{\tau}^{(j)}) K(\thetab_{\tau}^{(i)}-\thetab_{\tau}^{(j)}) - F(\bar{\thetab}_{\tau}^{(j)}) K(\thetab_{\tau}^{(i)}-\thetab_{\tau}^{(j)})\right)\cdot \left(\thetab_{\tau}^{(i)} - \bar{\thetab}_{\tau}^{(i)}\right)\nonumber\\
	&G_{ij}(\tau) = -\left(F(\bar{\thetab}_{\tau}^{(j)}) K(\thetab_{\tau}^{(i)}-\thetab_{\tau}^{(j)}) - F(\bar{\thetab}_{\tau}^{(j)}) K(\bar{\thetab}_{\tau}^{(i)}-\bar{\thetab}_{\tau}^{(j)})\right)\cdot \left(\thetab_{\tau}^{(i)} - \bar{\thetab}_{\tau}^{(i)}\right)\nonumber\\
	&H_{ij}(\tau) = -\left(F(\bar{\thetab}_{\tau}^{(j)}) K(\bar{\thetab}_{\tau}^{(i)}-\bar{\thetab}_{\tau}^{(j)}) -\mathbb{E}_{Y_{j}\sim\nu_{\tau}}F(Y_{j})K(\bar{\thetab}_{\tau}^{(i)}-Y_{j}) \right)\cdot \left(\thetab_{\tau}^{(i)} - \bar{\thetab}_{\tau}^{(i)}\right) \nonumber\\
	&I_{i,j}(\tau)=(\sqrt{2\beta^{-1}}\mathrm{d}\mathcal{W}_{\tau}^{(i)}-\sqrt{2\beta^{-1}}\mathrm{d}\bar{\mathcal{W}}_{\tau}^{(i)})\cdot \left(\thetab_{\tau}^{(i)} - \bar{\thetab}_{\tau}^{(i)}\right)\nonumber
	\end{align}
	
	Next, we bound these terms in the following. For the $A_{ij}(\tau)$ term, according to bullet $\RN{1})$ in Assumption~\ref{assnew:ass3new} for $F$, we have
	\begin{align*}
	\mathbb{E}\sum_{ij}A_{ij}(\tau)&= -\mathbb{E}\sum_{ij}\beta^{-1}\left(F(\thetab_{\tau}^{(i)}) - F(\bar{\thetab}_{\tau}^{(i)})\right)\cdot \left(\thetab_{\tau}^{(i)} - \bar{\thetab}_{\tau}^{(i)}\right) \\
	&\leq -\beta^{-1}m_FM\sum_i\mathbb{E}\left\|\thetab_{\tau}^{(i)} - \bar{\thetab}_{\tau}^{(i)}\right\|^2 ~
	\end{align*}
	
	For the $B_{ij}(\tau)$ term, applying the oddness of $\nabla K$ in Assumption~\ref{assnew:ass1}, we have
	\begin{align*}
	\mathbb{E}&\sum_{ij}B_{ij}(\tau)\\
	=\mathbb{E}& \sum_{ij}\left(\nabla K(\thetab_{\tau}^{(i)} - \thetab_{\tau}^{(j)}) - \nabla K(\bar{\thetab}_{\tau}^{(i)} - \bar{\thetab}_{\tau}^{(j)})\right)\cdot \left(\thetab_{\tau}^{(i)} - \bar{\thetab}_{\tau}^{(i)}\right) \\
	=\mathbb{E}&\frac{1}{2}\sum_{ij}\left(\nabla K(\thetab_{\tau}^{(i)} - \thetab_{\tau}^{(j)}) - \nabla K(\bar{\thetab}_{\tau}^{(i)} - \bar{\thetab}_{\tau}^{(j)})\right) \left(\thetab_{\tau}^{(i)} - \bar{\thetab}_{\tau}^{(i)}-(\thetab_{\tau}^{(j)}-\bar{\thetab}_{\tau}^{(j)})\right)\\
	\leq \mathbb{E}&\frac{1}{2} L_{\nabla K}\sum_{ij}\left\|\thetab_{\tau}^{(i)} - \bar{\thetab}_{\tau}^{(i)}-(\thetab_{\tau}^{(j)} - \bar{\thetab}_{\tau}^{(j)})\right\|^2 \leq 2L_{\nabla K}M\mathbb{E}\sum_i\left\|\thetab_{\tau}^{(i)} - \bar{\thetab}_{\tau}^{(i)}\right\|^2~
	\end{align*}
	
	For the $C_{ij}(\tau)$ term, we have
	\begin{align*}
	&\mathbb{E}\sum_j C_{ij}(\tau) \stackrel{(1)}{\leq} \left(\mathbb{E}\left\|\thetab_{\tau}^{(i)} - \bar{\thetab}_{\tau}^{(i)}\right\|^2\right)^{1/2} \left(\mathbb{E}\left\|\sum_j \left(\nabla K(\bar{\thetab}_{\tau}^{(i)} - \bar{\thetab}_{\tau}^{(j)}) - \nabla K*\nu_{\tau}(\bar{\thetab}_{\tau}^{(i)})\right)\right\|^2\right)^{1/2} \\
	&\stackrel{(2)}{=} \left(\mathbb{E}\left\|\thetab_{\tau}^{(i)} - \bar{\thetab}_{\tau}^{(i)}\right\|^2\right)^{1/2} \left(\sum_{j}\mathbb{E}\left(\nabla K(\bar{\thetab}_{\tau}^{(i)} - \bar{\thetab}_{\tau}^{(j)}) - \nabla K*\nu_{\tau}(\bar{\thetab}_{\tau}^{(i)})\right)^2\right)^{1/2} \\
	&\stackrel{(3)}{\leq} 2H_{\nabla K} \sqrt{M} \left(\mathbb{E}\left\|\thetab_{\tau}^{(i)} - \bar{\thetab}_{\tau}^{(i)}\right\|^2\right)^{1/2}~
	\end{align*}
	where (1) is obtained by applying the Cauchy-Schwarz inequality, and (2) by the fact that $\mathbb{E}\left(K(\bar{\thetab}_{\tau}^{(i)} - \bar{\thetab}_{\tau}^{(j)}) - K*\nu_{\tau}(\bar{\thetab}_{\tau}^{(i)})\right) = 0$. Furthermore, due to the fact that $\nabla K=\exp(-\frac{\|\thetab\|^2}{\eta^2})\frac{2}{\eta^2}\thetab$, we can bound the $\nabla K(\thetab)$ with  $\|\nabla K\|\leq \exp(-\frac{\|\thetab\|^2}{\eta^2})\frac{2}{\eta^2}\|\thetab\|$. Hence there exists some positive constant $H_{\nabla K}$ such that $\|\nabla K(\thetab)\| \leq H_{\nabla K}$.\\
	
	Similarly, since $K \leq 1$, we have the following result for the $H_{ij}(\tau)$ term,
	\begin{align*}
	&\mathbb{E}\sum_j H_{ij}(\tau) \\
	{\leq}& \left(\mathbb{E}\left\|\thetab_{\tau}^{(i)} - \bar{\thetab}_{\tau}^{(i)}\right\|^2\right)^{1/2}\left(\mathbb{E}\left\|\sum_j \left(F(\bar{\thetab}_{\tau}^{(j)}) K(\bar{\thetab}_{\tau}^{(i)}-\bar{\thetab}_{\tau}^{(j)}) -\mathbb{E}_{Y_{j}\sim\nu_{\tau}}F(Y_{j})K(\bar{\thetab}_{\tau}^{(i)}-Y_{j}) \right)\right\|^2\right)^{1/2} \\
	{=}& \left(\mathbb{E}\left\|\thetab_{\tau}^{(i)} - \bar{\thetab}_{\tau}^{(i)}\right\|^2\right)^{1/2}\left(\sum_{j}\mathbb{E}\left(F(\bar{\thetab}_{\tau}^{(j)}) K(\bar{\thetab}_{\tau}^{(i)}-\bar{\thetab}_{\tau}^{(j)}) -\mathbb{E}_{Y_{j}\sim\nu_{\tau}}F(Y_{j})K(\bar{\thetab}_{\tau}^{(i)}-Y_{j}) \right)^2\right)^{1/2} \\
	{\leq}& 2H_{\theta} \sqrt{M} \left(\mathbb{E}\left\|\thetab_{\tau}^{(i)} - \bar{\thetab}_{\tau}^{(i)}\right\|^2\right)^{1/2}~
	\end{align*}
	The last inequality follows from the fact that $\sqrt{E[F(\bar{\thetab}_{\tau}^{(j)})^2} \leq L_F \sqrt{E\|\bar{\thetab}_{\tau}^{(j)}\|^2}\leq L_F\sqrt{\gamma_0+\frac{d}{m^\prime\beta}}$, which is derived in Theorem \ref{thm:continousbound2}. We denote $L_F\sqrt{\gamma_0+\frac{d}{m^\prime\beta}}$ as $H_{\theta}$.
	
	For the $F_{ij}(\tau)$ and $G_{ij}(\tau)$ terms, we have:
	\begin{align*}
	\mathbb{E}\sum_{i j} F_{ij}(\tau) &= -\mathbb{E}\sum_{ij}\left(F({\thetab}_{\tau}^{(j)}) K(\thetab_{\tau}^{(i)}-\thetab_{\tau}^{(j)}) - F(\bar{\thetab}_{\tau}^{(j)}) K(\thetab_{\tau}^{(i)}-\thetab_{\tau}^{(j)})\right)\cdot \left(\thetab_{\tau}^{(i)} - \bar{\thetab}_{\tau}^{(i)}\right)\nonumber \\
	&\leq \mathbb{E}\sum_{i j}L_F \left\|\thetab_{\tau}^{(j)} - \bar{\thetab}_{\tau}^{(j)}\right\| \left \|\thetab_{\tau}^{(i)} - \bar{\thetab}_{\tau}^{(i)}\right\|\\
	&\leq L_F M\mathbb{E}\sum_{i} \left \|\thetab_{\tau}^{(i)} - \bar{\thetab}_{\tau}^{(i)}\right\|^2~.
	\end{align*}
	\begin{align*}
	\mathbb{E}\sum_{ij} G_{ij}(\tau) = &\mathbb{E}\sum_{ij}\left(F(\bar{\thetab}_{\tau}^{(j)}) K(\thetab_{\tau}^{(i)}-\thetab_{\tau}^{(j)}) - F(\bar{\thetab}_{\tau}^{(j)}) K(\bar{\thetab}_{\tau}^{(i)}-\bar{\thetab}_{\tau}^{(j)})\right)\cdot \left(\thetab_{\tau}^{(i)} - \bar{\thetab}_{\tau}^{(i)}\right) \\
	&\leq L_FL_K\sum_{i,j}\mathbb{E}\left(\left\|\bar{\thetab}_{\tau}^{(j)}\right\|\left\|\thetab_{\tau}^{(i)} - \bar{\thetab}_{\tau}^{(i)}-(\thetab_{\tau}^{(j)} - \bar{\thetab}_{\tau}^{(j)})\right\|\left\|\thetab_{\tau}^{(i)} - \bar{\thetab}_{\tau}^{(i)}\right\|\right)\\
	& \leq L_FL_K\sum_{i,j} \sqrt{\mathbb{E}\left\|\thetab_{\tau}^{(i)} - \bar{\thetab}_{\tau}^{(i)}\right\|^2}\sqrt{\mathbb{E}\left(\left\|\thetab_{\tau}^{(i)} - \bar{\thetab}_{\tau}^{(i)}-(\thetab_{\tau}^{(j)} - \bar{\thetab}_{\tau}^{(j)})\right\|^2\left\|\bar{\thetab}_{\tau}^{(j)}\right\|^2\right)}\\
	&\leq L_FL_K\sum_{i,j}\sqrt{\mathbb{E}\left\|\thetab_{\tau}^{(i)} - \bar{\thetab}_{\tau}^{(i)}\right\|^2}\sqrt{\mathbb{E}\left\|\thetab_{\tau}^{(i)} - \bar{\thetab}_{\tau}^{(i)}-(\thetab_{\tau}^{(j)} - \bar{\thetab}_{\tau}^{(j)})\right\|^2\mathbb{E}\left\|\bar{\thetab}_{\tau}^{(j)}\right\|^2} \\
	&\leq H_{\theta} L_K \sum_{i,j}\sqrt{\mathbb{E}\left\|\thetab_{\tau}^{(i)} - \bar{\thetab}_{\tau}^{(i)}-(\thetab_{\tau}^{(j)} - \bar{\thetab}_{\tau}^{(j)})\right\|^2\mathbb{E}\left\|\thetab_{\tau}^{(i)} - \bar{\thetab}_{\tau}^{(i)}\right\|^2} \\
	& \leq H_{\theta} L_K \sum_{i,j}\sqrt{\left(2\mathbb{E}\left\|\thetab_{\tau}^{(i)} - \bar{\thetab}_{\tau}^{(i)}\right\|^2+2\mathbb{E}\left\|(\thetab_{\tau}^{(j)} - \bar{\thetab}_{\tau}^{(j)})\right\|^2\right)\mathbb{E}\left\|\thetab_{\tau}^{(i)} - \bar{\thetab}_{\tau}^{(i)}\right\|^2} \\
	& \leq \frac{1}{2}H_{\theta} L_K \sum_{i,j}\left( 2\mathbb{E}\left\|\thetab_{\tau}^{(i)} - \bar{\thetab}_{\tau}^{(i)}\right\|^2+3\mathbb{E}\left\|(\thetab_{\tau}^{(j)} - \bar{\thetab}_{\tau}^{(j)})\right\|^2\right)\\
	&\leq \frac{5}{2}H_{\theta} L_K M\mathbb{E}\sum_{i} \left \|\thetab_{\tau}^{(i)} - \bar{\thetab}_{\tau}^{(i)}\right\|^2
	\end{align*}
	The above result is derived with Cauchy-Schwarz inequality and the independence between 
	$\thetab_{\tau}^{(i)} - \bar{\thetab}_{\tau}^{(i)}-(\thetab_{\tau}^{(j)} - \bar{\thetab}_{\tau}^{(j)})$ and $\bar{\thetab}_{\tau}^{(j)}$. The independency come from the following argument: 
	According to our constructions of all the $\bar{\thetab}_{\tau}^{(i)}$ in \eqref{eqnew:nonlinear_sde}, we conclude that all the $\bar{\thetab}_{\tau}^{(i)}$ are identical. Hence, we have $\thetab_{\tau}^{(i)} - \bar{\thetab}_{\tau}^{(i)}-(\thetab_{\tau}^{(j)} - \bar{\thetab}_{\tau}^{(j)})=\thetab_{\tau}^{(i)}-\thetab_{\tau}^{(j)}$, which is obvious independent of $\bar{\thetab}_{\tau}^{(j)}$. 
	
	For the $I_{i,j}(\tau)$ term, following the analysis in \cite{Malrieu:AAP03,CattiauxGM:PTRF08,2018arXiv180511387D} and applying the independency between $\mathcal{W}_{\tau}^{(i)}-\bar{\mathcal{W}}_{\tau}^{(i)}$ and $\thetab_{\tau}^{(i)} - \bar{\thetab}_{\tau}^{(i)}$, we have 
	\begin{align*}
	    \mathbb{E}\sum_{ij} I_{ij}(\tau) =0.
	\end{align*}
	Denote $\gamma_i(\tau) \triangleq \mathbb{E}\left\|\thetab_{\tau}^{(i)} - \bar{\thetab}_{\tau}^{(i)}\right\|^2$. Due to the exchangeability of the particles, $\gamma_i(\tau)$ have the same value for all the particles, denoted as $\gamma(\tau)$. According to \eqref{eqnew:ABCDGH} and the bounds derived above, we have
	\begin{align*}
	&\gamma^\prime(\tau) \leq -2\lambda_1\gamma(\tau)+ \frac{2H_{\nabla K} +2H_{\theta}} {\sqrt{M}}\sqrt{\gamma(\tau)}~.
	\end{align*}
	where $\lambda_1=\beta^{-1}m_F-\frac{5}{2}H_{\theta} L_K-L_F-2L_{\nabla K}$. After some algebra, the above inequality can be transformed to
	\begin{align*}
	&(\sqrt{\gamma(\tau)}-\frac{2(H_{\nabla K}+H_{\theta})}{\sqrt{M}(\beta^{-1}m_F-\frac{5}{2}H_{\theta} L_K-L_F-2L_{\nabla K})})^\prime \\
	\leq& -\lambda_1(\sqrt{\gamma(\tau)}-\frac{2(H_{\nabla K}+H_{\theta})}{\sqrt{M}(\beta^{-1}m_F-\frac{5}{2}H_{\theta} L_K-L_F-2L_{\nabla K})})
	\end{align*}
	
	Note that $\thetab_{\tau}^{(i)}$ and $\bar{\thetab}_{\tau}^{(i)}$ are initialized with the same initial distribution $\mu_0=\nu_0$ but independent of each other. 
	From the proof of Theorems~\ref{app:use1} and \ref{thm:continousbound2}, we can have $\gamma(0)\leq 4\gamma_0$ for some constant $\gamma_0$. When we set $\gamma_0$ small enough, we can have the following results according to the Gronwall Lemma.
	
	\begin{align*}
	\sqrt{\gamma(\tau) }\leq \frac{2(H_{\nabla K}+H_{\theta})}{\sqrt{M}(\beta^{-1}m_F-\frac{5}{2}H_{\theta} L_K-L_F-2L_{\nabla K})}~
	\end{align*}

	Hence, there exist some positive constant $(c_1,c_2)$ such that:
	\begin{align}\label{leq:import}
	&{W}_1(\rho_{\tau},\nu_{\tau}) \stackrel{(1)}{\leq} {W}_2(\rho_{\tau},\nu_{\tau}) \nonumber\\
	\stackrel{(2)}{\leq}& \sqrt{\mathbb{E}\left\|\thetab_{\tau}^{(i)} - \bar{\thetab}_{\tau}^{(i)}\right\|^2} \stackrel{(3)}{\leq} ~\frac{c_{1}}{\sqrt{M}(\beta^{-1}-c_{2})},
	\end{align}
	where $(1)$ holds due to the relationship between ${W}_1$ and ${W}_2$ metric \cite{GivensS:MMJ84}, $(2)$ due to the definition of ${W}_2$, and $(3)$ due to the result from the previous proof.

\end{proof}

\section{Proof of Theorem~\ref{theonew:Bt2}}

\begin{proof}[Proof of Theorem~\ref{theonew:Bt2}]
	First, note our goal is to bound ${W}_1(\nu_{\tau},\nu_{\infty})\leq c_{3}\exp{(-2\lambda_{1}\tau)}$. According to the relationship between ${W}_1$ and ${W}_2$ metric that $W_1 \leq W_2$ \cite{GivensS:MMJ84}, once we bound ${W}_2(\nu_{\tau},\nu_{\infty})$ as ${W}_2(\nu_{\tau},\nu_{\infty})\leq c_{3}\exp{(-2\lambda_{1}\tau)}$, the bound for $W_1$ will automatically hold.
	
	In the following, we will bound $W_2$. We first note the following cases based on equation \eqref{eq:particle}:
	\begin{itemize}
		\item We set the initial distribution of each particle to be $\nu_{0}$, which means $\rho_0=\mathcal{L}({\thetab}^{(i)}_0) = \nu_{0}$. In this case, the $M$ evolved particles are denoted as $\{{\thetab}^{(i)}_{\tau,1}\}_{i=1}^M$. We denote the distribution of each ${\thetab}^{(i)}_{\tau,1}$ at $\tau$ as $\rho_{\tau,1}$.
		\item We set the initial distribution of each particle to be $\nu_{\infty}$, which means $\rho_0=\mathcal{L}({\thetab}^{(i)}_0) = \nu_{\infty}$. In this case, the $M$ evolved particles are denoted as $\{{\thetab}^{(i)}_{\tau,2}\}_{i=1}^M$. We denote the distribution of each ${\thetab}^{(i)}_{\tau,2}$ at $\tau$ as $\rho_{\tau,2}$.\\
	\end{itemize}
	
	To bound ${W}_2(\nu_{\tau},\nu_{\infty})$, we decompose it as:
	\begin{align}\label{eq:3terms}
	W_2(\nu_{\tau}, \nu_{\infty}) &\leq W_2(\nu_{\tau}, \rho_{\tau,1}) + W_2(\rho_{\tau,1},\rho_{\tau,2}) + W_2(\rho_{\tau,2}, \nu_{\infty})~.
	\end{align}
	Note that $\rho_{0,1}=\nu_0$ and $\rho_{0,2}=\nu_{\infty}$. According to \eqref{leq:import}, we have 
	\begin{align*}
	& W_2(\nu_{\tau}, \rho_{\tau,1}) \leq \frac{c_{1}}{\sqrt{M}(\beta^{-1}-c_{2})} \\
	& W_2(\rho_{\tau,2}, \nu_{\infty}) \leq \frac{c_{1}}{\sqrt{M}(\beta^{-1}-c_{2})}
	\end{align*}
	It remains to bound the term $W_2(\rho_{\tau,1},\rho_{\tau,2})$. It is worth mentioning that the reason of introducing $\{{\thetab}^{(i)}_{\tau,1}\}_{i=1}^M$ and $\{{\thetab}^{(i)}_{\tau,2}\}_{i=1}^M$ is to bound the term ${W}_2(\nu_{\tau},\nu_{\infty})$, which consequently is to bound $W_2(\rho_{\tau,1},\rho_{\tau,2})$. For some special settings of $\{{\thetab}^{(i)}_{\tau,1}\}_{i=1}^M$ and $\{{\thetab}^{(i)}_{\tau,2}\}_{i=1}^M$, it will allow us to bound $W_2(\rho_{\tau,1},\rho_{\tau,2})$ easier. To this end, we set all the $\{{\thetab}^{(i)}_{0,1}\}_{i=1}^M$ and the corresponding $\mathcal{W}_{\tau,1}^{(i)}$ to be exactly the same. Consequently, all the $\{{\thetab}^{(i)}_{\tau,1}\}_{i=1}^M$ will be identical. In this setting, the bound proved above for $W_2(\nu_{\tau}, \rho_{\tau,1})$ still holds since this is just a specific case for Theorem~\ref{theonew:chaosp}. The same argument goes for $\{{\thetab}^{(i)}_{\tau,2}\}_{i=1}^M$. And we are left to prove the bound for  $W_2(\rho_{\tau,1},\rho_{\tau,2})$. 

	Since $W_2(\rho_{\tau,1},\rho_{\tau,2}) \leq \mathbb{E}\left( \left\|{\thetab}^{(i)}_{\tau,1} - {\thetab}^{(i)}_{\tau,2}\right\|^2 \right)\triangleq r(\tau)$, we will derive a bound for $\mathbb{E}\left( \left\|{\thetab}^{(i)}_{\tau,1} - {\thetab}^{(i)}_{\tau,2}\right\|^2 \right)$ in the following:
	\begin{align*}
	&\mathrm{d}\left({\thetab}^{(i)}_{\tau,1} - {\thetab}^{(i)}_{\tau,2}\right) = -\beta^{-1}\left(F({\thetab}^{(i)}_{\tau,1}) - F({\thetab}^{(i)}_{\tau,2})\right)\mathrm{d}\tau\\
	&+\frac{1}{M}\sum_j^{M}\left[\nabla K({\thetab}^{(i)}_{\tau,1}- {\thetab}^{(j)}_{\tau,1}) - \nabla K({\thetab}^{(i)}_{\tau,2}- {\thetab}^{(j)}_{\tau,2})\right]\mathrm{d}\tau \\
	&-\frac{1}{M}\sum_j^{M}\left(F(\thetab_{\tau,1}^{(j)})K({\thetab}^{(i)}_{\tau,1}- {\thetab}^{(j)}_{\tau,1}) - F(\thetab_{\tau,2}^{(j)})K({\thetab}^{(i)}_{\tau,2}- {\thetab}^{(j)}_{\tau,2})\right)\mathrm{d}\tau \\
	& +(\sqrt{2\beta^{-1}}\mathrm{d}\mathcal{W}_{\tau,1}^{(i)}-\sqrt{2\beta^{-1}}\mathrm{d}{\mathcal{W}}_{\tau,2}^{(i)})\mathrm{d}\tau
	\end{align*}
	As a result, we have
	\begin{align*}
	\mathrm{d}\left(\sum_i^{M}\left\|{\thetab}^{(i)}_{\tau,1} - {\thetab}^{(i)}_{\tau,2}\right\|^2\right) =&\frac{2}{M}\sum_{i,j}^{M} (\xi_{ij}^{1}(\tau) + \xi_{ij}^{2}(\tau)  + \xi_{ij}^{3}(\tau)  +\xi_{ij}^{4}(\tau)  +\xi_{ij}^{5}(\tau))\mathrm{d}\tau \\
	\end{align*}
	where
	\begin{align*}
	\xi_{ij}^{1}(\tau)  &= -\beta^{-1}\left(F({\thetab}^{(i)}_{\tau,1}) - F({\thetab}^{(i)}_{\tau,2})\right)\cdot \left({\thetab}^{(i)}_{\tau,1} - {\thetab}^{(i)}_{\tau,2}\right)\nonumber\\
	\xi_{ij}^{2}(\tau) &= \left(\nabla K({\thetab}^{(i)}_{\tau,1}- {\thetab}^{(j)}_{\tau,1}) - \nabla K({\thetab}^{(i)}_{\tau,2}- {\thetab}^{(j)}_{\tau,2})\right)\cdot \left({\thetab}^{(i)}_{\tau,1} - {\thetab}^{(i)}_{\tau,2}\right)\nonumber\\
	\xi_{ij}^{3}(\tau)  &= -\left(F(\thetab_{\tau,1}^{(j)})K({\thetab}^{(i)}_{\tau,1}- {\thetab}^{(j)}_{\tau,1}) - F(\thetab_{\tau,2}^{(j)}) K({\thetab}^{(i)}_{\tau,1}- {\thetab}^{(j)}_{\tau,1})\right)\cdot \left({\thetab}^{(i)}_{\tau,1} - {\thetab}^{(i)}_{\tau,2}\right)\nonumber\\
	\xi_{ij}^{4}(\tau)  &= -\left(F(\thetab_{\tau,2}^{(j)}) K({\thetab}^{(i)}_{\tau,1}- {\thetab}^{(j)}_{\tau,1})- F(\thetab_{\tau,2}^{(j)}) K({\thetab}^{(i)}_{\tau,2}- {\thetab}^{(j)}_{\tau,2})\right)\cdot \left({\thetab}^{(i)}_{\tau,1} - {\thetab}^{(i)}_{\tau,2}\right)\nonumber\\
	\xi_{ij}^{5}(\tau)  &=(\sqrt{2\beta^{-1}}\mathrm{d}\mathcal{W}_{\tau,1}^{(i)}-\sqrt{2\beta^{-1}}\mathrm{d}{\mathcal{W}}_{\tau,2}^{(i)}))\cdot \left({\thetab}^{(i)}_{\tau,1} - {\thetab}^{(i)}_{\tau,2}\right)\nonumber\\
	\end{align*}
	
	For the $\xi_{ij}^{1}(\tau)$ terms, according to Assumption~\ref{assnew:ass3new} for $F$, we have
	\begin{align*}
	\mathbb{E}\sum_{ij}\xi_{ij}^{1}(\tau)&= -\mathbb{E}\sum_{ij}\beta^{-1}\left(F({\thetab}^{(i)}_{\tau,1}) - F({\thetab}^{(i)}_{\tau,2})\right)\cdot \left({\thetab}^{(i)}_{\tau,1} - {\thetab}^{(i)}_{\tau,2}\right)\\
	&\leq -\beta^{-1}m_FM\mathbb{E}\sum_i\left\|{\thetab}^{(i)}_{\tau,1} - {\thetab}^{(i)}_{\tau,2}\right\|^2 ~.
	\end{align*}
	
	For the $\xi_{ij}^{2}(\tau)$ term, applying the concave condition for $K$ and the oddness of $\nabla K$ in Assumption~\ref{assnew:ass1}, we have
	\begin{align*}
	&\mathbb{E}\sum_{ij}\xi_{ij}^{2}(\tau)= \mathbb{E}\sum_{ij}^{M}\left(\nabla K({\thetab}^{(i)}_{\tau,1}- {\thetab}^{(j)}_{\tau,1}) - \nabla K({\thetab}^{(i)}_{\tau,2}- {\thetab}^{(j)}_{\tau,2})\right) \cdot \left({\thetab}^{(i)}_{\tau,1} - {\thetab}^{(i)}_{\tau,2}\right) \\
	&=\frac{1}{2}\sum_{ij}^{M}\mathbb{E}\left(\nabla K({\thetab}^{(i)}_{\tau,1}- {\thetab}^{(j)}_{\tau,1}) - \nabla K({\thetab}^{(i)}_{\tau,2}- {\thetab}^{(j)}_{\tau,2})\right) \cdot \left({\thetab}^{(i)}_{\tau,1} - {\thetab}^{(i)}_{\tau,2}-({\thetab}^{(j)}_{\tau,1} - {\thetab}^{(j)}_{\tau,2})\right)\\
	&\leq \frac{1}{2} L_K\mathbb{E}\sum_{ij}^{M}\left\|{\thetab}^{(i)}_{\tau,1} - {\thetab}^{(i)}_{\tau,2}-({\thetab}^{(j)}_{\tau,1} - {\thetab}^{(j)}_{\tau,2})\right\|^2  \leq 2L_KM\mathbb{E}\sum_i\left\|{\thetab}^{(i)}_{\tau,1} - {\thetab}^{(i)}_{\tau,2}\right\|^2 ~.
	\end{align*}
	
	For the $\xi_{ij}^{3}(\tau)$ terms, applying the $L_F$-Lipschitz property for $F$ and using $K \leq 1$, we have
	\begin{align*}
	&\mathbb{E}\sum_{ij}\xi_{ij}^{3}(\tau) \\
	=& \mathbb{E}\sum_{ij}-\left(F(\thetab_{\tau,1}^{(j)})K({\thetab}^{(i)}_{\tau,1}- {\thetab}^{(j)}_{\tau,1}) - F(\thetab_{\tau,2}^{(j)}) K({\thetab}^{(i)}_{\tau,1}- {\thetab}^{(j)}_{\tau,1})\right) \cdot \left({\thetab}^{(i)}_{\tau,1} - {\thetab}^{(i)}_{\tau,2}\right)\nonumber\\
	\leq& \mathbb{E}\sum_{ij} L_F \left\|{\thetab}^{(j)}_{\tau,1} - {\thetab}^{(j)}_{\tau,2}\right\|\left\|{\thetab}^{(i)}_{\tau,1} - {\thetab}^{(i)}_{\tau,2}\right\|\\
	\leq& L_F M\mathbb{E}\sum_{i} \left \|{\thetab}^{(i)}_{\tau,1} - {\thetab}^{(i)}_{\tau,2}\right\|^2~.
	\end{align*}
	
	For the $\xi_{ij}^{4}(\tau)$ terms, recall that all the ${\thetab}^{(i)}_{\tau,1}$ are identical (and all the ${\thetab}^{(i)}_{\tau,2}$ are identical), we have
	\begin{align*}
	&\mathbb{E}\sum_{ij} \xi_{ij}^{4}(\tau) \\
	&= -\mathbb{E}\sum_{ij}\left(F(\thetab_{\tau,2}^{(j)}) K({\thetab}^{(i)}_{\tau,1}- {\thetab}^{(j)}_{\tau,1}) - F(\thetab_{\tau,2}^{(j)}) K({\thetab}^{(i)}_{\tau,2}- {\thetab}^{(j)}_{\tau,2})\right) \cdot \left({\thetab}^{(i)}_{\tau,1} - {\thetab}^{(i)}_{\tau,2}\right)=0~.
	\end{align*}
	Similar to the proof of the Theorem~\ref{theonew:chaosp}, we have
	\begin{align*}
	    	\mathbb{E}\sum_{i,j}\xi_{ij}^{5}(\tau)  &=\mathbb{E}\sum_{i,j}(\sqrt{2\beta^{-1}}\mathrm{d}\mathcal{W}_{\tau,1}^{(i)}-\sqrt{2\beta^{-1}}\mathrm{d}{\mathcal{W}}_{\tau,2}^{(i)}))\cdot \left({\thetab}^{(i)}_{\tau,1} - {\thetab}^{(i)}_{\tau,2}\right)\\
	    	&=0
	\end{align*}
	Combining these bounds, we have
	\begin{align*}
	r^\prime(\tau) \leq -2(\beta^{-1}m_F-L_F-2L_K)r(\tau)~.
	\end{align*}
	
	According to the Gronwall lemma, we have
	\begin{align*}
	r(\tau)\leq r(0) e^{-2\lambda_{1}\tau},
	\end{align*}
	where $\lambda_{1}=\beta^{-1}m_F-L_F-2L_K$.\\

	Consequently, there exists some positive constant $c_3$ such that 
	\begin{align*}
	W_2(\rho_{\tau,1},\rho_{\tau,2}) \leq c_3e^{-2\lambda_{1}\tau}
	\end{align*}
	Combing all bounds for \eqref{eq:3terms}, we have
	\begin{align*}
	W_2(\nu_{\tau}, \nu_{\infty}) \leq &c_3e^{-2\lambda_{1}\tau}+\frac{c_{1}}{\sqrt{M}(\beta^{-1}-c_{2})} +\frac{c_{1}}{\sqrt{M}(\beta^{-1}-c_{2})}
	\end{align*}
	We can further tighten the above bound by noting that $\nu_{\tau}$ is the solution of \eqref{eq:unifiednew}, which has nothing to do with the number of particles $M$. As a result, we can set $M\rightarrow \infty$, resulting in 
	$$W_2(\nu_{\tau}, \nu_{\infty}) \leq c_3e^{-2\lambda_{1}\tau}~,$$ 
	which completes the proof.
	
\end{proof}

\section{Proof of Theorem~\ref{theonew:num_error}}\label{sec:protheonewnum_error}

To bound the ${W}_1(\mu_T,\rho_{\sum_{k=0}^{T-1} h_k})$ term, note the original SDE driving the particles $\{{\thetab}_{\tau}^{(i)}\}$ in \eqref{eq:particle} corresponds to is a nonlinear PDE, which is hard to deal with. Fortunately, \eqref{eq:particle} can be turned into a diffusion-based SDE by concatenating the particles at each time into a single vector representation, {\it i.e.}, by defining the new parameter at time $\tau$ as $\Thetab_{\tau}\triangleq [\thetab_{\tau}^{(1)}, \cdots, \thetab_{\tau}^{(M)}] \in \mathbb{R}^{Md}$. Consequently, $\Thetab_{\tau}$ is driven by the following SDE:\vspace{-0.1cm}
\begin{align}\label{eqapp:extendP}
\mathrm{d}\Thetab_{\tau} = -F_{\Thetab}(\Thetab_{\tau})\mathrm{d}\tau + \sqrt{2\beta^{-1}}\mathrm{d}\mathcal{W}_{\tau}^{(Md)}~,
\end{align}
where 
\begin{align*}
F_{\Thetab}(\Thetab_{\tau}) \triangleq [&\beta^{-1}F(\thetab_{\tau}^{(1)})-\frac{1}{M}\sum_{j=1}^M\nabla K(\thetab_{\tau}^{(1)}-\thetab_{\tau}^{(j)})+\frac{1}{M}\sum_{j=1}^{M}K(\thetab_{\tau}^{(1)} - \thetab_{\tau}^{(j)})F(\thetab_{\tau}^{(j)}), \cdots, \\
&\beta^{-1}F(\thetab_{\tau}^{(M)})-\frac{1}{M}\sum_{j=1}^M\nabla K(\thetab_{\tau}^{(M)}-\thetab_{\tau}^{(j)})+\frac{1}{M}\sum_{j=1}^{M}K(\thetab_{\tau}^{(M)} - \thetab_{\tau}^{(j)})F(\thetab_{\tau}^{(j)})]
\end{align*}
is a vector function $\mathbb{R}^{Md}\rightarrow \mathbb{R}^{Md}$, and $\mathcal{W}{\tau}^{(Md)}$ is Brownian motion of dimension $Md$.\\

Now we define $F_{(q)\Thetab}(\Thetab_{\tau}) \triangleq [\beta^{-1}F_q(\thetab_{\tau}^{(1)})-\frac{1}{MN}\sum_{j=1}^M\nabla K(\thetab_{\tau}^{(1)}-\thetab_{\tau}^{(j)})+\frac{1}{M}\sum_{j=1}^{M}K(\thetab_{\tau}^{(1)} - \thetab_{\tau}^{(j)})F_q(\thetab_{\tau}^{(j)}), \cdots, \beta^{-1}F_q(\thetab_{\tau}^{(M)})-\frac{1}{MN}\sum_{j=1}^M\nabla K(\thetab_{\tau}^{(M)}-\thetab_{\tau}^{(j)})+\frac{1}{M}\sum_{j=1}^{M}K(\thetab_{\tau}^{(M)} - \thetab_{\tau}^{(j)})F_q(\thetab_{\tau}^{(j)})]$. We can verify that $F_{\Thetab}(\Thetab_{\tau})=\sum_{q=1}^NF_{(q)\Thetab}(\Thetab_{\tau})$.\\

Define ${\Theta}_k \triangleq [{\theta}_{k}^{(1)}, \cdots, {\theta}_{k}^{(M)}]$ and $G_{\mathcal{I}_k}^{\Theta} \triangleq \frac{N}{B_k}\sum_{q\in\mathcal{I}_k}F_{(q)\Thetab}(\Theta_k)$. It is seen that the following result holds:
\begin{align}\label{eqapp:numerror1}
\Theta_{k+1} = \Theta_{k} - G_{\mathcal{I}_k}^{\Theta}h_k + \sqrt{2\beta^{-1}h_k}\Xi_{k}~,
\end{align}
where $\Xi_{k}\sim \mathcal{N}(\mathbf{0}, \Ib_{Md \times Md})$. As a result, we have that ${\Theta}_k$ of \eqref{eqapp:numerror1} is accutually  the numerical solution of the SDE \eqref{eqapp:extendP} via stochastic gradients.\\

Denote the distribution of $\Theta_{k}$ as $\mu_k^{\Theta}$, and the distribution of $\Thetab_{\tau}$ as $\rho_{\tau}^{\Thetab}$. Before proceeding to our theoretical results, we first present the following Lemmas, which is very important in our proof.
\begin{lemma}\label{lem:W}
	${W}_1(\mu_k, \rho_{\tau})\leq \frac{1}{\sqrt{M}}{W}_1(\mu_k^{\Theta},\rho_{\tau}^{\Thetab})$
\end{lemma}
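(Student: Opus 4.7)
The plan is to establish the inequality by constructing an explicit coupling of $\mu_k$ and $\rho_\tau$ from an arbitrary coupling of $\mu_k^{\Theta}$ and $\rho_\tau^{\Thetab}$, and then bounding the transport cost via Cauchy--Schwarz. The key observation to exploit is the exchangeability of the particles $\{\theta_k^{(i)}\}_{i=1}^M$ (and of $\{\thetab_\tau^{(i)}\}_{i=1}^M$): each individual coordinate of $\Theta_k \in \mathbb{R}^{Md}$ has marginal law $\mu_k$, and each coordinate of $\Thetab_\tau$ has marginal law $\rho_\tau$.

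First, I would fix an arbitrary coupling $\zeta \in \Gamma(\mu_k^{\Theta}, \rho_\tau^{\Thetab})$ and draw $(\Theta, \Thetab) \sim \zeta$ with $\Theta = [\theta^{(1)}, \ldots, \theta^{(M)}]$ and $\Thetab = [\thetab^{(1)}, \ldots, \thetab^{(M)}]$. Then I would introduce an index $I$ uniform on $\{1,\ldots,M\}$ independent of $(\Theta, \Thetab)$ and consider the pair $(\theta^{(I)}, \thetab^{(I)})$. By exchangeability its marginals are $\mu_k$ and $\rho_\tau$, so it constitutes a valid coupling of $(\mu_k, \rho_\tau)$, giving
\[
{W}_1(\mu_k, \rho_\tau) \;\leq\; \mathbb{E}\,\|\theta^{(I)} - \thetab^{(I)}\| \;=\; \frac{1}{M}\sum_{i=1}^M \mathbb{E}\,\|\theta^{(i)} - \thetab^{(i)}\|.
\]

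Next, I would apply the Cauchy--Schwarz inequality in $\mathbb{R}^M$ to the vector of nonnegative norms $(\|\theta^{(i)}-\thetab^{(i)}\|)_{i=1}^M$:
\[
\sum_{i=1}^M \|\theta^{(i)}-\thetab^{(i)}\| \;\leq\; \sqrt{M}\,\Bigl(\sum_{i=1}^M \|\theta^{(i)}-\thetab^{(i)}\|^2\Bigr)^{1/2} \;=\; \sqrt{M}\,\|\Theta-\Thetab\|_{\mathbb{R}^{Md}}.
\]
Dividing by $M$, taking expectation under $\zeta$, and then taking the infimum over all couplings $\zeta \in \Gamma(\mu_k^{\Theta}, \rho_\tau^{\Thetab})$ yields the claimed bound ${W}_1(\mu_k,\rho_\tau) \leq \frac{1}{\sqrt{M}}\,{W}_1(\mu_k^{\Theta}, \rho_\tau^{\Thetab})$.

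I do not anticipate any real obstacle: the argument is essentially a soft one, relying only on exchangeability (to justify that the marginals are what we want) and the standard $\ell^1$-vs-$\ell^2$ inequality (to convert the sum of norms to the norm of the concatenated vector). The only care point is to verify that exchangeability actually holds for both $\mu_k^{\Theta}$ and $\rho_\tau^{\Thetab}$; this follows from the symmetric structure of the updates \eqref{eq:particle_num} and \eqref{eq:particle} together with an i.i.d.\ initialization, as already noted in Section~\ref{sec:setup}.
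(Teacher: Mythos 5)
Your argument is correct, but it proceeds through the primal (coupling) formulation of $W_1$, whereas the paper's proof goes through the Kantorovich--Rubinstein dual. The paper fixes a $1$-Lipschitz test function $g$ on $\mathbb{R}^d$, forms $g_{\Thetab}(\Thetab)=\frac{1}{\sqrt{M}}\sum_i g(\thetab^{(i)})$, checks via Cauchy--Schwarz that $g_{\Thetab}$ is still $1$-Lipschitz on $\mathbb{R}^{Md}$, uses exchangeability to rewrite $|\mathbb{E}_{\mu_k}[g]-\mathbb{E}_{\rho_\tau}[g]|$ as $\frac{1}{\sqrt{M}}|\mathbb{E}[g_{\Thetab}(\Theta_k)]-\mathbb{E}[g_{\Thetab}(\Thetab_\tau)]|$, and then observes that the supremum over this particular family of test functions is dominated by the supremum over all $1$-Lipschitz functions on $\mathbb{R}^{Md}$. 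You instead push an arbitrary coupling of the joint laws down to a coupling of the single-particle marginals (via the auxiliary uniform index $I$, or equivalently by averaging the $M$ coordinate couplings) and apply the same $\ell^1$-versus-$\ell^2$ inequality on the transport cost before taking the infimum. The two proofs are mirror images: the same Cauchy--Schwarz step appears once on the test-function side and once on the cost side, and both hinge on exchangeability (each coordinate of $\Theta_k$ having law $\mu_k$, each coordinate of $\Thetab_\tau$ having law $\rho_\tau$). Your route has the mild advantage of not needing the duality theorem at all and of making the lossy step transparent (restricting from arbitrary couplings of the marginals to those induced by couplings of the joint laws); the paper's route is slightly more economical given that it already invokes the dual form of $W_1$ elsewhere. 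Both yield exactly the claimed constant $M^{-1/2}$, so there is nothing to fix.
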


\begin{proof}[Proof of Lemma~\ref{lem:W}]
	Let us recall the definition of $W_1$ metric and its Kantorovich-Rubinstein duality \cite{Villani:08}, {\it i.e.} ${W}_1(\mu,\nu) \triangleq \sup_{\|g\|_{lip}\leq 1}\left|\mathbb{E}_{\thetab \sim \mu}[g(\thetab)] - \mathbb{E}_{\thetab \sim \nu}[g(\thetab)]\right|$. We can prove the fact that if $g(\thetab):\mathbb{R}^{d}\rightarrow \mathbb{R}$ is a $L_g$-Lipschitz function in $\mathbb{R}^{d}$, the $g_{\Thetab}(\Thetab)$, defined as $g_{\Thetab}(\Thetab) =\frac{1}{\sqrt{M}}\sum_{i}^{M}g(\thetab^{(i)})$, is a $L_g$-Lipschitz function in $\mathbb{R}^{Md}$, where $\Thetab \triangleq [{\thetab}^{(1)}, \cdots, {\thetab}^{(M)}]$. The proof is as follows:
	\begin{align*}
	&\|g_{\Thetab}(\Thetab_1)-g_{\Thetab}(\Thetab_2)\|\leq \frac{1}{\sqrt{M}}\sum_{i=1}^{M}\|g(\thetab_1^{(i)})-g(\thetab_2^{(i)})\| \\
	&\leq \frac{L_g}{\sqrt{M}} \sum_{i=1}^{M} \|\thetab_1^{(i)}-\thetab_2^{(i)}\|\\
	&\leq \frac{L_g}{\sqrt{M}} \sqrt{M} \sqrt{ \sum_{i=1}^{M} \|\thetab_1^{(i)}-\thetab_2^{(i)}\|^2}= L_g\|\Thetab_1-\Thetab_2\|
	\end{align*}
	As a result, we have:
	\begin{align*}
	& \frac{1}{M}\sum_{i=1}^{M}\left|\mathbb{E}_{\theta_k^{(i)} \sim \mu_k}[g(\theta_k^{(i)})] - \mathbb{E}_{\thetab_{\tau}^{(i)} \sim \rho_{\tau}}[g(\thetab_{\tau}^{(i)})]\right| \\
	\stackrel{(1)}{=}&\frac{1}{\sqrt{M}}\left|\frac{1}{\sqrt{M}}\sum_{i=1}^{M}(\mathbb{E}_{\theta_k^{(i)} \sim \mu_k}[g(\theta_k^{(i)})] - \mathbb{E}_{\thetab_{\tau}^{(i)} \sim \rho_{\tau}}[g(\thetab_{\tau}^{(i)})])\right|\\
	{=}&\frac{1}{\sqrt{M}}\left|\mathbb{E}_{\Theta_k \sim \mu_k}[g_{\Thetab}(\Theta_k)] - \mathbb{E}_{\Thetab_{\tau} \sim \rho_{\tau}}[g_\Thetab(\Thetab_{\tau})]\right|~,
	\end{align*}
	where $(1)$ holds because $\mathbb{E}_{\theta_k^{(1)} \sim \mu_k}[g(\theta_k^{(1)})]=\cdot \cdot \cdot= \mathbb{E}_{\theta_k^{(M)} \sim \mu_k}[g(\theta_k^{(M)})]$ for all the particles $\theta_k^{(i)}$, and $\mathbb{E}_{\thetab_{\tau}^{(1)} \sim \rho_{\tau}}[g(\thetab_{\tau}^{(1)})]=\cdot \cdot \cdot=\mathbb{E}_{\thetab_{\tau}^{(M)} \sim \rho_{\tau}}[g(\thetab_{\tau}^{(M)})] $
	for all the particles $\thetab_{\tau}^{(i)}$.
	According to the definition of ${W}_1$ metric, we derive that 
	\begin{align*}
	&{W}_1(\mu_k, \rho_{\tau}) \\
	=& \sup_{\|g\|_{lip}\leq 1}\frac{1}{M}\sum_{i=1}^{M}\left|\mathbb{E}_{\theta_k^{(i)} \sim \mu_k}[g(\theta_k^{(i)})] - \mathbb{E}_{\thetab_{\tau}^{(i)} \sim \rho_{\tau}}[g(\thetab_{\tau}^{(i)})]\right| \\
	=&\frac{1}{\sqrt{M}}\sup_{\|g\|_{lip}\leq 1}\left|\mathbb{E}_{\Theta_k \sim \mu_k}[g_{\Thetab}(\Theta_k)] - \mathbb{E}_{\Thetab_{\tau} \sim \rho_{\tau}}[g_\Thetab(\Thetab_{\tau})]\right| \\
	=& \frac{1}{\sqrt{M}}\sup_{\|g_{\Thetab}\|_{lip}\leq 1}\left|\mathbb{E}_{\Theta_k \sim \mu_k}[g_{\Thetab}(\Theta_k)] - \mathbb{E}_{\Thetab_{\tau} \sim \rho_{\tau}}[g_\Thetab(\Thetab_{\tau})]\right| \\
	& \leq \frac{1}{\sqrt{M}}{W}_1(\mu_k^{\Theta},\rho_{\tau}^{\Thetab})~,
	\end{align*}
	which completes the proof.
\end{proof}

\begin{lemma}\label{lem:lip}
	Assuming $F(\mathbf{0})=\mathbf{0}$. If $F$ in (\ref{eq:particle_num}) 
	is Lipschitz with constant $L_F$, and 
	satisfies the dissipative property that $\langle F(\thetab), \thetab\rangle \geq m_F\left\|\thetab\right\|^2 - b$. Then $F_{\Thetab}$ in (\ref{eqapp:extendP})  
	satisfies $\langle F_{\Thetab}(\Thetab), \Thetab\rangle \geq (\beta^{-1}m_F-m^{\prime})\left\|\Thetab\right\|^2 - \beta^{-1}Mb$, where $l^{\prime}$ and $m^{\prime}$ are some positive constants. Besides we have  $\mathbb{E}\|F_{\Thetab}(\Thetab_1)-F_{\Thetab}(\Thetab_2)\|^2\leq(\sqrt{2}\beta^{-1}L_F+l^{\prime})\mathbb{E}\|\Thetab_1-\Thetab_2\|^2$ if the same settings as in the proof of Theorem~\ref{theonew:Bt2} is adopted.
\end{lemma}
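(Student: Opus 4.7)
}
The plan is to expand $F_{\Thetab}$ coordinate-wise into its three constituent pieces (the single-particle drift, the kernel-gradient interaction, and the kernel-weighted drift interaction), bound each piece separately, and then combine via the triangle inequality. I would denote the $i$-th block of $F_{\Thetab}(\Thetab)$ as $F_{\Thetab,i}(\Thetab) = \beta^{-1}F(\thetab^{(i)}) - \tfrac{1}{M}\sum_j \nabla K(\thetab^{(i)}-\thetab^{(j)}) + \tfrac{1}{M}\sum_j K(\thetab^{(i)}-\thetab^{(j)})F(\thetab^{(j)})$ so that $\langle F_{\Thetab}(\Thetab),\Thetab\rangle = \sum_i \langle F_{\Thetab,i}(\Thetab),\thetab^{(i)}\rangle$ and $\|F_{\Thetab}(\Thetab_1)-F_{\Thetab}(\Thetab_2)\|^2 = \sum_i \|F_{\Thetab,i}(\Thetab_1)-F_{\Thetab,i}(\Thetab_2)\|^2$.

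For the dissipativity bound, I would first apply the dissipative hypothesis on $F$ termwise to obtain $\beta^{-1}\sum_i\langle F(\thetab^{(i)}),\thetab^{(i)}\rangle \geq \beta^{-1}(m_F\|\Thetab\|^2 - Mb)$. For the $\nabla K$ interaction I would symmetrize the double sum using the oddness of $\nabla K$ (Assumption~\ref{assnew:ass1}), writing $\tfrac{1}{M}\sum_{i,j}\langle \nabla K(\thetab^{(i)}-\thetab^{(j)}),\thetab^{(i)}\rangle = \tfrac{1}{2M}\sum_{i,j}\langle \nabla K(\thetab^{(i)}-\thetab^{(j)}),\thetab^{(i)}-\thetab^{(j)}\rangle$, and bound its magnitude using $\|\nabla K(u)\|\le L_{\nabla K}\|u\|$ (which follows since $\nabla K(0)=0$) together with $\|\thetab^{(i)}-\thetab^{(j)}\|^2 \le 2\|\thetab^{(i)}\|^2+2\|\thetab^{(j)}\|^2$, yielding an $O(\|\Thetab\|^2)$ lower bound. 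For the $K$-weighted drift term I would use $|K|\le 1$, the Lipschitz property of $F$ with $F(\mathbf{0})=\mathbf{0}$ to get $\|F(\thetab)\|\le L_F\|\thetab\|$, Cauchy-Schwarz, and $\|\thetab^{(i)}\|\|\thetab^{(j)}\|\le \tfrac{1}{2}(\|\thetab^{(i)}\|^2+\|\thetab^{(j)}\|^2)$. Collecting all the extra terms into a single constant $m'$ gives the claim $\langle F_{\Thetab}(\Thetab),\Thetab\rangle \geq (\beta^{-1}m_F-m')\|\Thetab\|^2 - \beta^{-1}Mb$.

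For the squared-difference bound, I would split each $F_{\Thetab,i}(\Thetab_1)-F_{\Thetab,i}(\Thetab_2)$ into three differences and use $\|a+b+c\|^2 \le 3(\|a\|^2+\|b\|^2+\|c\|^2)$ (or a weighted variant to isolate the $\sqrt{2}\beta^{-1}L_F$ term). The single-particle piece contributes at most $\beta^{-2}L_F^2\|\thetab_1^{(i)}-\thetab_2^{(i)}\|^2$. For the $\nabla K$ piece I would apply the Lipschitz constant $L_{\nabla K}$ and the inequality $(\sum_j a_j)^2 \le M\sum_j a_j^2$ to get an interaction term controlled by $\|\Thetab_1-\Thetab_2\|^2$. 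The trickiest piece is the $K\cdot F$ interaction: I would add and subtract $K(\thetab_1^{(i)}-\thetab_1^{(j)})F(\thetab_2^{(j)})$, then combine $|K|\le 1$ with $L_F$-Lipschitzness on one part and $L_K$-Lipschitzness plus the bound $\|F(\thetab_2^{(j)})\|\le L_F\|\thetab_2^{(j)}\|$ on the other. Here the key additional observation — and the main obstacle — is controlling the term $\|\thetab_2^{(j)}\|\cdot\|\thetab_1^{(i)}-\thetab_2^{(i)} - (\thetab_1^{(j)}-\thetab_2^{(j)})\|$ in expectation; this is where the assumption ``same settings as in the proof of Theorem~\ref{theonew:Bt2}'' is used, since under that coupling (all copies $\{\thetab_{\tau,2}^{(j)}\}$ identically distributed with a uniform second-moment bound from the continuous-time estimate) one can apply Cauchy-Schwarz and invoke $\mathbb{E}\|\thetab_{\tau,2}^{(j)}\|^2 \le H_\theta^2$ to pull this factor out as an absolute constant. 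Summing over $i$, the total bound collapses to $(\sqrt{2}\beta^{-1}L_F+l')\mathbb{E}\|\Thetab_1-\Thetab_2\|^2$ for a constant $l'$ depending on $L_K,L_{\nabla K},L_F,H_\theta$, completing the proof.
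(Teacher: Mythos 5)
Your overall architecture matches the paper's proof: blockwise expansion of $F_{\Thetab}$, termwise use of dissipativity for the $\beta^{-1}F$ piece, $|K|\le 1$ with $F(\mathbf{0})=\mathbf{0}$ for the kernel-weighted drift, and a linear-growth bound on $\nabla K$ for the interaction piece (the paper reads $\|\nabla K(u)\|\le \tfrac{2}{\eta^2}\|u\|$ off the explicit RBF form rather than symmetrizing, but your route gives the same $(\beta^{-1}m_F-m')\|\Thetab\|^2-\beta^{-1}Mb$ conclusion). The single-particle, $\nabla K$, and $K\cdot(F(\thetab_1^{(j)})-F(\thetab_2^{(j)}))$ differences in the second claim are also handled exactly as in the paper.

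The gap is in your treatment of the remaining piece, $\frac{1}{M}\sum_j\bigl[K(\thetab_1^{(i)}-\thetab_1^{(j)})-K(\thetab_2^{(i)}-\thetab_2^{(j)})\bigr]F(\thetab_1^{(j)})$. You propose to Lipschitz-bound the $K$-difference by $L_K\|\Delta_{ij}\|$ with $\Delta_{ij}=(\thetab_1^{(i)}-\thetab_2^{(i)})-(\thetab_1^{(j)}-\thetab_2^{(j)})$, bound $\|F\|$ by $L_F\|\thetab\|$, and then ``pull out'' the second moment of $\|\thetab\|$ by Cauchy--Schwarz. That step does not close: you need to control $\mathbb{E}\bigl[\|\Delta_{ij}\|^2\,\|\thetab_1^{(j)}\|^2\bigr]$, and since the two factors are dependent, Cauchy--Schwarz only yields $\sqrt{\mathbb{E}\|\Delta_{ij}\|^4}\,\sqrt{\mathbb{E}\|\thetab_1^{(j)}\|^4}$ — a fourth-moment bound that is neither available nor of the required form $C\,\mathbb{E}\|\Thetab_1-\Thetab_2\|^2$. (The analogous factorization in the proof of Theorem~\ref{theonew:chaosp} is legitimate only because there the two factors are constructed to be independent.) The paper avoids this entirely: under ``the same settings as in the proof of Theorem~\ref{theonew:Bt2}'' the particles within each system are \emph{literally identical} — all $\thetab_1^{(i)}$ equal and all $\thetab_2^{(i)}$ equal — not merely identically distributed as you state. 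Hence $K(\thetab_1^{(i)}-\thetab_1^{(j)})=K(\thetab_2^{(i)}-\thetab_2^{(j)})=K(\mathbf{0})$ and this term is exactly zero; no moment bound is needed. Replacing your Cauchy--Schwarz argument with this observation repairs the proof and in fact removes $H_\theta$ and $L_K$ from the constant $l'$, leaving $l'$ depending only on $L_F$ and $L_{\nabla K}$ as in the paper.
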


\begin{proof}[Proof of Lemma~\ref{lem:lip}]
	
We will bound $\langle F_{\Thetab}(\Thetab), \Thetab\rangle$ by noting that:
	\begin{align*}
	&\langle F_{\Thetab}(\Thetab), \Thetab\rangle \\
	=&\sum_{i}^{M}\left(\beta^{-1}F(\thetab^{(i)})\thetab^{(i)}+\frac{1}{M}\sum_{j}^{M}K(\thetab^{(i)}-\thetab^{(j)})F(\thetab^{(j)})\thetab^{(i)} -\frac{1}{M}\sum_{j}^{M}\nabla K(\thetab^{(i)}-\theta^{(j)})\thetab^{(i)}\right)
	\end{align*}
	
	Notice that:
	\begin{align*}
	\sum_{i}^{M}\beta^{-1}F(\thetab^{(i)})\thetab^{(i)}&\geq \beta^{-1} m_F\sum_{i}^{M}\|\thetab^{(i)}\|^2-\beta^{-1}Mb\\
	&= \beta^{-1}m_F\|\Thetab\|^2-\beta^{-1}Mb
	\end{align*}
	Furthermore, since it is assumed that $F(0)=\mathbf{0}$, we have:
	\begin{align*}
	&\sum_{i}^{M}\frac{1}{M}\sum_{j}^{M}K(\thetab^{(i)}-\thetab^{(j)})F(\thetab^{(j)})\thetab^{(i)} \\
	\geq &-\frac{1}{M}\sum_{i}^{M}\sum_{j}^{M}L_F\|\thetab^{(i)}\|\|\thetab^{(j)}\|\\
	\geq &-L_F\sum_{i=1}^{M}\|\thetab^{(i)}\|^2 = -L_F\|\Thetab\|^2
	\end{align*}
	In addition, since $\nabla K$ is an odd function, we have:
	\begin{align*}
	&\sum_{i}^{M}\frac{1}{M}\sum_{j}^{M}\nabla K(\thetab^{(i)}-\thetab^{(j)})\thetab^{(i)} \\
	\geq&- \sum_{i}^{M}\frac{1}{M}\sum_{j}^{M}\frac{2}{\eta^2}\|\thetab^{(i)}-\thetab^{(j)}\|\|\thetab^{(i)}\|\\
	\geq& -\frac{4}{\eta^2}\sum_{i}^{M}\|\thetab^{(i)}\|^2=-\frac{4}{\eta^2}\|\Thetab\|^2
	\end{align*}
	
	As a result, we arrive at the following result:
	\begin{align*}
	&\langle F_{\Thetab}(\Thetab), \Thetab\rangle
	\geq (\beta^{-1}m-L_F-\frac{4}{\eta^2})\|\Thetab\|^2-\beta^{-1}Mb~.
	\end{align*}
	
	Furthermore, for the other conclusion, we have:
	\begin{align*}
	&\mathbb{E}\|F_{\Thetab}(\Thetab_1)-F_{\Thetab}(\Thetab_2)\|^2=\mathbb{E}{\sum_i^{M}\left \|\omega^{1}_i+\omega^{2}_i+\omega^{3}_i+\omega^{4}_i\right \|^2}\\
	&\leq \mathbb{E}\sum_i^{M}\left(\|\omega^{1}_i\|+\|\omega^{2}_i\|+\|\omega^{3}_i\|+\|\omega^{4}_i\|\right)^2\\
	&\leq 4\mathbb{E}\sum_i^{M}\left(\|\omega^{1}_i\|^2+\|\omega^{2}_i\|^2+\|\omega^{3}_i\|^2+\|\omega^{4}_i\|^2\right)
	\end{align*}
	where
	\begin{align*}
	\sum_i^{M}\mathbb{E}\|\omega^{1}_i\|^2&=\sum_i^{M}\mathbb{E}\|\beta^{-1}F(\thetab_1^{(i)})-\beta^{-1}F(\thetab_2^{(i)})\|^2
	\leq \sum_i^{M}\beta^{-2}L_F^2\mathbb{E}\|\thetab_1^{(i)}-\thetab_2^{(i)}\|^2\\
	\sum_i^{M}\mathbb{E}\|\omega^{2}_i\|^2&=\frac{1}{M^2}\mathbb{E}\|\sum_{j}^{M}K(\thetab_1^{(i)}-\thetab_1^{(j)})F(\thetab_1^{(j)}) -\sum_{j}^{M} K(\thetab_2^{(i)}-\thetab_2^{(j)})F(\thetab_1^{(j)})\|^2\\
	&= \frac{1}{M}\sum_{j}^{M}\mathbb{E}\|F(\thetab_1^{(j)})(K(\thetab_1^{(i)}-\thetab_1^{(j)})-K(\thetab_2^{(i)}-\thetab_2^{(j)}))\|^2=0\\
	\sum_i^{M}\mathbb{E}\|\omega^{3}_i\|^2&=\sum_i^{M}\mathbb{E}\|\frac{1}{M}(\sum_{j}^{M}K(\thetab_2^{(i)}-\thetab_2^{(j)})F(\thetab_1^{(j)}) -\sum_{j}^{M} K(\thetab_2^{(i)}-\thetab_2^{(j)})F(\thetab_2^{(j)}))\|^2\\
	&\leq \sum_i^{M}L_F^2\mathbb{E}\|\thetab_1^{(i)}-\thetab_2^{(i)}\|^2\\
	\sum_i^{M}\mathbb{E}\|\omega^{4}_i\|^2&=\sum_i^{M}\mathbb{E}\|-\frac{1}{M}(\sum_{j}^{M}\nabla K(\thetab_1^{(i)}-\thetab_1^{(j)})-\sum_{j}^{M}\nabla K(\thetab_2^{(i)}-\thetab_2^{(j)}))\|^2\\
	&\leq \sum_i^{M}\mathbb{E}\frac{L_{\nabla K}^2}{M}(\sum_{j}^{M}\|\thetab_1^{(i)}-\thetab_2^{(i)}-(\thetab_1^{(j)}-\thetab_2^{(j)})\|^2)\\
	&\leq \sum_i^{M}\mathbb{E} L_{\nabla K}^2(2\|\thetab_1^{(i)}-\thetab_2^{(i)}\|^2+\frac{2}{M}\sum_{j}^{M}\|\thetab_1^{(j)}-\thetab_2^{(j)}\|^2)\\
	& \leq 4L_{\nabla K}^2\sum_i^{M}\mathbb{E}\|\thetab_1^{(i)}-\thetab_2^{(i)}\|^2
	\end{align*}
	Hence, we have 
	\begin{align*}
	    \mathbb{E}\|F_{\Thetab}(\Thetab_1)-F_{\Thetab}(\Thetab_2)\|^2\leq(\beta^{-2}L_F^2+L_F^2+4L_{\nabla K}^2)\mathbb{E}\|\Thetab_1-\Thetab_2\|^2
	\end{align*}


\end{proof}

Now it is ready to prove Theorem~\ref{theonew:num_error}. It is worth noting that with the assumption of $F(0)=\mathbf{0}$, the first bullet in Assumption~\ref{assnew:ass1} recovers the dissipative assumption as $\langle F(\thetab), \thetab\rangle \geq m_F\left\|\thetab\right\|^2$.
\begin{proof}
	We use Lemma C.5 in \cite{XuCZG:arxiv17} to verify that $F_{\Thetab}$ satisfies the assumptions in \cite{RaginskyRT:COLT17} by setting $\delta=\frac{a{\prime}}{B}$ with $a^{\prime}$ a positive constant and $B$ the size of the random set $\mathcal{I}$. 
	
	Let $\mu^{\Theta}_{k}:=\mathcal{L}(\Theta_{k})$ and $\rho^{\Thetab}_{\tau}:=\mathcal{L}({\Thetab}_{\tau})$.  We make some modifications to the proof of Lemma 3.6 in \cite{RaginskyRT:COLT17} and derive the following results.
	The relative entropy $D_{KL}(\mu^{\Theta}_{k}\|\rho^{\Thetab}_{kh})$ satisfies:
	\begin{align*}
	D_{KL}(\mu^{\Theta}_{k}\|\rho^{\Thetab}_{kh}) \leq (A_0\beta\frac{a{\prime}}{B}+A_1h)kh
	\end{align*}
	with 
	\begin{align*}
	A_0=&\left(2(\beta^{-2}L_F^2+L_F^2+4L_{\nabla K}^2)\left(a_2+2(1\vee \frac{1}{\beta^{-1}m_F-m^{\prime}}) \cdot(2a_1^2+\frac{Md}{\beta})\right)+a_1^2\right)\\
	A_1=&6(\beta^{-2}L_F^2+L_F^2+4L_{\nabla K}^2)(\beta A_0+Md)
	\end{align*}
	and $a_1,a_2$ are some positive constants.
	When $\beta$ is small enough such that the subtraction terms in the above bounds are positive, there exist some positive constants $a_3,a_4$ such that 
	\begin{align*}
	A_0\leq a_3\frac{Md}{\beta^3},  \text{ and } A_1\leq a_4\frac{Md}{\beta^4}	
	\end{align*}
	Similar to the proof of Lemma~\ref{lem:lip}, it is easy to verify that  there exists some positive constant $a_5$ such that $\langle F_{\Thetab}(\Thetab_1)-F_{\Thetab}(\Thetab_2), \Thetab_1-\Thetab_2\rangle \geq (\beta^{-1}m_F-a_5)\|\Thetab_1-\Thetab_2\|^2$. Note that when $\beta$ is small enough, \eqref{eqapp:extendP} satisfies the conditions of Proposition 4.2 in \cite{CattiauxGM:PTRF08}. Hence, there exits some positive constant $\mathcal{C}$ such that ${W}_1(\mu_k^{\Theta},\rho_{kh}^{\Thetab}) \leq \mathcal{C}\sqrt{D_{KL}(\mu^{\Theta}_{k}\|\rho^{\Thetab}_{kh})}~.$
	
	According to Corollary 4 and Lemma 8 in \cite{BolleyV:Ann05}, we can derive an explicit expression for $\mathcal{C}$ :
	\begin{align*}
	\mathcal{C}\leq a_6\beta^{-1}Md~,
	\end{align*}
	when $\beta$ is a small enough constant and $a_6$ is some positive constant.
	
	Applying Lemma~\ref{lem:W}, we have 
	\begin{align*}
	&{W}_1(\mu_k, \rho_{kh})\leq \frac{1}{\sqrt{M}}{W}_1(\mu_k^{\Theta},\rho_{kh}^{\Thetab}) \\
	\leq& a_6Md^{\frac{3}{2}}\beta^{-3}(a_3a^{\prime}\beta^2B^{-1}+a_4h)^{\frac{1}{2}}k^{\frac{1}{2}}h^{\frac{1}{2}}
	\end{align*}
	Setting $k=T$ completes the poof.
\end{proof}

\section{Proof of Theorem~\ref{theonew:adanum_error}}
\begin{proof}
	Our proof is based on the techniques in the proof of Lemma 3.6 in \cite{RaginskyRT:COLT17}. Firstly, adopting the same notation as in Section \ref{sec:protheonewnum_error}, we have the following update:
	\begin{align}\label{eqapp:numerror}
	\Theta_{k+1} = \Theta_{k} - \beta^{-1}G_{\mathcal{I}_k}^{\Theta}h_k + \sqrt{2\beta^{-1}h_k}\Xi_{k}~,
	\end{align}
	where $\Xi_{k}\sim \mathcal{N}(\mathbf{0}, \Ib_{Md \times Md})$ and $h_k=\frac{h_0}{k+1}$.
	We note the unbiasness of $G_{\mathcal{I}_k}^{\Theta}$, {\it i.e.}, $\mathbb{E}(G_{\mathcal{I}_k}^{\Theta})={F}_{\Thetab}(\Theta_k),~~\forall \Thetab \in \mathbb{R}^{Md}$, due to the way we choose the minibatch $\mathcal{I}_k$.
	We need to define $q(\tau)$, which will be used in the following proof:
	\begin{align*}
	q(\tau)=\{k\in \mathbb{R}|\sum_{i=0}^{k-1}h_i\leq \tau <\sum_{i=0}^{k}h_i\}~.
	\end{align*}
	Furthermore, define $\sum_{i=0}^{-1}h_i \triangleq 0$ and  $\sum_{i=0}^{0}h_i \triangleq h_0$ for the convenience of statement in the following.

	Now we focus on the following continuous-time interpolation of ${\Theta}_{k}$:
	\begin{align*}
	\underline{{\Thetab}}(\tau)=&{\Thetab}_{0}-\int_0^{\tau}\tilde{G}_{\mathcal{I}(s)}^{\Thetab}\left(\underline{{\Thetab}}(\sum_{i=0}^{q(s)-1}h_i)\right)\mathrm{d}s+\sqrt{\frac{2}{\beta}}\int_{0}^{\tau}\mathcal{W}_{s}^{(Md)},
	\end{align*}
	where $\mathcal{I}(s)\equiv \mathcal{I}_{k}$ for $\tau\in\left[\sum_{i=0}^{k-1}h_{i},\sum_{i=0}^{k}h_{i}\right)$, $\tilde{G}_{\mathcal{I}(s)}^{\Thetab}(\Thetab)\triangleq \frac{N}{B(s)}\sum_{q\in\mathcal{I}(s)}F_{(q)\Thetab}(\Thetab)$ and $B(s)$ is the size of the minibatch $\mathcal{I}(s)$. It is easily seen that for each $k$, $\underline{{\Thetab}}(\sum_{i=0}^{k-1}h_i)$ and ${\Theta}_{k}$ have the same probability law $\rho^{\Theta}_{k}$. Besides we need some similar settings in the proof of Theorem~\ref{theonew:Bt2} for $\mathcal{W}_{s}^{(Md)}$.
	Since $\underline{{\Thetab}}(\tau)$ is not a Markov process, we define the following Itô process which has the same one-time marginals as $\underline{{\Thetab}}(\tau)$
	\begin{align*}
	\Lambda(\tau)={\Theta}_{0}-\int_0^{\tau}\underline{G}_{s}\left(\Lambda(s)\right)\mathrm{d}s+\sqrt{\frac{2}{\beta}}\int_{0}^{\tau}\mathcal{W}_{s}^{(Md)}
	\end{align*}
	\begin{align*}
	\mbox{where } \underline{G}_{\tau}(x):=\mathbb{E}\left[\tilde{G}_{\mathcal{I}(\tau)}^{\Thetab}\left(\underline{{\Thetab}}(\sum_{i=0}^{q(\tau)-1}h_i)\right)|\underline{{\Thetab}}(\tau)=x\right]~.
	\end{align*}
	Let the probability laws $\Pb_{\Lambda}^{\tau}:=\mathcal{L}\left(\Lambda(s):0\leq s\leq \tau\right)$ and $\Pb_{\Thetab}^{\tau}:=\mathcal{L}\left(\Thetab(s):0\leq s\leq \tau\right)$. According to the proof of lemma 3.6 in \cite{RaginskyRT:COLT17}, we can derive a similar result for the relative entropy of $\Pb_{\Lambda}^{\tau}$ and $\Pb_{\Thetab}^{\tau}$:
	\begin{align*}
	&D_{KL}(\Pb_{\Lambda}^{\tau}\|\Pb_{\Thetab}^{\tau})=-\int \mathrm{d}\Pb_{\Lambda}^{\tau} \text{log}\frac{\mathrm{d}\Pb_{\Lambda}^{\tau}}{\mathrm{d}\Pb_{\Thetab}^{\tau} }	\nonumber\\
	=&\frac{\beta}{4}\int_{0}^{\tau}\mathbb{E}\|{F}_{\Thetab}(\Lambda(s))-\underline{G}_{s}\left(\Lambda(s)\right)\|^2\mathrm{d}s\\
	=&\frac{\beta}{4}\int_{0}^{\tau}\mathbb{E}\|{F}_{\Thetab}(\underline{{\Thetab}}(s))-\underline{G}_{s}\left(\underline{{\Thetab}}(s)\right)\|^2\mathrm{d}s~,
	\end{align*}
	where the last line follows because $\mathcal{L}(\underline{{\Thetab}}(s))=\mathcal{L}(\Lambda(s)),~\forall s$.

	In the following proof, we let $\tau=\sum_{i=0}^{k-1}h_i$ for some $k \in \mathbb{R}$.
	Now we can use the martingale property (conditional independence) of Itô integral to derive:
	{\begin{align}
		&D_{KL}(\Pb_{\Lambda}^{\sum_{i=0}^{k-1}h_{i}}\|\Pb_{\Thetab}^{\sum_{i=0}^{k-1}h_{i}}) \nonumber\\
		&=\frac{\beta}{4}\sum_{j=0}^{k-1}\int_{\sum_{i=0}^{j-1}h_{i}}^{\sum_{i=0}^{j}h_{i}}\mathbb{E}\|{F}_{\Thetab}(\underline{{\Thetab}}(s))-\underline{G}_{s}\left(\underline{{\Thetab}}(s)\right)\|^2\mathrm{d}s \nonumber\\
		&\leq \frac{\beta}{2}\sum_{j=0}^{k-1}\int_{\sum_{i=0}^{j-1}h_{i}}^{\sum_{i=0}^{j}h_{i}}\mathbb{E}\|{F}_{\Thetab}(\underline{{\Thetab}}(s))-{F}_{\Thetab}(\underline{{\Thetab}}(\sum_{i=0}^{q(s)-1}h_{i}))\|^2\mathrm{d}s \nonumber\\ 
		&~~~~+\frac{\beta}{2}\sum_{j=0}^{k-1}\int_{\sum_{i=0}^{j-1}h_{i}}^{\sum_{i=0}^{j}h_{i}}\mathbb{E}\left\|{F}_{\Thetab}(\underline{{\Thetab}}(\sum_{i=0}^{q(s)-1}h_{i})) -\tilde{G}_{\mathcal{I}(s)}^{\Thetab}\left(\underline{{\Thetab}}(\sum_{i=0}^{q(s)-1}h_i)\right)\right\|^2\mathrm{d}s \nonumber\\
		&\leq \frac{\beta L_{F_{\Thetab}}^2}{2}\sum_{j=0}^{k-1}\int_{\sum_{i=0}^{j-1}h_{i}}^{\sum_{i=0}^{j}h_{i}}\mathbb{E}\|\underline{{\Thetab}}(s)-\underline{{\Thetab}}(\sum_{i=0}^{q(s)-1}h_{i})\|^2\mathrm{d}s \label{eq:1st}\\ 
		&+\frac{\beta}{2}\sum_{j=0}^{k-1}\int_{\sum_{i=0}^{j-1}h_{i}}^{\sum_{i=0}^{j}h_{i}}\mathbb{E}\left\|{F}_{\Thetab}(\underline{{\Thetab}}(\sum_{i=0}^{q(s)-1}h_{i})) -\tilde{G}_{\mathcal{I}(s)}^{\Thetab}\left(\underline{{\Thetab}}(\sum_{i=0}^{q(s)-1}h_i)\right)\right\|^2\mathrm{d}s~, \label{eq:2nd}
		\end{align}}
        where $L_{F_{\Thetab}}^2\triangleq \beta^{-2}L_F^2+L_F^2+4L_{\nabla K}^2$.
	
	For the first part of \eqref{eq:1st}, consider some $s\in [{\sum_{i=0}^{j-1}h_{i}},{\sum_{i=0}^{j}h_{i}} )$. From the definitions, the following equation holds:
	\begin{align*}
	&\underline{{\Thetab}}(s)-\underline{{\Thetab}}({\sum_{i=0}^{j-1}h_{i}}) \\
	=&-(s-\sum_{i=0}^{j-1}h_{i})G_{\mathcal{I}_j}^{\Theta}+\sqrt{2/\beta}(\mathcal{W}_{s}^{(Md)}-\mathcal{W}_{\sum_{i=0}^{j-1}h_{i}}^{(Md)})\\
	=&-(s-\sum_{i=0}^{j-1}h_{i})G_{\mathcal{I}_j}^{\Theta}+
	(s-\sum_{i=0}^{j-1}h_{i})({F}_{\Thetab}({\Theta}_j)-G_{\mathcal{I}_j}^{\Theta})+\sqrt{2/\beta}(\mathcal{W}_{s}^{(Md)}-\mathcal{W}_{\sum_{i=0}^{j-1}h_{i}}^{(Md)})
	\end{align*}

	Applying results from Lemma 3.1 and 3.2 in \cite{RaginskyRT:COLT17}, and Lemma C.5 in \cite{XuCZG:arxiv17}, we have:
	\begin{align*}
	&\mathbb{E}\|\underline{{\Thetab}}(s)-\underline{{\Thetab}}({\sum_{i=0}^{j-1}h_{i}})\|^2 \\
	\leq& 3\frac{{h_0}^2}{(j+1)^2}\mathbb{E}\|G_{\mathcal{I}_j}^{\Theta}\|^2+3\frac{{h_0}^2}{(j+1)^2}\mathbb{E}\|{F}_{\Thetab}({\Theta}_j)-G_{\mathcal{I}_j}^{\Theta}\|^2 +\frac{6h_0Md}{\beta(j+1)}\\
	\leq& 12\frac{{h_0}^2}{(j+1)^2}\max_{0\leq j\leq k-1}(L_{F_\Thetab}^2\mathbb{E}\|{\Theta}_j\|^2+b_1)+\frac{6h_0Md}{\beta(j+1)}
	\end{align*}
	where $b_1$ is some positive constant.
	
	Consequently, the first part of \eqref{eq:1st} can be bounded as:
	\begin{align*}
	&\frac{\beta L_{F_{\Thetab}}^2}{2}\sum_{j=0}^{k-1}\int_{\sum_{i=0}^{j-1}h_{i}}^{\sum_{i=0}^{j}h_{i}}\mathbb{E}\|\underline{{\Thetab}}(s)-\underline{{\Thetab}}(\sum_{i=0}^{q(s)-1}h_{i})\|^2\mathrm{d}s  \\
	\leq& \frac{\beta L_{F_{\Thetab}}^2}{2}\sum_{j=0}^{k-1}\left[12\frac{{h_0}^3}{(j+1)^3}\max_{0\leq j\leq K-1}(L_{F_\Thetab}^2\mathbb{E}\|{\Theta}_j\|^2+b_1) +\frac{6{h_0}^2Md}{\beta(j+1)^2}\right]\\
	\leq& \pi^2\beta L_{F_{\Thetab}}^2{h_0}^3\max_{0\leq j\leq K-1}(L_{F_\Thetab}^2\mathbb{E}\|{\Theta}_j\|^2+b_1) +\frac{\pi^2L_{F_{\Thetab}}^2{h_0}^2Md}{2}~,
	\end{align*}
	where the last inequality follows from the fact that
	\begin{align*}
	\sum_{j=0}^{k-1}\frac{1}{(j+1)^3}\leq\sum_{j=0}^{k-1}\frac{1}{(j+1)^2}\leq\sum_{j=0}^{\infty}\frac{1}{(j+1)^2}=\frac{\pi^2}{6}~.
	\end{align*}
	
	Now we bound the second part \eqref{eq:2nd}. According to Lemma C.5 in \cite{XuCZG:arxiv17}, we have:
	\begin{align*}
	&\frac{\beta}{2}\sum_{j=0}^{k-1}\int_{\sum_{i=0}^{j-1}h_{i}}^{\sum_{i=0}^{j}h_{i}}\mathbb{E}\left\|{F}_{\Thetab}(\underline{{\Thetab}}(\sum_{i=0}^{q(s)-1}h_{i})) -\tilde{G}_{\mathcal{I}(s)}^{\Thetab}\left(\underline{{\Thetab}}(\sum_{i=0}^{q(s)-1}h_i)\right)\right\|^2\mathrm{d}s\\
	=&\sum_{j=0}^{k-1}\frac{\beta h_0}{2(j+1)}\mathbb{E}\|{F}_{\Thetab}({\Theta}_j)-G_{\mathcal{I}_j}^{\Theta}\|^2\\
	\leq& \beta h_0\max_{0\leq j\leq k-1}(L_{F_\Thetab}^2\mathbb{E}\|{\Theta}_j\|^2+b_1) \cdot \left(\frac{4}{B_0}+\sum_{j=1}^{k-1}\frac{4}{(j+1)(B_0+\log^{\frac{100}{99}}(j+1))}\right)\\
	\leq& \beta h_0\max_{0\leq j\leq k-1}(L_{F_\Thetab}^2\mathbb{E}\|{\Theta}_j\|^2+b_1) \cdot \left(\frac{4}{B_0}+\sum_{j=1}^{k-1}\frac{4}{(j+1)\log^{\frac{100}{99}}(j+1)}\right)\\
	\leq& (b_2+\frac{4}{B_0})\beta h_0\max_{0\leq j\leq k-1}(L_{F_\Thetab}^2\mathbb{E}\|{\Theta}_j\|^2+b_1)~,
	\end{align*}
	where the last inequality follows from the fact that when $r>1$,  
	\begin{align*}
	&\sum_{j=1}^{k-1}\frac{4}{(j+1)\log^r(j+1)} \\
	\leq& \sum_{j=1}^{\infty}\frac{4}{(j+1)\log^r(j+1)}\leq \frac{4\log^{1-r}2}{r-1}~.
	\end{align*}

	Denote $\mu^{\Theta}_{k}:=\mathcal{L}({\Theta}_{k})$ and $\rho^{\Thetab}_{\tau}:=\mathcal{L}({\Thetab}_{\tau})$. Due to the data-processing inequality for the relative entropy, we have
	{\small\begin{align*}
		&D_{KL}(\mu^{\Theta}_{k}\|\rho^{\Thetab}_{\sum_{i=0}^{k-1}h_{i}})\leq D_{KL}(\Pb_{\Lambda}^{\sum_{i=0}^{k-1}h_{i}}\|\Pb_{\Thetab}^{\sum_{i=0}^{k-1}h_{i}})\\
		&\leq\pi^2\beta L_{F_{\Thetab}}^2{h_0}^3\max_{0\leq j\leq k-1}(L_{F_\Thetab}^2\mathbb{E}\|{\Theta}_j\|^2+b_1)+\frac{\pi^2L_{F_{\Thetab}}^2{h_0}^2Md}{2} +(b_2+\frac{4}{B_0})\beta h_0\max_{0\leq j\leq k-1}(L_{F_\Thetab}^2\mathbb{E}\|\tilde{\Thetab}_j\|^2+b_1)\\
		&\leq (\pi^2\beta L_{F_{\Thetab}}^2{h_0}^3+b_2\beta h_0+\frac{4}{B_0}\beta h_0)\max_{0\leq j\leq k-1}(L_{F_\Thetab}^2\mathbb{E}\|{\Theta}_j\|^2+b_1) +\frac{\pi^2L_{F_{\Thetab}}^2{h_0}^2Md}{2}~.
		\end{align*}}
	 Theorem~\ref{app:use3} has provided a uniform bound to $\max_{0\leq j\leq k-1}(L_{F_\Thetab}^2\mathbb{E}\|{\Theta}_j\|^2+b_1)$. Hence it can be concluded that $D_{KL}(\Pb_{\Lambda}^{\sum_{i=0}^{k-1}h_{i}}\|\Pb_{\Thetab}^{\sum_{i=0}^{k-1}h_{i}})$ would not increase w.r.t.\! $k$. This is a nice property that the fixed-step-size SPOS does not endow. Since $L_{F_{\Thetab}}^2\triangleq \beta^{-2}L_F^2+L_F^2+4L_{\nabla K}^2$, it is easy to verify that when $\beta$ is small enough, there exists some positive constants $b_3,b_4,b_5 $ and $b_6$ such that:
	\begin{align*}
	&D_{KL}(\mu^{\Theta}_{k}\|\rho^{\Thetab}_{\sum_{i=0}^{k-1}h_{i}}) \\
	\leq &(\pi^2\beta L_{F_{\Thetab}}^2{h_0}^3+b_2\beta h_0+\frac{4\beta h_0}{B_0}) \times\max_{0\leq j\leq K-1}(L_{F_\Thetab}^2\mathbb{E}\|\tilde{\Thetab}_j\|^2+b_1)+\frac{\pi^2L_{F_{\Thetab}}^2{h_0}^2Md}{2}\\
	\leq &(b_3h_0^3+\frac{b_4\beta^3h_0}{B_0}+b_5h_0^2\beta^2)\frac{Md}{\beta^4}~.
	\end{align*}
	
	Similar to the proof of Theorem~\ref{theonew:num_error}, we can bound the $W_1(\mu^{\Theta}_{k}\|\rho^{\Thetab}_{\sum_{i=0}^{k-1}h_{i}})$ term with Corollary 4, Lemma 8 in \cite{BolleyV:Ann05} and Proposition 4.2 in \cite{CattiauxGM:PTRF08}. Specifically, when $\beta$ is small enough, there exist some positive constant $a_6$ such that:
	\begin{align*}
	&W_1(\mu^{\Theta}_{k}\|\rho^{\Thetab}_{\sum_{i=0}^{k-1}h_{i}}) \leq a_6 (\frac{Md}{\beta})\sqrt{D_{KL}(\mu^{\Theta}_{k}\|\rho^{\Thetab}_{\sum_{i=0}^{k-1}h_{i}})}\\
	\leq& a_6 \beta^{-3}M^{\frac{3}{2}}d^{\frac{3}{2}}(b_3h_0^3+\frac{b_4\beta^3h_0}{B_0}+b_5h_0^2\beta^2)^{\frac{1}{2}}~.
	\end{align*}
	According to Lemma~\ref{lem:W}, we have
	\begin{align*}
	&{W}_1(\mu_k, \rho_{kh})\leq \frac{1}{\sqrt{M}}W_1(\mu^{\Theta}_{k}\|\rho^{\Thetab}_{\sum_{i=0}^{k-1}h_{i}}) \\
	=&a_6 \beta^{-3}Md^{\frac{3}{2}}(b_3h_0^3+\frac{b_4\beta^3h_0}{B_0}+b_5h_0^2\beta^2)^{\frac{1}{2}}
	\end{align*}
	Setting $k=T$ finishes the proof.
\end{proof}

\section{Proof of Theorems~\ref{rem:rem2_} and \ref{theonew:chaospnew}}
\begin{proof}[Proof for Theorem~\ref{rem:rem2_}]
    The proof is by direct calculation:
	\begin{align*}
	&\mathrm{d}\left(\thetab_{\tau}^{(i)} - {\thetab}_{\tau}^{(j)}\right) =\\ 
	&\frac{1}{M}\sum_q^{M}\left[\nabla K(\thetab_{\tau}^{(i)}- \thetab_{\tau}^{(q)}) - \nabla K(\thetab_{\tau}^{(j)}- \thetab_{\tau}^{(q)})\right]\mathrm{d}\tau \\ &-\frac{1}{M}\sum_q^{M}\left(F(\thetab_{\tau}^{(q)})K(\thetab_{\tau}^{(i)}-\thetab_{\tau}^{(q)}) - F(\thetab_{\tau}^{(q)})K(\thetab_{\tau}^{(j)}-\thetab_{\tau}^{(q)}))\right)\mathrm{d}\tau  \nonumber
	\end{align*}
	\begin{align*}
	\Rightarrow  &\mathrm{d}\left(\mathbb{E}\sum_{i j}^{M}\left\|\thetab_{\tau}^{(i)} - {\thetab}_{\tau}^{(j)}\right\|^2\right) =\\
	&\mathbb{E}\sum_{i j}^{M} \frac{2}{M}\sum_q^{M}\left[\nabla K(\thetab_{\tau}^{(i)}- \thetab_{\tau}^{(q)}) - \nabla K(\thetab_{\tau}^{(j)}- \thetab_{\tau}^{(q)})\right] \times \left(\thetab_{\tau}^{(i)} - {\thetab}_{\tau}^{(j)}\right)\mathrm{d}\tau \nonumber\\
	&-\mathbb{E}\sum_{i j}^{M} \frac{2}{M}\sum_q^{M}\left(F(\thetab_{\tau}^{(q)})K(\thetab_{\tau}^{(i)}-\thetab_{\tau}^{(q)}) - F(\thetab_{\tau}^{(q)})K(\thetab_{\tau}^{(j)}-\thetab_{\tau}^{(q)}))\right)\left(\thetab_{\tau}^{(i)} - {\thetab}_{\tau}^{(j)}\right)\mathrm{d}\tau \nonumber\\
	&\leq -2m_K \mathbb{E}\sum_{i j}^{M}\left\|\thetab_{\tau}^{(i)} - {\thetab}_{\tau}^{(j)}\right\|^2 \mathrm{d}\tau +2H_F L_K\mathbb{E}\sum_{i j}^{M}\left\|\thetab_{\tau}^{(i)} - {\thetab}_{\tau}^{(j)}\right\|^2\mathrm{d}\tau~, \nonumber\\
	\end{align*}
	where $H_F$ is the maximum value of $\|F(\theta)\|$ on the bounded space.
	Denote $z(\tau)= \mathbb{E}\sum_{i j}^{M}\left\|\thetab_{\tau}^{(i)} - {\thetab}_{\tau}^{(j)}\right\|^2$. We have
	\begin{align}\label{eq:zzsvgd}
	z(\tau)^{\prime}\leq -(2m_K-2H_FL_K)z(\tau)	
	\end{align}
	Applying Gronwall Lemma on \eqref{eq:zzsvgd} finishes the proof.
\end{proof}

\begin{proof}[Proof of Theorem~\ref{theonew:chaospnew}]


For the SPOS, we have
	\begin{align}
	&\mathrm{d}\left(\thetab_{\tau}^{(i)} - {\thetab}_{\tau}^{(j)}\right) = -\beta^{-1}\left(F(\thetab_{\tau}^{(i)}) - F({\thetab}_{\tau}^{(j)})\right)\mathrm{d}\tau\nonumber\\
	&+\frac{1}{M}\sum_q^{M}\left[\nabla K(\thetab_{\tau}^{(i)}- \thetab_{\tau}^{(q)}) - \nabla K(\thetab_{\tau}^{(j)}- \thetab_{\tau}^{(q)})\right]\mathrm{d}{\tau} \nonumber\\
	&-\frac{1}{M}\sum_q^{M}\left(F(\thetab_{\tau}^{(q)})K(\thetab_{\tau}^{(i)}-\thetab_{\tau}^{(q)}) - F(\thetab_{\tau}^{(q)})K(\thetab_{\tau}^{(j)}-\thetab_{\tau}^{(q)}))\right)\mathrm{d}{\tau}  \nonumber\\
	&+\sqrt{\frac{2}{\beta}}(\mathrm{d}\mathcal{W}_{\tau}^{(i)}-\mathrm{d}\mathcal{W}_{\tau}^{(j)})\nonumber
	\end{align}
	Hence we have 
	\begin{align}
	&\Rightarrow  \mathrm{d}\left(\mathbb{E}\sum_{i j}^{M}\left\|\thetab_{\tau}^{(i)} - {\thetab}_{\tau}^{(j)}\right\|^2\right) \\ =&-\mathbb{E}2 \sum_{i j}^{M}\beta^{-1}\left(F(\thetab_{\tau}^{(i)}) - F({\thetab}_{\tau}^{(j)})\right)\left(\thetab_{\tau}^{(i)} - {\thetab}_{\tau}^{(j)}\right)\mathrm{d}{\tau}     \nonumber \\
	&+\mathbb{E}\sum_{i j}^{M} \frac{2}{M}\sum_q^{M}\left[\nabla K(\thetab_{\tau}^{(i)}- \thetab_{\tau}^{(q)}) - \nabla K(\thetab_{\tau}^{(j)}- \thetab_{\tau}^{(q)})\right] \left(\thetab_{\tau}^{(i)} - {\thetab}_{\tau}^{(j)}\right)\mathrm{d}{\tau}\nonumber\\
	&-\mathbb{E}\sum_{i j}^{M} \frac{2}{M}\sum_q^{M}\left(F(\thetab_{\tau}^{(q)})K(\thetab_{\tau}^{(i)}-\thetab_{\tau}^{(q)}) - F(\thetab_{\tau}^{(q)})K(\thetab_{\tau}^{(j)}-\thetab_{\tau}^{(q)}))\right)\left(\thetab_{\tau}^{(i)} - {\thetab}_{\tau}^{(j)}\right)\mathrm{d}{\tau} \nonumber\\
	&+\mathbb{E}2\sum_{i j}^{M} \sqrt{\frac{2}{\beta}}(\mathrm{d}\mathcal{W}_{\tau}^{(i)}-\mathrm{d}\mathcal{W}_{\tau}^{(j)})\left(\thetab_{\tau}^{(i)} - {\thetab}_{\tau}^{(j)}\right) \nonumber\\
	&\leq -2\beta^{-1}m_F\mathbb{E}\sum_{i j}^{M}\left\|\thetab_{\tau}^{(i)} - {\thetab}_{\tau}^{(j)}\right\|^2\mathrm{d}{\tau} \nonumber\\
	&-2m_K \mathbb{E}\sum_{i j}^{M}\left\|\thetab_{\tau}^{(i)} - {\thetab}_{\tau}^{(j)}\right\|^2 \mathrm{d}{\tau} +2H_FL_K\mathbb{E}\sum_{i j}^{M}\left\|\thetab_{\tau}^{(i)} - {\thetab}_{\tau}^{(j)}\right\|^2\mathrm{d}{\tau} \nonumber\\
	& +2\sqrt{\frac{2}{\beta}} \left(\mathbb{E}\sum_{i j}^{M}(\mathrm{d}\mathcal{W}_{\tau}^{(i)}-\mathrm{d}\mathcal{W}_{\tau}^{(j)})^2\right)^{1/2}\left(\mathbb{E}\sum_{i j}^{M}\left\|\thetab_{\tau}^{(i)} - {\thetab}_{\tau}^{(j)}\right\|^2\right)^{1/2}~. \nonumber
	\end{align}
	Denote $z(\tau)= \mathbb{E}\sum_{i j}^{M}\left\|\thetab_{\tau}^{(i)} - {\thetab}_{\tau}^{(j)}\right\|^2$. We have
	\begin{align}\label{eq:zz}
	z(\tau)^{\prime}\leq -(2\beta^{-1}m_F+2m_K-2H_F L_K)z(\tau)+4M\sqrt{\frac{d}{\beta}z(\tau)}
	\end{align}
	Applying Gronwall Lemma on \eqref{eq:zz} finished the proof.
\end{proof}
Based on the bound, we can see that the particles in SPOS will not converge to one point, overcoming the pitfall of SVGD.

\section{The uniform bounds on SPOS}

Following the formulations in Section \ref{sec:protheonewnum_error}, we will derive the following theorem about the uniform bound on each particle in SPOS.
\begin{theorem} \label{app:use3}
    For the $\Theta_k$ in \eqref{eqapp:numerror1}, we have 
    \begin{align*}
    E\|\Theta_{k}\|^2\leq M\gamma_0+(1\vee\frac{1}{m^\prime})\frac{2Md}{\beta}
    \end{align*}
\end{theorem}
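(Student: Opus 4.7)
The plan is to establish the bound inductively on $k$, using the one-step moment recursion derived from the SPOS update \eqref{eqapp:numerror1} together with the dissipativity of $F_{\Thetab}$ proved in Lemma~\ref{lem:lip}.

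For the base case $k=0$, Assumption~3.2 combined with Jensen's inequality gives $\mathbb{E}_{\nu_0}\|\thetab\|^2 \leq \log\mathbb{E}_{\nu_0}e^{\|\thetab\|^2} = \gamma_0$. Since each coordinate block $\theta_0^{(i)}$ of $\Theta_0$ is drawn from $\nu_0$, we get $\mathbb{E}\|\Theta_0\|^2 = \sum_{i=1}^M \mathbb{E}\|\theta_0^{(i)}\|^2 \leq M\gamma_0$, which is dominated by the claimed bound.

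For the inductive step, I would square both sides of the update, take expectation, and exploit the independence and zero mean of $\Xi_k$ together with the unbiasedness $\mathbb{E}[G_{\mathcal{I}_k}^{\Thetab}\mid \Theta_k] = F_{\Thetab}(\Theta_k)$ to obtain
\begin{align*}
\mathbb{E}\|\Theta_{k+1}\|^2
 = \mathbb{E}\|\Theta_k\|^2 - 2h_k\mathbb{E}\langle F_{\Thetab}(\Theta_k),\Theta_k\rangle + h_k^2 \mathbb{E}\|G_{\mathcal{I}_k}^{\Thetab}\|^2 + 2\beta^{-1}h_k Md.
\end{align*}
By Lemma~\ref{lem:lip} (taking $\beta$ small enough so the effective dissipativity constant, which we identify with $m'$, is strictly positive, and absorbing the constant $\beta^{-1}Mb$ into the later slack), the inner-product term is lower-bounded by $m' \mathbb{E}\|\Theta_k\|^2$. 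The Lipschitzness of $F_{\Thetab}$ plus the variance bound on the minibatch estimator (Lemma~C.5 of \cite{XuCZG:arxiv17}, already invoked in the proof of Theorem~\ref{theonew:adanum_error}) give $\mathbb{E}\|G_{\mathcal{I}_k}^{\Thetab}\|^2 = O(1 + \mathbb{E}\|\Theta_k\|^2)$; hence for $h_k$ small enough the $h_k^2$ term is absorbed into the linear contraction, yielding the clean recursion
\begin{align*}
\mathbb{E}\|\Theta_{k+1}\|^2 \leq (1 - m' h_k)\mathbb{E}\|\Theta_k\|^2 + 2\beta^{-1}h_k Md.
\end{align*}

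To close the induction, substitute the hypothesis $\mathbb{E}\|\Theta_k\|^2 \leq M\gamma_0 + (1\vee \tfrac{1}{m'})\tfrac{2Md}{\beta}$ into the recursion and verify that the resulting upper bound is preserved. The preservation condition reduces to $\beta^{-1}Md \leq m' \bigl(M\gamma_0 + (1\vee \tfrac{1}{m'})\tfrac{2Md}{\beta}\bigr)$, which is straightforward in both regimes $m'\leq 1$ (where $1\vee \tfrac{1}{m'}=\tfrac{1}{m'}$ and the right-hand side is at least $\tfrac{2Md}{\beta}$) and $m'>1$ (where the right-hand side is at least $\tfrac{2m'Md}{\beta}\geq \beta^{-1}Md$).

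The main obstacle will be handling the $h_k^2\mathbb{E}\|G_{\mathcal{I}_k}^{\Thetab}\|^2$ term, since the minibatch-induced variance scales with $\Theta_k$ as well, and one must ensure the stepsize restriction used here is compatible with (indeed implied by) those already imposed in Theorems~\ref{theonew:num_error}--\ref{theonew:adanum_error}; a careful choice of $m'$ slightly smaller than the raw dissipativity constant from Lemma~\ref{lem:lip} is what leaves enough slack to absorb the quadratic-in-$h_k$ perturbations without inflating the uniform bound.
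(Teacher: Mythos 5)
Your proposal is correct and follows essentially the same route as the paper: a one-step second-moment recursion from the update \eqref{eqapp:numerror1}, using the zero mean of $\Xi_k$ and unbiasedness of the minibatch gradient, the dissipativity and Lipschitz/variance bounds from Lemma~\ref{lem:lip} (and the gradient-norm bound) to obtain a contraction of the form $(1-2h_km'+O(h_k^2))\mathbb{E}\|\Theta_k\|^2 + 2\beta^{-1}h_kMd$, and then induction under the stepsize restriction $h_0 \lesssim m'/L'$ to show the stated bound is preserved. The only cosmetic difference is that the paper splits $\|\Theta_k - G_{\mathcal{I}_k}^{\Theta}h_k\|^2$ into a full-gradient part plus a variance part before bounding, whereas you bound $\mathbb{E}\|G_{\mathcal{I}_k}^{\Theta}\|^2$ directly; the resulting recursion and conclusion are the same.
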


First we need to prove the following lemma.
\begin{lemma}\label{big_gradient_bound}
For the $F_{\Thetab}$ in the \eqref{eqapp:extendP}and $\Theta_k$ in the \eqref{eqapp:numerror1}, we have the following result
    \begin{align}
        \|F_{\Thetab}(\Theta_k)\|^2 \leq (3\beta^{-2}L_F^2+3L_F^2+48/{\eta^4})\|\Theta_k\|^2
    \end{align}
\end{lemma}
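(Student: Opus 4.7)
The plan is to expand $\|F_{\Thetab}(\Theta_k)\|^2 = \sum_{i=1}^M \|v_i\|^2$, where $v_i$ denotes the $i$-th block of $F_{\Thetab}(\Theta_k)$, split each $v_i$ into its three summands, and bound the resulting pieces individually using the Lipschitz/boundedness assumptions on $F$ and $K$ together with the explicit Gaussian form of $K$. Concretely, I will apply the elementary inequality $\|a+b+c\|^2 \leq 3(\|a\|^2+\|b\|^2+\|c\|^2)$ to each block, which reduces the problem to bounding three separate sums.

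For the ``drift'' term $\beta^{-1}F(\thetab^{(i)})$, the identity $F(\mathbf{0})=\mathbf{0}$ from Assumption~\ref{assnew:ass1} together with $L_F$-Lipschitz continuity gives $\|\beta^{-1}F(\thetab^{(i)})\|^2 \leq \beta^{-2}L_F^2 \|\thetab^{(i)}\|^2$, so summing over $i$ yields $\beta^{-2}L_F^2 \|\Theta_k\|^2$. For the ``attractive'' term $\frac{1}{M}\sum_j K(\thetab^{(i)}-\thetab^{(j)})F(\thetab^{(j)})$, I use $|K|\leq 1$ (since $K(x)=\exp(-\|x\|^2/\eta^2)$) and Jensen's inequality to get $\|\cdot\|^2 \leq \frac{1}{M}\sum_j \|F(\thetab^{(j)})\|^2 \leq \frac{L_F^2}{M}\sum_j \|\thetab^{(j)}\|^2$; summing over $i$ collapses to $L_F^2 \|\Theta_k\|^2$. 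For the ``repulsive'' term involving $\nabla K$, the explicit form $\nabla K(x) = -\frac{2}{\eta^2}\exp(-\|x\|^2/\eta^2)\,x$ yields $\|\nabla K(x)\| \leq \frac{2}{\eta^2}\|x\|$, hence by Jensen again
\[
\Big\|\tfrac{1}{M}\sum_j \nabla K(\thetab^{(i)}-\thetab^{(j)})\Big\|^2 \leq \tfrac{4}{M\eta^4}\sum_j \|\thetab^{(i)}-\thetab^{(j)}\|^2 \leq \tfrac{8}{M\eta^4}\sum_j\!\big(\|\thetab^{(i)}\|^2 + \|\thetab^{(j)}\|^2\big),
\]
which sums over $i$ to $\frac{16}{\eta^4}\|\Theta_k\|^2$.

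Combining the three sums with the factor of $3$ from the $\|a+b+c\|^2$ inequality produces the coefficient $3\beta^{-2}L_F^2 + 3L_F^2 + \frac{48}{\eta^4}$ in front of $\|\Theta_k\|^2$, matching the claim. There is no real obstacle here; the only thing to be careful about is the bookkeeping of constants (in particular the factor of $2$ arising from $\|\thetab^{(i)}-\thetab^{(j)}\|^2 \leq 2\|\thetab^{(i)}\|^2 + 2\|\thetab^{(j)}\|^2$, which after summation over both $i$ and $j$ produces the $\frac{48}{\eta^4} = 3\cdot\frac{16}{\eta^4}$ coefficient), and noting that the bound is deterministic in $\Theta_k$, so applying it pointwise suffices — no expectation-level argument is needed at this stage. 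The lemma then feeds into Theorem~\ref{app:use3} by supplying the linear-in-$\|\Theta_k\|^2$ control on the drift norm that drives the standard one-step dissipativity recursion.
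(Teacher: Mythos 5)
Your proposal is correct and follows essentially the same route as the paper's proof: decompose each block of $F_{\Thetab}(\Theta_k)$ into its three summands, apply $\|a+b+c\|^2\leq 3(\|a\|^2+\|b\|^2+\|c\|^2)$, and bound each piece via $F(\mathbf{0})=\mathbf{0}$ with $L_F$-Lipschitzness, $K\leq 1$, and $\|\nabla K(x)\|\leq \tfrac{2}{\eta^2}\|x\|$, arriving at the identical constant $3\beta^{-2}L_F^2+3L_F^2+48/\eta^4$. The only cosmetic difference is that you apply Jensen directly at the level of squared norms where the paper first uses the triangle inequality and then squares; the bookkeeping is equivalent.
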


\proof[Proof of Lemma~\ref{big_gradient_bound}]
\begin{align*}
    \|F_{\Thetab}(\Theta_k)\|^2=&\sum_i^{M}\left \|\omega^{1}_i+\omega^{2}_i+\omega^{3}_i\right \|^2\\
    &\leq\sum_i^{M}\left(\|\omega^{1}_i\|+\|\omega^{2}_i\|+\|\omega^{3}_i\|\right)^2
    \leq 3\sum_i^{M} \left( \|\omega^{1}_i\|^2+\|\omega^{2}_i\|^2+\|\omega^{3}_i\|^2\right)
\end{align*}
where
	\begin{align*}
	\|\omega^{1}_i\|&=\|\beta^{-1}F(\thetab_k^{(i)})\|
	\leq \beta^{-1}L_F\|\thetab_k^{(i)}\|\\
	\|\omega^{2}_i\|&=\|\frac{1}{M}\sum_{j}^{M}K(\thetab_k^{(i)}-\thetab_k^{(j)})F(\thetab_k^{(j)})\|\\
	&\leq L_F\frac{1}{M}\sum_{j}^{M}\|\thetab_k^{(j)}\|\\
	\|\omega^{3}_i\|&=\|-\frac{1}{M}\sum_{j}^{M}\nabla K(\thetab_k^{(i)}-\thetab_k^{(j)})\|\\
	&\leq \frac{{2}/{\eta^2}}{M}\sum_{j}^{M}\|\thetab_k^{(i)}-\thetab_k^{(j)}\|\\
	&\leq \frac{2}{\eta^2}(\|\thetab_k^{(i)}\|+\frac{1}{M}\sum_{j}^{M}\|\thetab_k^{(j)}\|)\\
	\end{align*}
Substituting the above bounds into $F_{\Thetab}(\Theta_k)$, it is easy to verify that  
\begin{align*}
\|F_{\Thetab}(\Theta_k)\|^2 \leq &3\sum_i^{M}\left(  \beta^{-2}L_F^2 \|\thetab_k^{(i)}\|^2 +\frac{L_F^2}{M}\sum_j^M \|\thetab_k^{(j)}\|^2+ 2({2}/{\eta^2})^2\|\thetab_k^{(i)}\|^2+\frac{2({2}/{\eta^2})^2}{M}\sum_j^M \|\thetab_k^{(j)}\|^2\right)\\
&\leq (3\beta^{-2}L_F^2+3L_F^2+48/{\eta^4})\|\Theta_k\|^2
\end{align*}

With the Lemma \ref{lem:lip} and \ref{big_gradient_bound}, we can now derive the the uniform bound on each particle in SPOS. Our proof is based on the proof of Lemma 3.2 in \cite{RaginskyRT:COLT17}

\begin{proof}[Proof of Theorem~\ref{app:use3}]
From \eqref{eqapp:numerror1}, it follows that 
    \begin{align*}
          E\|\Theta_{k+1}\|^2=&E\|\Theta_k-G_{\mathcal{I}_k}^{\Theta}h_k\|^2+\sqrt{\frac{8h_k}{\beta}}E\langle\Theta_k-G_{\mathcal{I}_k}^{\Theta}h_k,\Xi_{k}\rangle+\frac{2h_k}{\beta}E\|\Xi_{k}\|^2 \\
          =&E\|\Theta_k-G_{\mathcal{I}_k}^{\Theta}h_k\|^2+\frac{2h_kMd}{\beta}
    \end{align*}
    where the second step uses independence of $\Theta_k-G_{\mathcal{I}_k}^{\Theta}h_k$ and $\Xi_{k}$,the unbiasedness property that $E[G_{\mathcal{I}_k}^{\Theta}]=F_{\Thetab}(\Theta_k)$ and $E[\Xi_{k}]=0$
    \begin{align}
        E\|\Theta_k-G_{\mathcal{I}_k}^{\Theta}h_k\|^2&=
        E\|\Theta_k-F_{\Thetab}(\Theta_k)h_k\|^2
        +2h_kE\langle \Theta_k-F_{\Thetab}(\Theta_k)h_k,F_{\Thetab}(\Theta_k)-G_{\mathcal{I}_k}^{\Theta} \rangle +h_k^2E\|F_{\Thetab}(\Theta_k)-G_{\mathcal{I}_k}^{\Theta} \|^2 \nonumber \\ 
        &=E\|\Theta_k-F_{\Thetab}(\Theta_k)h_k\|^2+h_k^2E\|F_{\Thetab}(\Theta_k)-G_{\mathcal{I}_k}^{\Theta} \|^2 \label{decom:lem2}
    \end{align}

The first term in \eqref{decom:lem2} can estimated as 
\begin{align*}
    E\|\Theta_k-F_{\Thetab}(\Theta_k)h_k\|^2&=E\|\Theta_k\|^2-2h_kE\langle \Theta_k,F_{\Thetab}(\Theta_k)\rangle+h_k^2E\|F_{\Thetab}(\Theta_k)\|^2 \\
    &\leq E\|\Theta_k\|^2 + 2h_k(-(\beta^{-1}m-L_F-\frac{4}{\eta^2})E\|\Theta_k\|^2)+h_k^2(3\beta^{-2}L_F^2+3L_F^2+48/{\eta^4})E\|\Theta_k\|^2\\
    &\leq (1-2h_km_F^\prime+h_k^2L^\prime)E\|\Theta_k\|^2
\end{align*}
where $m^\prime\triangleq\beta^{-1}m_F-L_F-\frac{4}{\eta^2}$ and $L^\prime\triangleq 3\beta^{-2}L_F^2+3L_F^2+48/{\eta^4}$.

Following the Lemma C.5 from \cite{XuCZG:arxiv17} and some modifications (the settings are a bit different,but the results are the same), we could estimate the the second term in \eqref{decom:lem2} as 
\begin{align}
    E\|F_{\Thetab}(\Theta_k)-G_{\mathcal{I}_k}^{\Theta} \|^2 \leq \frac{2(N-B)}{B(N-1)}L^\prime E\|\Theta_k\|^2 \leq 2L^\prime E\|\Theta_k\|^2
\end{align}

Now we can derive that 
\begin{align*}
    E\|\Theta_{k+1}\|^2 \leq (1-2h_km^\prime+3h_k^2L^\prime)E\|\Theta_{k}\|^2+\frac{2h_kMd}{\beta}
\end{align*}
Fix some $0< h_0 \leq 1 \wedge \frac{m^\prime}{3L^\prime}$, we will show that $\forall k$ 
\begin{align}\label{eq:discrbound}
    E\|\Theta_{k}\|^2\leq E\|\Theta_{0}\|^2+(1\vee\frac{1}{m^\prime})\frac{2Md}{\beta}=M\gamma_0+(1\vee\frac{1}{m^\prime})\frac{2Md}{\beta}
\end{align}
First, it is easy to see that $(1-2h_km^\prime+3h_k^2L^\prime)$ increases with the decrease of $h_k$. Suppose $k^\star$ is the last k that satisfies $(1-2h_km^\prime+3h_k^2L^\prime)\leq 0$, and $\forall k \leq k^\star$, $E\|\Theta_{k}\|^2$ satisfies \eqref{eq:discrbound}.

Then we will see that if $E\|\Theta_{k-1}\|^2 \leq S (k> k^\star)$ and $S > \frac{2Md}{\beta} $, then $E\|\Theta_{k}\|^2 \leq S$.
\begin{align*}
    E\|\Theta_{k}\|^2 \leq (1-2h_km^\prime+3h_k^2L^\prime)S+\frac{2h_kMd}{\beta}\leq S- S(2h_km^\prime-3h_k^2L^\prime)+\frac{2Md}{\beta}<S
\end{align*}
Since $M\gamma_0+(1\vee\frac{1}{m^\prime})\frac{2Md}{\beta}>\frac{2Md}{\beta}$, it is easy to verify that \eqref{eq:discrbound} holds.
\end{proof}

We next prove the following theorem.
\begin{theorem} \label{app:use1}
For the $\Thetab_{\tau}$ in \eqref{eqapp:extendP}, we have
\begin{align}
    E\|\Thetab_{\tau}\|^2 \leq M\gamma_0+\frac{Md}{m^\prime\beta}
\end{align}
\end{theorem}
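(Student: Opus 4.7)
The plan is to apply It\^o's formula to the process $\|\Thetab_\tau\|^2$ under the SDE \eqref{eqapp:extendP}, take expectation to kill the martingale term, and then use the dissipativity property of $F_{\Thetab}$ already established in Lemma~\ref{lem:lip}. This is the continuous-time analogue of Theorem~\ref{app:use3}, but cleaner because we do not have to worry about discretization error or stochastic-gradient variance.

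Specifically, I would first write
\begin{align*}
\mathrm{d}\|\Thetab_\tau\|^2 = -2\langle \Thetab_\tau, F_{\Thetab}(\Thetab_\tau)\rangle\,\mathrm{d}\tau + 2\sqrt{2\beta^{-1}}\langle \Thetab_\tau, \mathrm{d}\mathcal{W}_\tau^{(Md)}\rangle + \frac{2Md}{\beta}\,\mathrm{d}\tau,
\end{align*}
where the last term is the It\^o correction coming from the $Md$-dimensional Brownian motion with diffusion coefficient $\sqrt{2\beta^{-1}}$. Taking expectation eliminates the stochastic integral (under standard integrability, which holds since all coefficients are Lipschitz in $\Thetab$), yielding
\begin{align*}
\frac{\mathrm{d}}{\mathrm{d}\tau}\mathbb{E}\|\Thetab_\tau\|^2 = -2\,\mathbb{E}\langle \Thetab_\tau, F_{\Thetab}(\Thetab_\tau)\rangle + \frac{2Md}{\beta}.
\end{align*}

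Next I would invoke Lemma~\ref{lem:lip}, which gives $\langle F_{\Thetab}(\Thetab), \Thetab\rangle \geq m^\prime\|\Thetab\|^2$ (with $b=0$ under Assumption~\ref{assnew:ass3new} and $F(\mathbf{0})=\mathbf{0}$, where $m^\prime = \beta^{-1}m_F - L_F - 4/\eta^2$). Substituting, we get the linear differential inequality
\begin{align*}
\frac{\mathrm{d}}{\mathrm{d}\tau}\mathbb{E}\|\Thetab_\tau\|^2 \leq -2m^\prime\,\mathbb{E}\|\Thetab_\tau\|^2 + \frac{2Md}{\beta}.
\end{align*}
Applying the Gronwall-type argument (via Lemma~\ref{lem:Gronwall} after the standard transformation $u(\tau) = \mathbb{E}\|\Thetab_\tau\|^2 - \frac{Md}{m^\prime\beta}$) produces
\begin{align*}
\mathbb{E}\|\Thetab_\tau\|^2 \leq e^{-2m^\prime\tau}\,\mathbb{E}\|\Thetab_0\|^2 + \frac{Md}{m^\prime\beta}\bigl(1-e^{-2m^\prime\tau}\bigr) \leq \mathbb{E}\|\Thetab_0\|^2 + \frac{Md}{m^\prime\beta}.
\end{align*}

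Finally, for the initial term, since each of the $M$ particles has initial law $\nu_0$, Jensen's inequality applied to Assumption~3.2 gives $\mathbb{E}_{\thetab\sim\nu_0}\|\thetab\|^2 \leq \log\mathbb{E}_{\thetab\sim\nu_0}[e^{\|\thetab\|^2}] = \gamma_0$, so $\mathbb{E}\|\Thetab_0\|^2 \leq M\gamma_0$, completing the proof. The only mildly delicate point is justifying that the local martingale $\int_0^\tau \langle \Thetab_s, \mathrm{d}\mathcal{W}_s^{(Md)}\rangle$ is a true martingale so its expectation vanishes, but this follows from the linear-growth bound on $\Thetab_\tau$ (a standard localization argument, or simply noting $\mathbb{E}\|\Thetab_\tau\|^2 < \infty$ on any compact time interval because of the Lipschitz coefficients). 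Everything else is routine given Lemma~\ref{lem:lip}.
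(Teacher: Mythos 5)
Your proposal is correct and follows essentially the same route as the paper: It\^o's formula on $\|\Thetab_\tau\|^2$, the dissipativity bound $\langle F_{\Thetab}(\Thetab),\Thetab\rangle\geq m^\prime\|\Thetab\|^2$ from Lemma~\ref{lem:lip} (with $b=0$ since $F(\mathbf{0})=\mathbf{0}$), a Gronwall/integrating-factor step, the martingale property to kill the stochastic integral, and Jensen's inequality to bound $\mathbb{E}\|\Thetab_0\|^2\leq M\gamma_0$. The paper merely applies the integrating factor $e^{2m^\prime\tau}$ pathwise before taking expectations rather than differentiating $\mathbb{E}\|\Thetab_\tau\|^2$ directly, which is an equivalent presentation of the same argument.
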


\begin{proof}
    Let $\mathcal{Y}(\tau)\triangleq \|\Thetab_{\tau}\|^2$. The Itô lemma gives 
    \begin{align*}
        d\mathcal{Y}(\tau)=-2\langle \Thetab_{\tau}, F_{\Thetab}(\Thetab_{\tau}\rangle d\tau +\frac{2Md}{\beta} d\tau+\sqrt{\frac{8}{\beta}}\Thetab_{\tau}^\star d\mathcal{W}_{\tau},
    \end{align*}
    where $\Thetab_{\tau}^\star d\mathcal{W}_{\tau}\triangleq \sum_{i=1}^{Md} \Thetab_{i,{\tau}}d\mathcal{W}_{i,\tau}$ and the $\Thetab_{i,{\tau}},d\mathcal{W}_{i,\tau}$ are the $i$-th components of $\Thetab_{\tau}$ and $\mathcal{W}_{\tau}$. Now this can be rewritten as 
    \begin{align}
        &2m^\prime e^{2m^\prime \tau}\mathcal{Y}(\tau) d\tau+e^{2m^\prime\tau}d\mathcal{Y}(\tau)=\nonumber\\
        &=-2e^{2m^\prime\tau}\langle \Thetab_{\tau}, F_{\Thetab}(\Thetab_{\tau}\rangle d\tau+2m^\prime e^{2m^\prime\tau}\mathcal{Y}(\tau)d\tau+\frac{2Md}{\beta}e^{2m^\prime \tau} d\tau+\sqrt{\frac{8}{\beta}}e^{2m^\prime \tau}\Thetab_{\tau}^\star d\mathcal{W}_{\tau}
    \end{align}
    Since $2m^\prime e^{2m^\prime \tau}\mathcal{Y}(\tau) d\tau+e^{2m^\prime \tau}d\mathcal{Y}(\tau)$ is the total Itô derivative of  $e^{2m^\prime \tau}\mathcal{Y}(\tau)$, we arrive at 
    \begin{align}
        d\left(e^{2m^\prime \tau}\mathcal{Y}(\tau)\right)=-2e^{2m^\prime \tau}\langle \Thetab_{\tau}, F_{\Thetab}(\Thetab_{\tau}\rangle d\tau+2m^\prime e^{2m^\prime \tau}\mathcal{Y}(\tau)d\tau+\frac{2Md}{\beta}e^{2m^\prime \tau} d\tau+\sqrt{\frac{8}{\beta}}e^{2m^\prime \tau}\Thetab_{\tau}^\star d\mathcal{W}_{\tau}
    \end{align}
    With integrating and rearranging, the above equation turns into 
    \begin{align} \label{eq:41}
        \mathcal{Y}(\tau)=&e^{-2m^\prime \tau}\mathcal{Y}(0)-2\int_0^\tau e^{2m^\prime (s-\tau)}\langle \Thetab_{\tau}, F_{\Thetab}(\Thetab_{\tau}\rangle d\tau \nonumber\\
        &+2m^\prime \int_0^\tau e^{2m^\prime (s-\tau)} \mathcal{Y}(s)ds+\frac{Md}{m^\prime \beta}(1-e^{-2m^\prime \tau})+\sqrt{\frac{8}{\beta}}\int_0^\tau e^{2m^\prime (s-\tau)}\Thetab_{s}^\star d\mathcal{W}_{s}ds
    \end{align}
    Now with lemma \ref{lem:lip}, we can write
    \begin{align*}
         -2\int_0^\tau e^{2m^\prime (s-\tau)} \langle \Thetab_{\tau}, F_{\Thetab}(\Thetab_{\tau}\rangle d\tau &\leq  -2\int_0^\tau e^{2m^\prime (s-\tau)} (m^\prime \mathcal{Y}(s))d\tau 
    \end{align*}
   Then, with \eqref{eq:41} we arrive at
   \begin{align*}
       \|\Thetab_{\tau}\|^2 \leq e^{-2m^\prime \tau}\|\Thetab_{0}\|^2+\frac{Md}{\beta m^\prime}(1-e^{-2m^\prime\tau})+\sqrt{\frac{8}{\beta}}\int_0^\tau e^{2m^\prime (s-\tau)}\Thetab_{s}^\star d\mathcal{W}_{s}ds
   \end{align*}
   Taking expectations and using the martingale property of the Itô integral, we can derive the following result according to the independence of the the initial particles ${\thetab_0^{(i)}}$:
   \begin{align*}
       E\|\Thetab_{\tau}\|^2 &\leq e^{-2m^\prime \tau}E \|\Thetab_{0}\|^2+\frac{Md}{\beta m^\prime}(1-e^{-2m^\prime\tau})\\
       &\leq e^{-2m^\prime \tau}M \gamma_0+\frac{Md}{\beta m^\prime}(1-e^{-2m^\prime\tau})
   \end{align*}
   This finishes the proof.
\end{proof}

It is easy to get the following corollary with the exchangeability of the particles

\begin{corollary}\label{corollary:1}
For the particles ${\thetab_\tau^{(i)}}$ in \eqref{theo:particle_app}, we have 
\begin{align}
    E\|\thetab_{\tau}\|^2 \leq \gamma_0+\frac{d}{m^\prime\beta}
\end{align}
\end{corollary}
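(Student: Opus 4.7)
The plan is to deduce Corollary~\ref{corollary:1} as an immediate consequence of Theorem~\ref{app:use1} via the exchangeability of the particle system. Recall that $\Thetab_{\tau} = [\thetab_{\tau}^{(1)}, \ldots, \thetab_{\tau}^{(M)}]$ is just the concatenation of the $M$ particles, so by definition $\|\Thetab_{\tau}\|^2 = \sum_{i=1}^M \|\thetab_{\tau}^{(i)}\|^2$. The SDE system \eqref{eq:particle} is symmetric under permutations of the particle indices, and Assumption~\ref{assnew:ass3new} stipulates that all particles are initialized i.i.d.\ from $\nu_0$. Driving the system with independent Brownian motions $\{\mathcal{W}_{\tau}^{(i)}\}$ then preserves exchangeability at every time $\tau$.

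The key step is therefore to invoke exchangeability to conclude $E\|\thetab_{\tau}^{(i)}\|^2 = E\|\thetab_{\tau}^{(j)}\|^2$ for all $i,j$, and hence
\begin{align*}
E\|\thetab_{\tau}^{(i)}\|^2 = \frac{1}{M}\sum_{j=1}^M E\|\thetab_{\tau}^{(j)}\|^2 = \frac{1}{M} E\|\Thetab_{\tau}\|^2.
\end{align*}
Applying Theorem~\ref{app:use1} on the right-hand side then yields
\begin{align*}
E\|\thetab_{\tau}^{(i)}\|^2 \leq \frac{1}{M}\left(M\gamma_0 + \frac{Md}{m^\prime \beta}\right) = \gamma_0 + \frac{d}{m^\prime \beta},
\end{align*}
which is the desired bound. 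Since the index $i$ was arbitrary, dropping it from the notation (as the corollary does) is justified precisely by exchangeability.

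There is no real obstacle here, since Theorem~\ref{app:use1} already supplies the uniform-in-time control on the joint second moment, and the corollary is essentially a book-keeping step that normalizes by the number of particles. The only point worth verifying with a sentence of justification is that exchangeability is preserved by the evolution, which follows from the symmetry of the drift terms $\frac{1}{M}\sum_{j} K(\thetab_\tau^{(i)} - \thetab_\tau^{(j)}) F(\thetab_\tau^{(j)})$ and $\frac{1}{M}\sum_{j} \nabla K(\thetab_\tau^{(i)} - \thetab_\tau^{(j)})$ under particle relabeling, together with the i.i.d.\ initialization and independence of the driving noises.
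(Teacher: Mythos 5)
Your proof is correct and takes essentially the same route as the paper, which derives the corollary from Theorem~\ref{app:use1} purely ``with the exchangeability of the particles,'' i.e.\ by noting $E\|\thetab_{\tau}^{(i)}\|^2 = \tfrac{1}{M}E\|\Thetab_{\tau}\|^2$ and dividing the bound $M\gamma_0 + Md/(m^\prime\beta)$ by $M$. Your extra sentence checking that exchangeability is preserved by the permutation-symmetric drift, i.i.d.\ initialization, and independent Brownian motions is a routine but welcome elaboration of a step the paper leaves implicit.
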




Similarly, now we can provide a uniform bound for the $\bar{\thetab}_{\tau}$ in \eqref{eq:nonlinear_sde}.

\begin{theorem} \label{thm:continousbound2}
    For the $\bar{\thetab}_{\tau}^{(i)}$ in \eqref{eqnew:nonlinear_sde}, we have 
    \begin{align}
        E\|\bar{\thetab}_{\tau}^{(i)}\|^2 \leq \gamma_0+\frac{d}{m^\prime\beta}
    \end{align}
\end{theorem}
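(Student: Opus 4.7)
The plan is to mimic the proof strategy of Theorem \ref{app:use1}, but applied to a single particle satisfying the nonlinear McKean--Vlasov SDE \eqref{eqnew:nonlinear_sde}, rather than the concatenated process. First I would apply It\^o's lemma to $\mathcal{Y}(\tau)\triangleq\|\bar{\thetab}_{\tau}^{(i)}\|^{2}$. The drift splits into three inner-product contributions — one from $-\beta^{-1}F(\bar{\thetab}_{\tau}^{(i)})$, one from the convolution term $-\mathbb{E}_{Y_{i}\sim\nu_{\tau}}K(\bar{\thetab}_{\tau}^{(i)}-Y_{i})F(Y_{i})$, and one from $\nabla K*\nu_{\tau}(\bar{\thetab}_{\tau}^{(i)})$ — together with the usual $\tfrac{2d}{\beta}$ contribution from the quadratic variation of the Brownian motion.

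Next I would bound each inner product separately. By Assumption 3.1 (and $F(\mathbf{0})=\mathbf{0}$), $\langle F(\bar{\thetab}_{\tau}^{(i)}),\bar{\thetab}_{\tau}^{(i)}\rangle\geq m_{F}\|\bar{\thetab}_{\tau}^{(i)}\|^{2}$, giving the main dissipative decay term $-2\beta^{-1}m_{F}\,\mathcal{Y}(\tau)$. For the kernel convolution term, I would use boundedness of $K$ (say $K\leq 1$) and the Lipschitz bound $\|F(Y_{i})\|\leq L_{F}\|Y_{i}\|$ to get $\bigl|\langle\mathbb{E}_{Y_{i}}K(\bar{\thetab}_{\tau}^{(i)}-Y_{i})F(Y_{i}),\bar{\thetab}_{\tau}^{(i)}\rangle\bigr|\leq L_{F}\,\|\bar{\thetab}_{\tau}^{(i)}\|\,\mathbb{E}\|Y_{i}\|$. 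For the $\nabla K$ term I would use the explicit form $\|\nabla K(x)\|\leq \tfrac{2}{\eta^{2}}\|x\|$ already exploited in Lemma \ref{lem:lip}, which yields $\|\nabla K*\nu_{\tau}(\bar{\thetab}_{\tau}^{(i)})\|\leq \tfrac{2}{\eta^{2}}(\|\bar{\thetab}_{\tau}^{(i)}\|+\mathbb{E}\|Y_{i}\|)$.

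Then I would take expectations and use the crucial self-consistency $\mathcal{L}(Y_{i})=\mathcal{L}(\bar{\thetab}_{\tau}^{(i)})=\nu_{\tau}$, so that $\mathbb{E}\|Y_{i}\|=\mathbb{E}\|\bar{\thetab}_{\tau}^{(i)}\|$. Jensen's inequality then converts all cross terms of the form $(\mathbb{E}\|\bar{\thetab}_{\tau}^{(i)}\|)^{2}$ into $\mathbb{E}\|\bar{\thetab}_{\tau}^{(i)}\|^{2}=\mathbb{E}\mathcal{Y}(\tau)$. Collecting constants, this produces a closed scalar differential inequality
\[
\frac{d}{d\tau}\mathbb{E}\mathcal{Y}(\tau)\leq -2m'\,\mathbb{E}\mathcal{Y}(\tau)+\frac{2d}{\beta},
\]
where $m'=\beta^{-1}m_{F}-L_{F}-\tfrac{4}{\eta^{2}}$ matches the constant already appearing in Lemma \ref{lem:lip} and Corollary \ref{corollary:1}. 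I would then apply the integrating-factor trick $e^{2m'\tau}\mathbb{E}\mathcal{Y}(\tau)$ exactly as in the proof of Theorem \ref{app:use1}, integrate from $0$ to $\tau$, and use $\mathbb{E}\mathcal{Y}(0)=\mathbb{E}_{\bar{\thetab}_{0}^{(i)}\sim\nu_{0}}\|\bar{\thetab}_{0}^{(i)}\|^{2}\leq \gamma_{0}$ (which follows from Assumption 3.2 together with $\|\thetab\|^{2}\leq e^{\|\thetab\|^{2}}$) to obtain $\mathbb{E}\|\bar{\thetab}_{\tau}^{(i)}\|^{2}\leq e^{-2m'\tau}\gamma_{0}+\tfrac{d}{m'\beta}(1-e^{-2m'\tau})\leq \gamma_{0}+\tfrac{d}{m'\beta}$, which is exactly the claim.

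The main obstacle is ensuring that the self-referential structure of the McKean--Vlasov equation — i.e., the dependence of the drift on the \emph{law} $\nu_{\tau}$ of the very same process — does not prevent a uniform-in-time closure. The key observation making the argument work is that after taking expectations, both $\mathbb{E}\|Y_{i}\|^{2}$ and $\mathbb{E}\|\bar{\thetab}_{\tau}^{(i)}\|^{2}$ collapse to the single unknown $\mathbb{E}\mathcal{Y}(\tau)$, so the inequality decouples into a linear scalar ODE comparison that is dominated by the dissipativity constant $m'$; this requires $\beta$ to be small enough that $m'>0$, consistent with the standing hypothesis $\beta^{-1}m_{F}>L_{F}+2L_{K}$ used throughout Section \ref{sec:nonasym_cnovex}.
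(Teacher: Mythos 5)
Your proposal is correct and follows essentially the same route as the paper's own proof: It\^o's lemma on $\|\bar{\thetab}_{\tau}^{(i)}\|^{2}$, the dissipativity of $F$, the bounds $K\leq 1$ and $\|\nabla K(x)\|\leq\tfrac{2}{\eta^{2}}\|x\|$, the self-consistency $\mathcal{L}(Y_{i})=\nu_{\tau}=\mathcal{L}(\bar{\thetab}_{\tau}^{(i)})$ with Cauchy--Schwarz/Jensen to close the cross terms, and the integrating factor $e^{2m'\tau}$ with $m'=\beta^{-1}m_{F}-L_{F}-\tfrac{4}{\eta^{2}}$ (the paper merely writes the argument in integrated rather than differential-inequality form). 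The only nit is your justification of $\mathbb{E}\mathcal{Y}(0)\leq\gamma_{0}$: the pointwise bound $\|\thetab\|^{2}\leq e^{\|\thetab\|^{2}}$ only yields $e^{\gamma_{0}}$, whereas the intended bound follows from Jensen's inequality, $\mathbb{E}\|\thetab\|^{2}\leq\log\int e^{\|\thetab\|^{2}}\nu_{0}(\thetab)\,\mathrm{d}\thetab=\gamma_{0}$.
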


\begin{proof}
Let $\mathcal{Y}(\tau)\triangleq \|\bar{\thetab}_{\tau}\|^2$, where the $\bar{\thetab}_{\tau}$ is from \eqref{eq:nonlinear_sde}. The Itô lemma gives 
    \begin{align*}
        d\mathcal{Y}(\tau)=2\langle \bar{\thetab}_{\tau}, -\beta^{-1}F(\bar{\thetab}_{\tau}) -E_{Y \sim\nu_{\tau}}K(\bar{\thetab}_{\tau}-Y)F(Y) +\nabla K*\nu_{\tau}(\bar{\thetab}_{\tau})\rangle d\tau +\frac{2d}{\beta} d\tau+\sqrt{\frac{8}{\beta}}\bar{\thetab}_{\tau}^\star d\mathcal{W}_{\tau},
    \end{align*}
    where $\bar{\thetab}_{\tau}^\star d\mathcal{W}_{\tau}\triangleq \sum_{i=1}^{d} \bar{\thetab}_{i,{\tau}}d\mathcal{W}_{i,\tau}$ and the $\bar{\thetab}_{i,{\tau}},d\mathcal{W}_{i,\tau}$ are the $i$-th components of $\bar{\thetab}_{i,{\tau}}$ and $\mathcal{W}_{\tau}$. This can be rewritten as 
    \begin{align}
        2m^\prime e^{2m^\prime \tau}\mathcal{Y}(\tau) d\tau+e^{2m^\prime\tau}d\mathcal{Y}(\tau)=
        &2e^{2m^\prime\tau}\langle \bar{\thetab}_{\tau}, -\beta^{-1}F(\bar{\thetab}_{\tau}) -E_{Y \sim\nu_{\tau}}K(\bar{\thetab}_{\tau}-Y)F(Y) +\nabla K*\nu_{\tau}(\bar{\thetab}_{\tau})\rangle d\tau \nonumber\\
        &+2m^\prime e^{2m^\prime\tau}\mathcal{Y}(\tau)d\tau+\frac{2d}{\beta}e^{2m^\prime \tau} d\tau+e^{2m^\prime \tau}\sqrt{\frac{8}{\beta}}\bar{\thetab}_{\tau}^\star d\mathcal{W}_{\tau}
    \end{align}
    Since $2m^\prime e^{2m^\prime \tau}\mathcal{Y}(\tau) d\tau+e^{2m^\prime \tau}d\mathcal{Y}(\tau)$ is the total Itô derivative of  $e^{2m^\prime \tau}\mathcal{Y}(\tau)$, we arrive at 
    \begin{align}
        d\left(e^{2m^\prime \tau}\mathcal{Y}(\tau)\right)&=2e^{2m^\prime\tau}\langle \bar{\thetab}_{\tau}, -\beta^{-1}F(\bar{\thetab}_{\tau}) -E_{Y \sim\nu_{\tau}}K(\bar{\thetab}_{\tau}-Y)F(Y) +\nabla K*\nu_{\tau}(\bar{\thetab}_{\tau})\rangle d\tau \nonumber\\
        &+2m^\prime e^{2m^\prime \tau}\mathcal{Y}(\tau)d\tau+\frac{2d}{\beta}e^{2m^\prime \tau} d\tau+e^{2m^\prime \tau}\sqrt{\frac{8}{\beta}}\bar{\thetab}_{\tau}^\star d\mathcal{W}_{\tau}
    \end{align}
    With integrating and rearranging, the above equation turns into 
    \begin{align} \label{eq:41}
        \mathcal{Y}(\tau)=&e^{-2m^\prime \tau}\mathcal{Y}(0)+2\int_0^\tau e^{2m^\prime (s-\tau)}\langle \bar{\thetab}_{s}, -\beta^{-1}F(\bar{\thetab}_{s}) -E_{Y \sim\nu_{\tau}}K(\bar{\thetab}_{s}-Y)F(Y) +\nabla K*\nu_{s}(\bar{\thetab}_{s})\rangle ds \nonumber\\
        &+2m^\prime \int_0^\tau e^{2m^\prime (s-\tau)} \mathcal{Y}(s)ds+\frac{d}{m^\prime \beta}(1-e^{-2m^\prime \tau})+\int_{0}^\tau e^{2m^\prime (s-\tau)}\sqrt{\frac{8}{\beta}}\bar{\thetab}_{s}^\star d\mathcal{W}_{s}ds
    \end{align}
    With lemma \ref{lem:lip}, we can write
    \begin{align*}
         2\int_0^\tau e^{2m^\prime (s-\tau)} \langle \bar{\thetab}_{s}, -\beta^{-1}F(\bar{\thetab}_{s}) \rangle ds &\leq  -2\int_0^\tau e^{2m^\prime (s-\tau)} (\beta^{-1}m_F \mathcal{Y}(s))ds
    \end{align*}
    \begin{align*}
        2\int_0^\tau e^{2m^\prime (s-\tau)} \langle \bar{\thetab}_{s}, E_{Y \sim\nu_{s}}K(\bar{\thetab}_{s}-Y)F(Y) \rangle ds &\leq  2\int_0^\tau e^{2m^\prime (s-\tau)} \left(L_F (E_{Y \sim\nu_{s}}\|Y\|)\|\bar{\thetab}_{s}\|\right)ds
    \end{align*}
    \begin{align*}
        2\int_0^\tau e^{2m^\prime (s-\tau)} \langle \bar{\thetab}_{s}, \nabla K*\nu_{s}(\bar{\thetab}_{s}) \rangle ds &\leq  2\int_0^\tau e^{2m^\prime (s-\tau)} \left( \frac{2}{\eta^2}(E_{Y \sim\nu_{s}}\|Y\|)\|\bar{\thetab}_{s}\|+\frac{2}{\eta^2}\mathcal{Y}(s)\right)ds
    \end{align*}
   Then, with \eqref{eq:41} we arrive at
   \begin{align*}
       \mathcal{Y}(\tau) &\leq e^{-2m^\prime t}\mathcal{Y}(0)+\frac{d}{\beta m^\prime}(1-e^{-2m^\prime\tau})-2\int_0^\tau e^{2m^\prime (s-\tau)} (\beta^{-1}m_F \mathcal{Y}(s))ds \\
       &+2\int_0^\tau e^{2m^\prime (s-\tau)} \left(L_F (E_{Y \sim\nu_{s}}\|Y\|)\|\bar{\thetab}_{s}\|\right)ds+2\int_0^\tau e^{2m^\prime (s-\tau)} \left( \frac{2}{\eta^2}(E_{Y \sim\nu_{s}}\|Y\|)\|\bar{\thetab}_{s}\|+\frac{2}{\eta^2}\mathcal{Y}(s)\right)ds\\
       &+2m^\prime \int_0^\tau e^{2m^\prime (s-\tau)} \mathcal{Y}(s)ds+\int_{0}^\tau e^{2m^\prime (s-\tau)}\sqrt{\frac{8}{\beta}}\bar{\thetab}_{s}^\star d\mathcal{W}_{s}ds
   \end{align*}
   Taking expectations and using the martingale property of the Itô integral, we can derive the following result:
   \begin{align*}
       E\|\bar{\thetab}_{\tau}\|^2 &\leq e^{-2m^\prime t}E \|\bar{\thetab}_{0}\|^2+\frac{d}{\beta m^\prime}(1-e^{-2m^\prime\tau})-2\int_0^\tau e^{2m^\prime (s-\tau)} (\beta^{-1}m_F E\|\bar{\thetab}_{s}\|^2)ds\\
       &+2\int_0^\tau e^{2m^\prime (s-\tau)} \left(L_F (E_{Y \sim\nu_{s}}\|Y\|)E\|\bar{\thetab}_{s}\|\right)ds+2\int_0^\tau e^{2m^\prime (s-\tau)} \left( \frac{2}{\eta^2}(E_{Y \sim\nu_{s}}\|Y\|)E\|\bar{\thetab}_{s}\|+\frac{2}{\eta^2}E\|\bar{\thetab}_{s}\|^2\right)ds\\
       &+2m^\prime \int_0^\tau e^{2m^\prime (s-\tau)} E\|\bar{\thetab}_{s}\|^2ds+\int_{0}^\tau e^{2m^\prime (s-\tau)}\sqrt{\frac{8}{\beta}}\bar{\thetab}_{s}^\star d\mathcal{W}_{s}ds
   \end{align*}
   With $\mathcal{L}(\bar{\thetab}_{\tau}) =\nu_{\tau}\mathrm{d}\thetab$ and $m^\prime\triangleq\beta^{-1}m_F-L_F-\frac{4}{\eta^2}$, we can derive the following result with Cauchy–Schwarz. inequality,
   \begin{align*}
       E\|\bar{\thetab}_{\tau}\|^2 &\leq 
       e^{-2m^\prime t}E \|\bar{\thetab}_{0}\|^2+\frac{d}{\beta m^\prime}(1-e^{-2m^\prime\tau})-2\int_0^\tau e^{2m^\prime (s-\tau)} (\beta^{-1}m_F E\|\bar{\thetab}_{s}\|^2)ds\\
       &+2\int_0^\tau e^{2m^\prime (s-\tau)} \left(L_F E\|\bar{\thetab}_{s}\|^2\right)ds+2\int_0^\tau e^{2m^\prime (s-\tau)} \left( \frac{2}{\eta^2}E\|\bar{\thetab}_{s}\|^2+\frac{2}{\eta^2}E\|\bar{\thetab}_{s}\|^2\right)ds\\
       &+2m^\prime \int_0^\tau e^{2m^\prime (s-\tau)} E\|\bar{\thetab}_{s}\|^2ds \\
       &\leq e^{-2m^\prime t}E \|\bar{\thetab}_{0}\|^2+\frac{d}{\beta m^\prime}(1-e^{-2m^\prime\tau})\\
       & \leq \gamma_0+\frac{d}{\beta m^\prime}
   \end{align*}
   This completes the proof.
\end{proof}


\section{Non-Asymptotic Convergence Analysis: the Nonconvex Case}\label{sec:generalU}
Since the non-convex case is much more complicated than the convex case, we reply on different assumptions and adopt another distance metric, denoted as $\tilde{\mathcal{B}}$, to characterize the convergence behavior of SPOS under the non-convex case. Note in this section, we give the preliminary convergence results of SPOS under the non-convex setting. A more complete version will be interesting future work. 

Specifically, define $\tilde{\mathcal{B}}(\mu,\nu)$ as $\tilde{\mathcal{B}}(\mu,\nu) \triangleq \left|\mathbb{E}_{\thetab \sim \mu}[f(\thetab)] - \mathbb{E}_{\thetab \sim \nu}[f(\thetab)]\right|$ for a known $L_f$-continuous function $f$ satisfying Assumption~\ref{assnew:ass2} below. Note such metric has also been adopted in \cite{VollmerZT:arxiv15,ChenDC:NIPS15}. Our analysis considers ($T,M,h_k$) as variables in $\tilde{\mathcal{B}}$. In addition, we use $\{\hat{\theta}_k^{(i)}\}_{i=1}^M$ to denote the particles when full gradients are adopted in \eqref{eq:particle_num}. The distribution of the particles is denoted as $\hat{\mu}_k$. 

Our high-level idea of bounding $\tilde{\mathcal{B}}(\mu_T,\nu_{\infty})$ is to decompose it as follows:
{\begin{align}\label{decom:decom1}
	\tilde{\mathcal{B}}(\mu_T,\nu_{\infty}) \leq \tilde{\mathcal{B}}(\mu_T,\hat{\mu}_T)+\tilde{\mathcal{B}}(\hat{\mu}_T,\hat{\mu}_{\infty})+\tilde{\mathcal{B}}(\hat{\mu}_{\infty},\rho_{\infty})+\tilde{\mathcal{B}}(\rho_{\infty},\nu_{\infty})
	\end{align}}
Similarly, our idea is to concatenate the particles at each time into a single vector representation, {\it i.e.} defining the new parameter at time $\tau$ as $\Thetab_{\tau}\triangleq [\thetab_{\tau}^{(1)}, \cdots, \thetab_{\tau}^{(M)}] \in \mathbb{R}^{Md}$. Consequently, the nonlinear PDE system \eqref{eq:particle} can be turned into an SDE ,which means $\Thetab_{\tau}$ is driven by the following SDE:\vspace{-0.0cm}
\begin{align}\label{eqnew:extendP}
\mathrm{d}\Thetab_{\tau} = -F_{\Thetab}(\Thetab_{\tau})\mathrm{d}\tau + \sqrt{2\beta^{-1}}\mathrm{d}\mathcal{W}_{\tau}^{(Md)}~,
\end{align}
where $F_{\Thetab}(\Thetab_{\tau}) \triangleq [\beta^{-1}F(\thetab_{\tau}^{(1)})-\frac{1}{M}\sum_{j=1}^M\nabla K(\thetab_{\tau}^{(1)}-\thetab_{\tau}^{(j)})+\frac{1}{M}\sum_{j=1}^{M}K(\thetab_{\tau}^{(1)} - \thetab_{\tau}^{(j)})F(\thetab_{\tau}^{(j)}), \cdots, \beta^{-1}F(\thetab_{\tau}^{(M)})-\frac{1}{M}\sum_{j=1}^M\nabla K(\thetab_{\tau}^{(M)}-\thetab_{\tau}^{(j)})+\frac{1}{M}\sum_{j=1}^{M}K(\thetab_{\tau}^{(M)} - \thetab_{\tau}^{(j)})F(\thetab_{\tau}^{(j)})]$ is a vector function $\mathbb{R}^{Md}\rightarrow \mathbb{R}^{Md}$, and $\mathcal{W}_{\tau}^{(Md)}$ is Brownian motion of dimension $M\times d$. Similarly, we can define $\hat{\Theta}_k\triangleq [\hat{\theta}_{k}^{(1)}, \cdots, \hat{\theta}_{k}^{(M)}] \in \mathbb{R}^{Md}$ for the full-gradient case. Hence, it can be  seen that through such a decomposition in \eqref{decom:decom1}, the bound related to a nonlinear PDE system \eqref{eq:particle} reduces to that of an SDE. The second term $\tilde{\mathcal{B}}(\hat{\mu}_T,\hat{\mu}_{\infty})$ reflexes the geometric ergodicity of a dynamic system with a numerical method. It is known that even if a dynamic system has an exponential convergence rate to its equilibrium, its corresponding numerical method might not. Our bound for $\tilde{\mathcal{B}}(\hat{\mu}_T,\hat{\mu}_{\infty})$ is essentially a specification of the result of \cite{MattinglySH:SPA02}, which has also been applied by \cite{XuCZG:arxiv17}. The third term $\tilde{\mathcal{B}}(\hat{\mu}_{\infty},\rho_{\infty})$ reflects the numerical error of an SDE, which has been studied in related literature such as \cite{ChenDC:NIPS15}. 
To this end, we adopt standard assumptions used in the analysis of SDEs \cite{VollmerZT:arxiv15,ChenDC:NIPS15}, rephrased in Assumption~\ref{assnew:ass2}.
\begin{assumption}\label{assnew:ass2}
	For the SDE \eqref{eqnew:extendP} and a Lipschitz function $f$, let $\psi$ be the solution functional of the Poisson equation: $\mathcal{G}\psi(\hat{\Theta}_k) = \frac{1}{M}\sum_{i=1}^M f(\hat{\theta}_k^{(i)}) - \mathbb{E}_{\thetab \sim p(\thetab|\mathcal{D})}[f(\thetab)]$, where $\mathcal{G}$ denotes the infinite generator of the SDE \eqref{eqnew:extendP}. Assume $\psi$ and its up to 4th-order derivatives, $\mathcal{D}^k \psi$, are bounded by a function $\mathcal{V}$, {\it i.e.}, 
	$\|\mathcal{D}^k \psi\| \leq H_k\mathcal{V}^{p_k}$ for $k=(0, 1, 2, 3, 4)$, $H_k, p_k > 0$. Furthermore, 
	the expectation of $\mathcal{V}$ on $\{\Thetab_{\tau}\}$ is bounded: $\sup_l \mathbb{E}\mathcal{V}^p(\Thetab_{\tau}) < \infty$, and $\mathcal{V}$ is smooth such that 
	{\small$ \sup_{s \in (0, 1)} \mathcal{V}^p\left(s\Thetab + \left(1-s\right)\Thetab^\prime\right) \leq H\left(\mathcal{V}^p\left(\Thetab\right) + \mathcal{V}^p\left(\Thetab^\prime\right)\right)$, $\forall \Thetab, \Thetab^\prime, p \leq \max\{2p_k\}$} for $H > 0$.
\end{assumption}
\begin{assumption}\label{assnew:ass3}
	$\RN{1})$ $F$, $K$ and $\nabla K$ are $L_F$, $L_K$ and $L_{\nabla k}$ Lipschitz; $\RN{2})$ $F$ satisfies the dissipative property, {\it i.e.}, $\langle F(\thetab), \thetab\rangle \geq m\left\|\thetab\right\|^2 - b$ for some $m,b>0$; $\RN{3})$ Remark~\ref{rem:rem1} applies to the nonconvex setting, {\it i.e.} {\small$\sup_{\|f\|_{\tiny Lip}\leq 1}\left|\mathbb{E}_{\thetab \sim \mu_{\infty}}[f(\thetab)] - \mathbb{E}_{\thetab \sim \nu_{\infty}}[f(\thetab)]\right| =\mathcal{W}_1(\rho_{\infty},\nu_{\infty})=O(M^{-1/2})$}.
\end{assumption}
\begin{remark}
	Assumption~\ref{assnew:ass2} is necessary to control the gap between a numerical solution and the exact solution of an SDE. Specifically, it is used to bound the $\tilde{\mathcal{B}}(\hat{\mu}_{\infty},\rho_{\infty})$ term and the $\tilde{\mathcal{B}}(\mu_T,\hat{\mu}_T)$ term above. Purely relying on the dissipative assumption in Assumption~\ref{assnew:ass3} as in non-convex optimization with SG-MCMC \cite{RaginskyRT:COLT17,XuCZG:arxiv17} would induce a bound increasing linearly w.r.t.\! time $\tau$. Thus it is not suitable for our goal. Finally, $\RN{3})$ in Assumption~\ref{assnew:ass3} is a mild condition and reasonable because we expect particles to be able to approximate all distributions equally well in the asymptotic limit of $t\rightarrow \infty$ by ergodicity due to the injected noise. How to remove/replace this assumption is an interesting future work.
\end{remark}
\vspace{-0.1cm}
Based on the assumptions above, the bounds for $\tilde{\mathcal{B}}(\hat{\mu}_T,\hat{\mu}_{\infty})$ and $\tilde{\mathcal{B}}(\hat{\mu}_{\infty},\rho_{\infty})$ are summarized below.
\vspace{-0.2cm}
\begin{theorem}\label{theonew:b21_b22}
	Under Assumption~\ref{assnew:ass2}--\ref{assnew:ass3}, if we set the stepsize $h_k=h$, we can  have the following results:
	\begin{align}
	&\tilde{\mathcal{B}}(\hat{\mu}_T,\hat{\mu}_{\infty}) \leq C_2\varsigma\sigma^{-Md/2}(1+\varsigma e^{m_{\Thetab}h})\exp\left(-2m_{\Thetab}Th \sigma^{Md}/\log(\varsigma)\right), \nonumber\\ ~~
	\text{and } &\tilde{\mathcal{B}}(\hat{\mu}_{\infty},\rho_{\infty})\leq C_3h/\beta,\label{eq:Bt12}
	\end{align}
	where $\varsigma = 2L_{\Thetab}(Mb\beta + m_{\Thetab}\beta + Md)/m_{\Thetab}$, $L_{\Thetab}=\sqrt{2} \beta^{-1}L_F+l^{\prime}$, $m_{\Thetab}=\beta^{-1}m-m^{\prime}$, and $(\sigma,C_2,C_3,l^{\prime},m^{\prime})$ are some positive constants independent of (T, M, h) and $\sigma\in(0,1)$ 
\end{theorem}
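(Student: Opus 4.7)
The plan is to exploit the SDE reformulation \eqref{eqnew:extendP} so that the nonlinear interacting particle system becomes a standard (high-dimensional) Itô diffusion on $\mathbb{R}^{Md}$, and then invoke the geometric-ergodicity and weak-error machinery that already exists for such SDEs. The two terms $\tilde{\mathcal{B}}(\hat{\mu}_T,\hat{\mu}_{\infty})$ and $\tilde{\mathcal{B}}(\hat{\mu}_{\infty},\rho_{\infty})$ decouple naturally, so I would handle them separately.

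For the first bound on $\tilde{\mathcal{B}}(\hat{\mu}_T,\hat{\mu}_{\infty})$, I would rely on Lemma~\ref{lem:lip}, which shows that the concatenated drift $F_\Thetab$ is dissipative with constant $m_\Thetab=\beta^{-1}m-m'$ and $L_\Thetab$-Lipschitz with $L_\Thetab=\sqrt{2}\beta^{-1}L_F+l'$. These are exactly the two hypotheses required by Theorem~7.3 of \cite{MattinglySH:SPA02} to ensure that the Euler discretization of \eqref{eqnew:extendP} satisfies a uniform-in-time minorization/drift condition, and hence is geometrically ergodic in a $V$-norm with $V(\Thetab)=1+\|\Thetab\|^2$. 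I would specialize that general result to our setting: the drift condition gives $\mathbb{E}V(\hat{\Theta}_{k+1})\mid\hat\Theta_k]\le (1-2m_\Thetab h) V(\hat\Theta_k)+2(Mb\beta^{-1}+Md\beta^{-1})h$, which after one step of algebra produces the constant $\varsigma=2L_\Thetab(Mb\beta+m_\Thetab\beta+Md)/m_\Thetab$ appearing in the theorem; the minorization condition on a small set of radius $\varsigma^{1/2}$, combined with Gaussian lower bounds on the Euler transition density, gives a factor $\sigma^{Md}$ with $\sigma\in(0,1)$. Composing the two through the Harris-type coupling argument in \cite{MattinglySH:SPA02} produces the contraction rate $\exp(-2m_\Thetab Th\,\sigma^{Md}/\log\varsigma)$ after $T$ steps, and the prefactor $C_2\varsigma\sigma^{-Md/2}(1+\varsigma e^{m_\Thetab h})$ records the initial $V$-norm together with the constant from the small-set return time.

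For the second bound on $\tilde{\mathcal{B}}(\hat{\mu}_{\infty},\rho_{\infty})$, I would use the Poisson-equation approach of \cite{ChenDC:NIPS15,VollmerZT:arxiv15}. Writing $\Delta V_k\triangleq\psi(\hat\Theta_{k+1})-\psi(\hat\Theta_k)-h\mathcal{G}\psi(\hat\Theta_k)$ for the local weak-error of the Euler step and telescoping, stationarity of both chains lets one cancel the endpoint contributions, so $\tilde{\mathcal{B}}(\hat\mu_\infty,\rho_\infty)$ reduces to $\lim_{T\to\infty}\tfrac{1}{T}\sum_{k<T}\mathbb{E}\Delta V_k/h$. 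A Taylor expansion to order two, using Assumption~\ref{assnew:ass2} to bound $\mathcal{D}^k\psi$ by $H_k\mathcal{V}^{p_k}$ and the uniform-in-time moment bound $\sup_k\mathbb{E}\mathcal{V}^p(\hat\Theta_k)<\infty$ (which in turn follows from Theorem~\ref{app:use3}), controls each $|\mathbb{E}\Delta V_k|$ by $O(h^2/\beta)$. Dividing by $h$ gives the advertised $C_3 h/\beta$ bound; the $\beta^{-1}$ factor comes from the $\sqrt{2\beta^{-1}}$ diffusion coefficient that enters the second-order Taylor term via $\mathrm{tr}(\nabla^2\psi)$.

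The hard part will be making the constants in the first bound explicit in the form stated, and in particular tracking the $M$- and $\beta$-dependence through the small-set analysis of \cite{MattinglySH:SPA02}: the ambient dimension is $Md$, so the Gaussian lower bound on the Euler transition kernel scales like $\sigma^{Md}$, and the drift function $V$ must be chosen so that the set $\{V\le\varsigma\}$ both absorbs the chain and is small. Verifying that the explicit $\varsigma=2L_\Thetab(Mb\beta+m_\Thetab\beta+Md)/m_\Thetab$ indeed arises from the drift inequality with the correct dependence on $(L_F,m,b,d,M,\beta)$ requires plugging the dissipativity and Lipschitz constants from Lemma~\ref{lem:lip} into the one-step contraction, and this bookkeeping — rather than any genuinely new idea — is where I expect most of the work to live.
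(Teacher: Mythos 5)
Your proposal is correct and follows essentially the same route as the paper: both terms are handled by passing to the concatenated $Md$-dimensional diffusion \eqref{eqnew:extendP}, using Lemma~\ref{lem:lip} to supply the dissipativity constant $m_{\Thetab}$ and Lipschitz constant $L_{\Thetab}$, invoking Theorem~7.3 of \cite{MattinglySH:SPA02} (the paper's Lemma~\ref{lem:mattingly}) for the geometric ergodicity of the Euler chain, and invoking the Poisson-equation weak-error analysis of \cite{ChenDC:NIPS15} for the discretization bias of the invariant measure. The only cosmetic difference is that the paper obtains the second bound by taking $T\to\infty$ in the finite-time average bound $C(h/\beta+\beta/(Th))$ of Lemma~\ref{lem:particlemu} rather than by your direct stationary-measure telescoping, and it additionally records the small verification that $f_{\Thetab}(\Thetab)=\frac{1}{M}\sum_i f(\thetab^{(i)})$ is $L_f/\sqrt{M}$-Lipschitz and hence dominated by $V_{\Thetab}$ for $C_0$ large enough.
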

\begin{remark}
	In order to make the $\tilde{\mathcal{B}}(\hat{\mu}_T,\hat{\mu}_{\infty})$ term asymptotically decrease to zero, the number of running iteration $T$ should increase at a rate faster enough to compensate the effect of increasing $M$. We believe there is room for improving this bound, which is an interesting future work.
\end{remark}

Next we bound the $\tilde{\mathcal{B}}(\mu_T,\hat{\mu}_T)$ term related to stochastic gradients. By adapting results from analysis of diffusion processes \cite{XuCZG:arxiv17}, $\tilde{\mathcal{B}}(\mu_T,\hat{\mu}_T)$ can be bounded with Theorem~\ref{theonew:sg_bound}.

\begin{theorem}\label{theonew:sg_bound}
	Under Assumptions~\ref{assnew:ass2}--\ref{assnew:ass3}, if we set $B_k=B$ and $h_k=h$, $\tilde{\mathcal{B}}(\mu_T,\hat{\mu}_T)$ is bounded as
	\begin{align*}
	\tilde{\mathcal{B}}(\mu_T,\hat{\mu}_T) &\leq C_5Th(L_{\Thetab}\Gamma^{\prime}+MC_4)\sqrt{ (6+2\Gamma^{\prime})\beta/{(BM)}},
	\end{align*}
	where $\Gamma^{\prime}=2(1+1/m_{\Theta})(Mb+2M^2C_4^2+Md/\beta)$ and , $(C_4,C_5)$ is some positive constant independent of (T, M, h) 
\end{theorem}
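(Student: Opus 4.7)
The plan is to reduce the problem to a standard analysis of SGLD-type numerical error on the concatenated SDE \eqref{eqnew:extendP}, using the Poisson equation machinery afforded by Assumption~\ref{assnew:ass2}. Let $\Theta_k = [\theta_k^{(1)},\dots,\theta_k^{(M)}]$ and $\hat{\Theta}_k = [\hat{\theta}_k^{(1)},\dots,\hat{\theta}_k^{(M)}]$ denote the concatenated particles with stochastic and full gradients, respectively; by exchangeability of particles, $\tilde{\mathcal{B}}(\mu_T,\hat{\mu}_T) = \bigl|\mathbb{E}[\tfrac{1}{M}\sum_i f(\theta_T^{(i)})] - \mathbb{E}[\tfrac{1}{M}\sum_i f(\hat{\theta}_T^{(i)})]\bigr|$, so it suffices to compare the test averages under the two coupled $Md$-dimensional processes.

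First, I would invoke the solution $\psi$ of the Poisson equation in Assumption~\ref{assnew:ass2} with its generator identity $\mathcal{G}\psi(\hat{\Theta}) = \tfrac{1}{M}\sum_i f(\hat{\theta}^{(i)}) - \mathbb{E}_{p(\theta|\mathcal{X})}[f(\theta)]$; this and the analogous expansion of $\mathbb{E}[\psi(\Theta_{k+1})-\psi(\Theta_k)]$ via a Taylor expansion around $\Theta_k$ turn the target expectation gap into a telescoping sum whose one-step remainder is driven by the discrepancy $\Delta_k \triangleq G^{\Theta}_{\mathcal{I}_k} - F_{\Thetab}(\Theta_k)$ between the minibatch drift and the full drift. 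Second, following the proof pattern of Theorem~\ref{theonew:adanum_error} and the SGLD-error arguments in \cite{XuCZG:arxiv17}, I would bound $\mathbb{E}\|\Delta_k\|^2$ by $\tfrac{1}{B}\cdot\bigl(\text{variance of per-sample drift}\bigr)$; Lemma~\ref{lem:lip} provides the $L_{\Thetab}$-Lipschitz and dissipative properties of $F_{\Thetab}$ (with $L_{\Thetab} \le \sqrt{2}\beta^{-1}L_F + l'$), which in turn let me replace the variance by a multiple of $L_{\Thetab}^2\mathbb{E}\|\Theta_k\|^2 + MC_4$-type terms. Third, I would substitute the uniform moment bound $\mathbb{E}\|\Theta_k\|^2 \le M\gamma_0 + (1\vee 1/m')\cdot 2Md/\beta$ from Theorem~\ref{app:use3}, which yields the factor $\Gamma'$ of the statement, and then aggregate the one-step errors: summing $T$ contributions of order $h\cdot L_{\Thetab}\cdot(\mathbb{E}\|\Delta_k\|^2)^{1/2}\cdot\sqrt{\Gamma'+\text{const}}$ produces the $Th$ prefactor, while the $\sqrt{1/(BM)}$ factor emerges from normalizing the variance by the effective sample size $BM$ used in approximating the average drift across particles.

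The main obstacle is properly tracking the $M$-dependence through the concatenated drift: $F_{\Thetab}$ mixes all particles through the kernel interaction terms $K$ and $\nabla K$, so the naive variance bound scales with $M$ and one must extract a cancellation to recover the desired $1/\sqrt{BM}$ rate rather than $1/\sqrt{B}$. The argument requires carefully splitting $\Delta_k$ into a diagonal component (affecting only particle $i$ via $F(\theta_k^{(i)})$) and an off-diagonal kernel-weighted average (affecting $\theta_k^{(i)}$ via $\frac{1}{M}\sum_j K(\theta_k^{(i)}-\theta_k^{(j)}) F(\theta_k^{(j)})$), then applying conditional-independence of the minibatch index $\mathcal{I}_k$ across the $M$ particle slots to obtain the $1/M$ variance reduction; this is where the $(6+2\Gamma')$ constant and the structure of $(L_{\Thetab}\Gamma' + MC_4)$ in the bound arise. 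Once this bookkeeping is in place, the remaining summation and the application of Cauchy--Schwarz together with Assumption~\ref{assnew:ass2} to control $\|\nabla\psi\|$ and $\|\nabla^2\psi\|$ are routine.
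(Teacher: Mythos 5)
Your route differs from the paper's, and it contains a gap at the one point that actually matters for the stated rate. The paper does not use the Poisson-equation/telescoping machinery here at all: it couples the stochastic-gradient chain $\Theta_k$ and the full-gradient chain $\hat{\Theta}_k$ on the concatenated space $\mathbb{R}^{Md}$, cites Lemma 4.4 of \cite{XuCZG:arxiv17} to get ${W}_2(\mu_k^{\Theta},\hat{\mu}_k^{\Thetab})\leq kh(L_{\Thetab}\Gamma^{\prime}+MC_4)\left((6+2\Gamma^{\prime})\beta/B\right)^{1/2}$, then uses ${W}_1\leq {W}_2$, the projection inequality ${W}_1(\mu_k,\hat{\mu}_k)\leq M^{-1/2}{W}_1(\mu_k^{\Theta},\hat{\mu}_k^{\Thetab})$ from Lemma~\ref{lem:W}, and finally $\tilde{\mathcal{B}}(\mu_T,\hat{\mu}_T)\leq L_f {W}_1(\mu_T,\hat{\mu}_T)$. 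In particular, the factor $M^{-1/2}$ is purely structural: a $1$-Lipschitz test function of a single particle, averaged over particles and rescaled by $\sqrt{M}$, is $1$-Lipschitz on $\mathbb{R}^{Md}$, so the marginal $W_1$ distance is smaller than the joint one by $\sqrt{M}$. It has nothing to do with variance reduction of the stochastic gradient.

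The gap in your argument is precisely where you claim the $1/\sqrt{BM}$ rate comes from ``conditional-independence of the minibatch index $\mathcal{I}_k$ across the $M$ particle slots.'' There is no such independence: by \eqref{eq:particle_num}, a single minibatch $\mathcal{I}_k$ is drawn per iteration and shared by all $M$ particles ($G_k^{(j)}=\frac{N}{B_k}\sum_{q\in\mathcal{I}_k}F_q(\theta_k^{(j)})$ with the same $\mathcal{I}_k$ for every $j$), so the gradient-noise contributions to the $M$ blocks of $\Delta_k$ are fully correlated and averaging over particles buys you nothing. Your own ``main obstacle'' paragraph correctly identifies that a naive variance bound gives $1/\sqrt{B}$ rather than $1/\sqrt{BM}$, but the device you invoke to close that gap does not exist in this algorithm; the correct device is Lemma~\ref{lem:W}. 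Separately, the Poisson-equation telescoping you sketch compares a chain to the \emph{stationary} expectation, not two finite-time chains to each other, so even the skeleton of your reduction would need to be replaced by a direct coupling (or a perturbation bound between the two discrete chains) before the bookkeeping you describe could be carried out.
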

Finally, by combining the results from Theorem~\ref{theonew:b21_b22}, \ref{theonew:sg_bound} and $\RN{3})$ in Assumption~\ref{assnew:ass3}, we arrive at a bound for our target $\tilde{\mathcal{B}}(\mu_T,\nu_{\infty})$, summarized in Theorem~\ref{theonew:main}.

\begin{theorem}\label{theonew:main}
	Under Assumptions~\ref{assnew:ass2}--\ref{assnew:ass3}, there exist some positive constants $(C_2, C_3, C_4,C_5,C_6)$ such that:
	\begin{align*}
	\tilde{\mathcal{B}}(\mu_T,\nu_{\infty}) \leq& C_2\varsigma\sigma^{-Md/2}(1+\varsigma e^{m_{\Thetab}h})
	\times \exp\left(-2m_{\Thetab}Th \sigma^{Md}/\log(\varsigma)\right) +C_3h/\beta \\
	&+ C_5Th(L_{\Thetab}\Gamma^{\prime}+MC_4)\left( (6+2\Gamma^{\prime})\beta/{(BM)} \right)^{1/2}+C_6 / \sqrt{M},
	\end{align*}
	where $\sigma$, $\varsigma$ and $\Gamma^{\prime}$ are the same as those in Theorem~\ref{theonew:b21_b22}--\ref{theonew:sg_bound}.
\end{theorem}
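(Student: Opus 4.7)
The plan is to obtain the bound by applying the triangle-type decomposition already stated in equation~\eqref{decom:decom1}, namely
\begin{align*}
\tilde{\mathcal{B}}(\mu_T,\nu_{\infty}) \leq \tilde{\mathcal{B}}(\mu_T,\hat{\mu}_T)+\tilde{\mathcal{B}}(\hat{\mu}_T,\hat{\mu}_{\infty})+\tilde{\mathcal{B}}(\hat{\mu}_{\infty},\rho_{\infty})+\tilde{\mathcal{B}}(\rho_{\infty},\nu_{\infty}),
\end{align*}
and then invoking the three technical theorems and the one assumption that each take care of a different piece. This is a ``combine the ingredients'' proof rather than a new analytical argument, so the work is mostly bookkeeping of constants.

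First I would bound $\tilde{\mathcal{B}}(\hat{\mu}_T,\hat{\mu}_{\infty})$ and $\tilde{\mathcal{B}}(\hat{\mu}_{\infty},\rho_{\infty})$ directly by the two estimates in Theorem~\ref{theonew:b21_b22}. These give the first and second summands in the stated bound: the exponential-decay term $C_2\varsigma\sigma^{-Md/2}(1+\varsigma e^{m_{\Thetab}h})\exp(-2m_{\Thetab}Th\sigma^{Md}/\log\varsigma)$ comes from the geometric-ergodicity estimate for the numerical scheme with full gradients, while $C_3 h/\beta$ is the Euler discretization error between $\hat{\mu}_\infty$ and $\rho_\infty$. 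Note the Lipschitz and dissipativity parameters $(L_{\Thetab},m_{\Thetab})$ appearing through $\varsigma$ are inherited from Lemma~\ref{lem:lip}, so no additional estimation is needed.

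Next I would control the stochastic-gradient term $\tilde{\mathcal{B}}(\mu_T,\hat{\mu}_T)$ by a direct appeal to Theorem~\ref{theonew:sg_bound}, which gives the third summand $C_5 Th(L_{\Thetab}\Gamma'+MC_4)\sqrt{(6+2\Gamma')\beta/(BM)}$. Finally, the particle-approximation term $\tilde{\mathcal{B}}(\rho_\infty,\nu_\infty)$ is handled by part~\RN{3}) of Assumption~\ref{assnew:ass3}, which asserts exactly that $\mathcal{W}_1(\rho_\infty,\nu_\infty)=O(M^{-1/2})$; since $f$ is Lipschitz by Assumption~\ref{assnew:ass2}, we have $\tilde{\mathcal{B}}(\rho_\infty,\nu_\infty)\le L_f\,\mathcal{W}_1(\rho_\infty,\nu_\infty)\le C_6/\sqrt{M}$ for a suitable constant $C_6$. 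Summing the four bounds yields the statement.

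The main (non-)obstacle is really just that there is no single hard step here: the heavy lifting has been isolated into Theorems~\ref{theonew:b21_b22} and~\ref{theonew:sg_bound} and into the convex-case result which is transplanted via Assumption~\ref{assnew:ass3}(\RN{3}). What needs careful attention is the consistency of notation between the four bounds, in particular that the Lipschitz/dissipativity constants $L_{\Thetab}$, $m_{\Thetab}$, $\Gamma'$ and the auxiliary $\varsigma$ computed from Lemma~\ref{lem:lip} are the same objects in all statements, and that the final constants $(C_2,C_3,C_4,C_5,C_6)$ can indeed be chosen independently of $(T,M,h)$. A secondary subtlety is that Assumption~\ref{assnew:ass3}(\RN{3}) is stated only in the asymptotic regime $t\to\infty$ and for $1$-Lipschitz test functions; I would briefly note that by rescaling $f/L_f$ and taking $t\to\infty$ in the SDE flow, this transfers to $\tilde{\mathcal{B}}(\rho_\infty,\nu_\infty)$ with the constant $C_6$ absorbing $L_f$ and the hidden constant in the $O(M^{-1/2})$ rate.
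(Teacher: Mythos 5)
Your proposal is correct and follows exactly the route the paper takes: the decomposition \eqref{decom:decom1} with the first three terms bounded by Theorems~\ref{theonew:sg_bound} and \ref{theonew:b21_b22} and the last term by Assumption~\ref{assnew:ass3}(\RN{3}) together with the Lipschitz property of $f$. The paper gives no further argument beyond this combination, so your bookkeeping (including absorbing $L_f$ into $C_6$) matches its intent.
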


\section{Proof of Theorem~\ref{theonew:b21_b22}}
\begin{proof}[Proof of Theorem~\ref{theonew:b21_b22}]
	Our conclusion for $\tilde{\mathcal{B}}(\hat{\mu}_T,\hat{\mu}_{\infty})$ is essentially a specification of the result in \cite{MattinglySH:SPA02}, which has also been applied in \cite{XuCZG:arxiv17}.
	
	Specifically, we rely on the following lemma, which is essentially Theorem~7.3 in \cite{MattinglySH:SPA02} and Lemma~C.3 in \cite{XuCZG:arxiv17}. Consider the following SDE (eq.\ref{eqnew:extendP}):
	\begin{align*}
	\mathrm{d}\Thetab_{\tau} = -F_{\Thetab}(\Thetab_{\tau})\mathrm{d}\tau + \sqrt{2\beta^{-1}}\mathrm{d}\mathcal{W}_{\tau}^{(Md)}~ 
	\end{align*}
	As mentioned in Section 5, we denote the distribution of $\Thetab_{\tau}$ as $\rho^{\Thetab}_{\tau}$, and define $\hat{\Theta}_k\triangleq [\hat{\theta}_{k}^{(1)}, \cdots, \hat{\theta}_{k}^{(M)}] \in \mathbb{R}^{Md}$, which is actually the numerical solution of \eqref{eqnew:extendP} using full gradient with Euler method. Denote the distribution of $\hat{\Theta}_k$ as $\hat{\mu}^{\Theta}_k$.
	\begin{lemma}\label{lem:mattingly}
		Let $F_{\Thetab}$ be Lipschitz-continuous with constant $L_{\Thetab}$, and satisfy the dissipative property that $\langle F_{\Thetab}(\Thetab), \Thetab\rangle \geq m_{\Thetab}\left\|\Thetab\right\|^2 - b_{\Thetab}$. Define $V_{\Thetab}(\Thetab) = C_0 + L_{\Thetab}/2\|\Thetab\|^2$. The Euler method for \eqref{eqnew:extendP}  has a unique invariant measure $\hat{\mu}^{\Theta}_{\infty}$, and for all test function $f_{\Thetab}$ such that $|f_{\Thetab}| \leq V_{\Thetab}(\Thetab)$, we have
		\begin{align*}
		&\left|\mathbb{E}[f_{\Thetab}(\hat{\Theta}_k))] - \mathbb{E}_{\hat{\Theta}_{\infty} \sim \hat{\mu}^{\Theta}_{\infty} }[f(\hat{\Theta}_{\infty})]\right| \\
		\leq& C\kappa\rho^{-Md/2}(1 + \kappa e^{m_{\Thetab}h})\exp\left(-\frac{2m_{\Thetab}kh\rho^{Md}}{\log(\kappa)}\right)~,
		\end{align*}
		where $\rho\in (0, 1)$, $C > 0$ are positive constants, and $\kappa = 2L_{\Thetab}(b_{\Thetab}\beta + m_{\Thetab}\beta + Md)/m_{\Thetab}$.
	\end{lemma}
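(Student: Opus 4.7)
The plan is to realize Lemma~\ref{lem:mattingly} as a specialization of Theorem~7.3 of Mattingly--Stuart--Higham (2002) to the Euler chain from \eqref{eqapp:numerror1} for the SDE \eqref{eqnew:extendP}. That theorem delivers geometric ergodicity once two classical ingredients are verified: (i) a Foster--Lyapunov drift inequality for the one-step transition operator $P$ with Lyapunov function $V_{\Thetab}$, and (ii) a minorization (``small-set'') condition on sublevel sets of $V_{\Thetab}$. The proof therefore reduces to checking these two conditions for our specific $F_{\Thetab}$ and bookkeeping the resulting constants so that they agree with the explicit expressions $\kappa = 2L_{\Thetab}(b_{\Thetab}\beta + m_{\Thetab}\beta + Md)/m_{\Thetab}$ and $\rho\in(0,1)$ in the stated bound.

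For the drift condition, I would first compute
\begin{align*}
\mathbb{E}\bigl[\|\hat{\Theta}_{k+1}\|^2 \,\big|\, \hat{\Theta}_k = \Thetab\bigr]
= \|\Thetab - h F_{\Thetab}(\Thetab)\|^2 + 2\beta^{-1}h\, Md,
\end{align*}
then expand and apply the dissipative hypothesis $\langle F_{\Thetab}(\Thetab),\Thetab\rangle \ge m_{\Thetab}\|\Thetab\|^2 - b_{\Thetab}$ together with the Lipschitz bound $\|F_{\Thetab}(\Thetab)\|^2 \le 2L_{\Thetab}^2\|\Thetab\|^2 + 2\|F_{\Thetab}(\mathbf{0})\|^2$ (with $F_{\Thetab}(\mathbf{0}) = \mathbf{0}$ by Assumption~\ref{assnew:ass1}). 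For $h$ small enough this yields a contraction $\mathbb{E}[V_{\Thetab}(\hat\Theta_{k+1})\mid \hat\Theta_k=\Thetab] \le e^{-m_{\Thetab}h}V_{\Thetab}(\Thetab) + h\,\gamma$ with $\gamma$ of order $M b_{\Thetab} + Md/\beta + L_{\Thetab}$. Balancing this right-hand side against $m_{\Thetab}V_{\Thetab}$ defines the sublevel radius on which minorization will be required, and it is this balance that produces the specific numerator $b_{\Thetab}\beta + m_{\Thetab}\beta + Md$ appearing in $\kappa$.

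For the minorization, I would use the fact that the Euler transition kernel has the explicit Gaussian density $p(\Thetab,\Thetab') = (4\pi\beta^{-1}h)^{-Md/2}\exp\bigl(-\tfrac{\beta}{4h}\|\Thetab' - \Thetab + hF_{\Thetab}(\Thetab)\|^2\bigr)$. On any fixed sublevel set $\{V_{\Thetab}\le R\}$, the shifted centers $\Thetab - hF_{\Thetab}(\Thetab)$ stay in a common ball (using Lipschitzness of $F_{\Thetab}$), so $p(\Thetab,\cdot)$ is uniformly bounded below by a Gaussian density on a common reference ball. This gives $P(\Thetab,\cdot) \ge c_{\mathrm{min}}\,\eta(\cdot)$ on the small set with $c_{\mathrm{min}}$ of order $\rho^{Md}$ for some $\rho\in(0,1)$ depending only on $R$, $L_{\Thetab}$ and $\beta h$. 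This is the source of both the $\rho^{-Md/2}$ prefactor (from the normalization constant of the Gaussian used to build $\eta$) and the $\rho^{Md}$ factor inside the exponential decay rate.

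Feeding the drift and minorization into the weighted total-variation coupling argument that constitutes the proof of Theorem~7.3 of MSH then produces existence of a unique invariant measure $\hat\mu^{\Theta}_{\infty}$ and the stated bound for all $|f_{\Thetab}|\le V_{\Thetab}$. The main obstacle I anticipate is the dimension bookkeeping: the compound form $C\kappa\rho^{-Md/2}(1 + \kappa e^{m_{\Thetab}h})\exp(-2m_{\Thetab}kh\rho^{Md}/\log\kappa)$ is delicate because the Gaussian minorization constant degrades exponentially in $Md$ and the Harris trade-off between enlarging the sublevel set (to secure minorization) and retaining fast Lyapunov contraction introduces the $1/\log\kappa$ factor in the exponential rate. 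Matching the precise constants stated (rather than a looser but cleaner rate) requires following the proof of MSH line by line in our setting, keeping the dependence on $(M, d, h, \beta, L_{\Thetab}, m_{\Thetab}, b_{\Thetab})$ visible at every step, rather than invoking the result as a black box.
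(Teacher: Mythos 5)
Your proposal is correct and follows essentially the same route as the paper: the paper proves this lemma purely by citation, stating it is "essentially Theorem~7.3 in Mattingly--Stuart--Higham (2002) and Lemma~C.3 in Xu et al.", and your drift-plus-minorization outline (Foster--Lyapunov contraction from dissipativity and Lipschitzness of $F_{\Thetab}$, Gaussian small-set minorization with constant degrading like $\rho^{Md}$, then the weighted total-variation Harris coupling) is exactly the argument underlying that cited theorem. Your accounting for where the $\rho^{-Md/2}$ prefactor, the $\rho^{Md}$ rate factor, and the constant $\kappa$ come from is consistent with the stated bound, so the only remaining work is the line-by-line constant tracking you already flag, which the paper likewise delegates to the reference.
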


	Now we define $f_{\Thetab}:\mathbb{R}^{Md}\rightarrow \mathbb{R}$ as $f_{\Thetab}(\Thetab) =\frac{1}{M}\sum_{i}^{M}f(\thetab^{(i)})$, where $f:\mathbb{R}^{d}\rightarrow \mathbb{R}$ is a $L_f$-Lipschitz function satisfying our Assumption~\ref{assnew:ass2}, and $\Thetab \triangleq [{\thetab}^{(1)}, \cdots, {\thetab}^{(M)}]$.  Similar to the proof of Lemma~\ref{lem:Gronwall}, we can find that $f_{\Thetab}:\mathbb{R}^{Md}$ is a $L_f/\sqrt{M}$-Lipschitz function. Furthermore, according to Lemma~\ref{lem:lip}, it is easily check that $F_{\Thetab}$ is $L_{\Thetab}$-Lipschitz where $L_{\Thetab} = \sqrt{2}\beta^{-1}L_F+l^{\prime}$. Hence, when $\beta$ is small enough, we have $L_f/\sqrt{M}\leq \sqrt{2}\beta^{-1}L_F+l^{\prime}$. As a result, we can set the $C_0$ large enough to force $f_{\Thetab}$ to satisfy the condition in Lemma~\ref{lem:mattingly} that $|f_{\Thetab}| \leq V_{\Thetab}(\Thetab)$. According to the exchangeability of the particle system $\{\hat{\theta}_{k}^{(i)}\}$ and Lemma~\ref{lem:lip}, we can bound $\tilde{\mathcal{B}}(\hat{\mu}_T,\hat{\mu}_{\infty})$ as 
	\begin{align*}
	&\tilde{\mathcal{B}}(\hat{\mu}_T,\hat{\mu}_{\infty}) \leq \left|\mathbb{E}[f_{\Thetab}(\hat{\Theta}_T))] - \mathbb{E}_{\hat{\Theta}_{\infty} \sim \hat{\mu}^{\Theta}_{\infty} }[f(\hat{\Theta}_{\infty})]\right|\\
	\leq& C_2\varsigma\sigma^{-Md/2}(1+\varsigma e^{m_{\Thetab}h})\exp\left(-2m_{\Thetab}Th \sigma^{Md}/\log(\varsigma)\right) ~~
	\end{align*}
	where $\varsigma = 2L_{\Thetab}(Mb\beta + m_{\Thetab}\beta + Md)/m_{\Thetab}$, $L_{\Thetab}=\sqrt{2} \beta^{-1}L_F+l^{\prime}$, $m_{\Thetab}=\beta^{-1}m-m^{\prime}$, and $(\sigma,C_2,l^{\prime},m^{\prime})$ are some positive constants independent of (T, M, h) and $\sigma\in(0,1)$.

	%
	To prove the bound for $\tilde{\mathcal{B}}(\hat{\mu}_{\infty},\rho_{\infty})$, 
	since $\hat{\Theta}_{k}=(\hat{\theta}_{k}^{(1)}, \cdots, \hat{\theta}_{k}^{(M)})$ can be considered as a solution to the SDE \eqref{eqnew:extendP}, standard results from linear FP equation can be applied. Specifically, for the $\tilde{\mathcal{B}}(\hat{\mu}_{\infty},\rho_{\infty})$ term, we rely on the following lemma adapted from Lemma~C.4 in \cite{XuCZG:arxiv17,ChenDC:NIPS15}, which is essentially  the result of \cite{ChenDC:NIPS15} when taking $T\rightarrow \infty$.
	\begin{lemma}\label{lem:particlemu}
		Under the same assumption as in Lemma~\ref{lem:mattingly}, for the Lipschitz-continuous function $f_{\Thetab}(\Thetab) =\frac{1}{M}\sum_{i}^{M}f(\thetab^{(i)})$ mentioned above, the following bound is satisfied for some positive constant $C$:
		\begin{align*}
		\left|\frac{1}{T}\sum_{k=1}^{T-1}\mathbb{E}[f_{\Thetab}(\hat{\Theta}_{k})] - \mathbb{E}_{{\Thetab}_{\infty} \sim {\rho}^{\Thetab}_{\infty} }[f({\Thetab}_{\infty})]\right| \leq C(\frac{h}{\beta}+\frac{\beta}{Th})~.
		\end{align*}
	\end{lemma}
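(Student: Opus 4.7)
The plan is to prove Lemma~\ref{lem:particlemu} by directly adapting the standard Poisson-equation analysis of \cite{ChenDC:NIPS15} to the concatenated SDE \eqref{eqnew:extendP}, viewing $\hat{\Theta}_k$ as a classical Euler discretization of a linear It\^o diffusion in $\mathbb{R}^{Md}$. The observable of interest is $f_{\Thetab}(\Thetab)=\frac{1}{M}\sum_i f(\thetab^{(i)})$, which is Lipschitz on $\mathbb{R}^{Md}$ (as shown in the proof of Lemma~\ref{lem:W}) and, by Assumption~\ref{assnew:ass2}, lies in the domain for which the Poisson equation $\mathcal{G}\psi = f_{\Thetab} - \bar{f}_{\Thetab}$ has a solution with controlled derivatives, where $\mathcal{G}$ is the infinitesimal generator of \eqref{eqnew:extendP} and $\bar{f}_{\Thetab}\triangleq \mathbb{E}_{\Thetab_\infty\sim \rho^{\Thetab}_\infty}[f_{\Thetab}(\Thetab_\infty)]$.

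First I would apply a discrete Taylor expansion to $\psi(\hat{\Theta}_{k+1})-\psi(\hat{\Theta}_k)$ along one Euler step $\hat{\Theta}_{k+1}=\hat{\Theta}_k - F_{\Thetab}(\hat{\Theta}_k)h + \sqrt{2\beta^{-1}h}\,\Xi_k$. Taking expectations and using $\mathcal{G}\psi = \bigl\langle -F_{\Thetab},\nabla\psi\bigr\rangle + \beta^{-1}\Delta\psi$ produces the identity
\begin{align*}
\mathbb{E}\psi(\hat{\Theta}_{k+1}) - \mathbb{E}\psi(\hat{\Theta}_k) = h\,\mathbb{E}\bigl[\mathcal{G}\psi(\hat{\Theta}_k)\bigr] + h\,\mathbb{E}\bigl[R_k\bigr],
\end{align*}
where $R_k$ collects the Taylor remainder terms involving $\mathcal{D}^2\psi, \mathcal{D}^3\psi, \mathcal{D}^4\psi$ evaluated on the Euler segment. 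Substituting $\mathcal{G}\psi = f_{\Thetab}-\bar{f}_{\Thetab}$, summing over $k=0,\dots,T-1$, dividing by $Th$, and telescoping gives
\begin{align*}
\frac{1}{T}\sum_{k=0}^{T-1}\mathbb{E}[f_{\Thetab}(\hat{\Theta}_k)] - \bar{f}_{\Thetab} = \frac{\mathbb{E}\psi(\hat{\Theta}_T)-\mathbb{E}\psi(\hat{\Theta}_0)}{Th} - \frac{1}{T}\sum_{k=0}^{T-1}\mathbb{E}[R_k].
\end{align*}
The boundary term is bounded by $2\sup_k\mathbb{E}\|\psi(\hat{\Theta}_k)\|/(Th)$, which by Assumption~\ref{assnew:ass2} together with the uniform second-moment bound on $\hat{\Theta}_k$ (Theorem~\ref{app:use3}) yields a contribution of order $O(\beta/(Th))$, since tracking the $\beta$-dependence of the solution to the Poisson equation gives a prefactor scaling linearly in $\beta$ (the generator carries a $\beta^{-1}$ in front of the Laplacian). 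The remainder term $R_k$ is, after a second-order Taylor expansion and exploiting $\mathbb{E}[\Xi_k]=0$, $\mathbb{E}[\Xi_k\Xi_k^\top]=I$, controlled by $C h\,\|\mathcal{D}^3\psi\| \cdot \mathbb{E}\|F_{\Thetab}\|^3 + Ch\beta^{-1}\|\mathcal{D}^3\psi\|\cdot \mathbb{E}\|F_{\Thetab}\|$ plus $O(h)$ contributions from the $\Xi_k^{\otimes 4}$ moments. Using the Lipschitz/dissipativity bounds on $F_{\Thetab}$ (Lemma~\ref{lem:lip}) to control moments and the derivative bounds $\|\mathcal{D}^k\psi\|\leq H_k\mathcal{V}^{p_k}$ from Assumption~\ref{assnew:ass2}, each term is $O(h/\beta)$ after carefully tracking $\beta$-powers.

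Combining the boundary term and the remainder term yields $|T^{-1}\sum_{k=0}^{T-1}\mathbb{E}[f_{\Thetab}(\hat{\Theta}_k)] - \bar{f}_{\Thetab}| \leq C(h/\beta + \beta/(Th))$, which is exactly the claim (the index shift between $\sum_{k=1}^{T-1}$ and $\sum_{k=0}^{T-1}$ contributes a $O(1/T)$ term absorbed into $O(\beta/(Th))$). The main obstacle I anticipate is bookkeeping the $\beta$-dependence of $\psi$ and its derivatives to make sure the two error pieces come out as $h/\beta$ and $\beta/(Th)$ respectively rather than some weaker combination; this amounts to observing that $\psi$ solves $\beta^{-1}\Delta\psi + \langle -F_{\Thetab},\nabla\psi\rangle = f_{\Thetab}-\bar{f}_{\Thetab}$, so $\psi$ scales proportionally to $\beta$ under rescaling, which transfers the temperature factor into the boundary term. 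The rest is standard weak-order-one Euler analysis verbatim from \cite{ChenDC:NIPS15}, with Assumption~\ref{assnew:ass2} guaranteeing the regularity hypotheses required on the concatenated state space.
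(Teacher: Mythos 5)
Your proposal is correct and follows essentially the same route as the paper: the paper does not prove this lemma from scratch but simply invokes Lemma~C.4 of \cite{XuCZG:arxiv17} and the weak-convergence analysis of \cite{ChenDC:NIPS15}, and the Poisson-equation / weak Taylor-expansion / telescoping argument you sketch is precisely the machinery of that cited analysis, applied to the concatenated diffusion \eqref{eqnew:extendP} with Assumption~\ref{assnew:ass2} supplying the required regularity of $\psi$. The only part the paper leaves implicit that you also treat somewhat heuristically is the $\beta$-bookkeeping that separates the two error terms into $h/\beta$ and $\beta/(Th)$, but your identification of where each factor enters (the $\beta^{-1}$ in the generator's diffusion term versus the boundary/telescope term) is consistent with the stated bound.
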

	The uniqueness of invariant measure of the Euler method from Lemma~\ref{lem:mattingly} implies the numerical solution $\hat{\Theta}_{k}$ to be ergodic. Then similar to the proof of Lemma 4.2 in \cite{XuCZG:arxiv17}, we consider the case where $T \rightarrow \infty$. Taking average over the $\{\hat{\Theta}_{k}\}_{k=0}^{T-1}$, we have 
	\begin{align*}
	\mathbb{E}_{\hat{\Theta}_{\infty} \sim \hat{\mu}^{\Thetab}_{\infty}}[f_{\Thetab}(\hat{\Theta}_{\infty})]=\lim_{T \rightarrow \infty}	\frac{1}{T}\sum_{k=1}^{T}\mathbb{E}[f_{\Thetab}(\hat{\Theta}_{k})]
	\end{align*}
	Now according to the exchangeability of the particle system $\{\hat{\theta}_{k}^{(i)}\}$ and $\{{\thetab}_{\tau}^{(i)}\}$, we can bound the $\tilde{\mathcal{B}}(\hat{\mu}_{\infty},\rho_{\infty})$ as :	
	\begin{align*}
	&\tilde{\mathcal{B}}(\hat{\mu}_{\infty},\rho_{\infty}) \leq \left|\mathbb{E}_{\hat{\Theta}_{\infty} \sim \hat{\mu}^{\Thetab}_{\infty} }[f_{\Thetab}(\hat{\Theta}_{\infty})] - \mathbb{E}_{{\Thetab}_{\infty} \sim {\rho}^{\Thetab}_{\infty} }[f_{\Thetab}({\Thetab}_{\infty})]\right| \\
	& \leq  C_3h/\beta~,
	\end{align*}
	where $C_3$ are some positive constant.
\end{proof}

\section{Proof of Theorem~\ref{theonew:sg_bound}}

\begin{proof}[Proof of Theorem~\ref{theonew:sg_bound}]
	Adopting the same notation used in the proof of Theorem~\ref{theonew:num_error}, we define ${\Theta}_k \triangleq [{\theta}_{k}^{(1)}, \cdots, {\theta}_{k}^{(M)}]$ and $G_{\mathcal{I}_k}^{\Theta} \triangleq \frac{N}{B_k}\sum_{q\in\mathcal{I}_k}F_{(q)\Thetab}(\Theta_k)$. We denote the distribution of ${\Theta}_k$ as $\mu^{\Theta}_k$, where
	\begin{align*}
	\Theta_{k+1} = \Theta_{k} - \beta^{-1}G_{\mathcal{I}_k}^{\Theta}h_k + \sqrt{2\beta^{-1}h_k}\Xi_{k}~.
	\end{align*}
	We firstly derive a bound for ${W}_2(\mu^{\Theta}_k,\hat{\mu}^{\Thetab}_{k})$ (the definition of $\hat{\mu}^{\Thetab}_{k}$ is given in the last section).
	According to the proof of Lemma 4.4 in  \cite{XuCZG:arxiv17}
	\begin{align*}
	{W}_2(\mu^{\Theta}_k,\hat{\mu}^{\Thetab}_{k})\leq kh(L_{\Thetab}\Gamma^{\prime}+MC_4)\left( (6+2\Gamma^{\prime})\beta/{B} \right)^{1/2}
	\end{align*}
	where $\Gamma^{\prime}=2(1+1/m_{\Theta})(Mb+2M^2C_4^2+Md/\beta)$ and $C_4$ is some positive constant independent of (T, M, h). 
	By applying the facts that ${W}_1(\mu^{\Theta}_k,\hat{\mu}^{\Thetab}_{k})\leq {W}_2(\mu^{\Theta}_k,\hat{\mu}^{\Thetab}_{k})$  and ${W}_1(\mu_k,\hat{\mu}_{k})\leq \frac{1}{\sqrt{M}} {W}_1(\mu^{\Theta}_k,\hat{\mu}^{\Thetab}_{k})$ (see the proof of Lemma~\ref{lem:W}, similar result holds here), 
	we get 
	\begin{align*}
	{W}_1(\mu_T,\hat{\mu}_{T}) \leq Th(L_{\Thetab}\Gamma^{\prime}+MC_4)\left( (6+2\Gamma^{\prime})\beta/{(BM)} \right)^{1/2}.
	\end{align*}
	Since the definitions of $\mathcal{W}_1(\mu,\nu)$ and $\tilde{\mathcal{B}}(\mu,\nu)$ are given as:
	\begin{align*}
	&{W}_1(\mu,\nu) \triangleq \sup_{\|g\|_{lip}\leq 1}\left|\mathbb{E}_{\thetab \sim \mu}[g(\thetab)] - \mathbb{E}_{\thetab \sim \nu}[g(\thetab)]\right|\\
	&\tilde{\mathcal{B}}(\mu,\nu) \triangleq \left|\mathbb{E}_{\thetab \sim \mu}[f(\thetab)] - \mathbb{E}_{\thetab \sim \nu}[f(\thetab)]\right|~,
	\end{align*}
	it is easily seen that $\tilde{\mathcal{B}}(\mu_T,\hat{\mu}_{K})\leq L_f{W}_1(\mu_T,\hat{\mu}_{T})$, which finishes the proof.
\end{proof}

\section{Discussion on the complexity of the proposed SPOS}\label{app:complexity}
The complexity of an algorithm  mainly refers to its time complexity (corresponding to the number of iterations in our method {\it i.e. T}) and space complexity (corresponding to the number of particles used in our method {\it i.e. M}). Hence the complexity of our method can be well explored with our work, since our non-asymptotic convergence theory is developed w.r.t.\! both the number of particles {\it i.e. M} and iterations {\it i.e. T}. Their relationship (tradeoff) is discussed { further} in the experiments. Moreover, by comparing \eqref{eq:particle_num} with \eqref{eq:svgd_update} , one can easily find that our space complexity is exactly the same as SVGD and our computational time in each iteration is almost the same as SVGD with an extra addition operation. However, it is worth noting that our method have much better performance in practice with no ``pitfall'' verified by both our theory and experiments.

\section{Comparison with Related Work}\label{app:related}

Firstly, our proposed framework SPOS is different from the recently proposed particle-optimization sampling framework \cite{ChenZWLC:tech18}, in the sense that we solve the nonlinear PDE \eqref{eq:unifiednew} stochastically. For example they deterministically solve the equation in \eqref{eq:unifiednew} $\partial \nu_{\tau} = \beta^{-1}\nabla_{\thetab}\cdot\nabla_{\thetab}\nu_{\tau}$ approximately using blob method adopted from \cite{CarrilloCP:arxiv17}.

Secondly, our method is also distinguishable to existing work on granular media equations such as \cite{DurmusEGZ:arxiv18}. The work about the granular media equations mainly focuses on the following PDE:
\begin{align}\label{eq:gran}
\partial_{\tau} \nu_{\tau} = \nabla_{\thetab}\cdot \left(\nu_{\tau}\beta^{-1}F(\thetab) + \nu_{\tau}\left(\nabla K*\nu_{\tau}(\thetab)\right)
+ \beta^{-1}\nabla_{\thetab}\nu_{\tau}\right)~,
\end{align}
whereas our framework focuses on the following one:
\begin{align}\label{eq:unifiednew1}
\partial_{\tau} \nu_{\tau} =& \nabla_{\thetab}\cdot \left(\nu_{\tau}\beta^{-1}F(\thetab) + \nu_{\tau}\left(E_{Y \sim\nu_{\tau}}K(\thetab-Y)F(Y)\right.\right. \nonumber\\
&\left.\left.-\nabla K*\nu_{\tau}(\thetab)\right)
+ \beta^{-1}\nabla_{\thetab}\nu_{\tau}\right)~.
\end{align}
The extra term $\nu_{\tau}\left(E_{Y \sim\nu_{\tau}}K(\thetab-Y)F(Y)\right)$ in our framework makes the analysis much more challenging. The main differences between our work and \cite{DurmusEGZ:arxiv18} including related work are summarized below:

\begin{itemize}
	\item Formulations are different. The extra term $E_{Y\sim\mu_{\tau}}K(\thetab-Y)F(Y)$ cannot be combined with the $F(\thetab)$ term in \eqref{eq:gran} in \cite{DurmusEGZ:arxiv18}. This is because function $F(\thetab)$ \textbf{itself} is a function independent of $\tau$; while $E_{Y\sim\mu_{\tau}}K(\thetab-Y)F(Y)$ depends on both $\thetab$ and $\tau$. This makes our problem much more difficult.
	\item Assumptions are different. For example, the analysis on granular media equations in \cite{CattiauxGM:PTRF08} requires that $F$ satisfies a special condition ${C(\mathbb{A}, \alpha)}$, which is a strong condition impractical to be satisfied in our case; And \cite{DurmusEGZ:arxiv18} adopts different assumptions from ours with a different goal.
	\item For the Euler integrator, \cite{DurmusEGZ:arxiv18} does not consider an Euler solution. Furthermore, our sampling method needs "stochastic gradient" {\it i.e.} $G_k^{(i)} \triangleq \frac{N}{B_k}\sum_{q\in\mathcal{I}_k}F_q(\theta_k^{(i)})$ in \eqref{eq:particle_num} for computational feasibility, which is quite different from the former work on particle-SDE such as \cite{Malrieu:AAP03,CattiauxGM:PTRF08}. Few of the former work on particle-SDE considered the stochastic gradient issue.
\end{itemize}

To sum up, the main purpose of our paper is to provide a non-asymptotic analysis of our method instead of improving the former work on a certain type of PDE. This is also the reason why we said that parts of our proof techniques are based on those for analyzing granular media equations. 

\section{Extra Experiments}\label{app:exp}

\subsection{Posterior sampling of a Gaussian model}\label{app:extpgau} 
We further follow \cite{ChenDC:NIPS15} and consider a relatively more complex Gaussian model for posterior sampling: $x_i \sim \mathcal{N}(\theta, 1), \theta \sim \mathcal{N}(0, 1)$, where 1000 
data samples $\{x_i\}$ are generated.
We adopt the same setting as above. The posterior average $\mathbb{E}_{\theta \sim p(\theta|\{x_i\})}[f(\theta)]$ endows an explicit expression. 
Figure~\ref{fig:gau_model} plots the error versus the running iterations for different particle sizes. It is observed that at the beginning, the errors for the ones with less particles decrease faster than those with more particles. This is reflected in the overall bound given in Theorem~\ref{theo:fixed}, which are dominated by the bound in Theorem~\ref{theonew:num_error} (indicating larger $M$ results in larger errors at the beginning). When more running time/iterations are given, the impact of the exponentially-decaying term in Theorem~\ref{theonew:Bt2} could be ignored. We also observe a trend of increasing errors when number of iterations are large enough, which is not drawn in the figure for simplicity.

\begin{figure}[t!]
	\centering
	\hspace{-0.0cm}
	\includegraphics[width=0.7\linewidth]{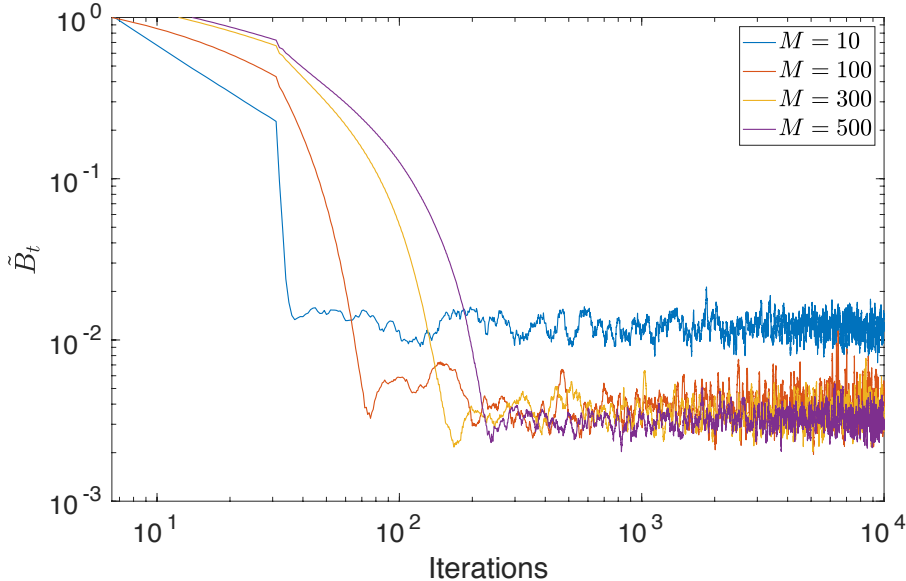}\\
	\vspace{-0.2cm}
	\caption{Errors versus \#iterations on a simple posterior Gaussian model.}\label{fig:gau_model}
\end{figure}

\subsection{Toy Experiments}

We compare the proposed SPOS with other popular methods such as SVGD and standard SGLD on four mutil-mode toy examples. We aim to sample from four unnormalized 2D densities $p(z) / \exp\{U(z)\}$, with the functional form provided in \cite{RezendeM:ICML15}. We optimize/sample 50 and 2000 particles to approximate the target distributions. The results are illustrated in Figure~\ref{fig:50toy} and Figure~\ref{fig:2000toy}, respectively.

\begin{figure}[h!] \centering
	\begin{tabular}{ccccc}
		\hspace{-4mm}
		\includegraphics[width=2cm]{./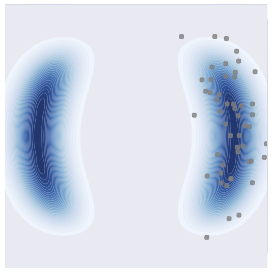}  
		&   \hspace{-5mm}
		\includegraphics[width=2cm]{./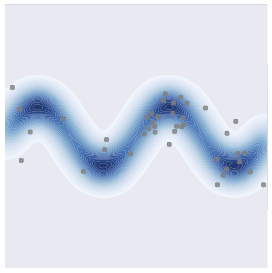} 
		& 		\hspace{-5mm}
		\includegraphics[width=2cm]{./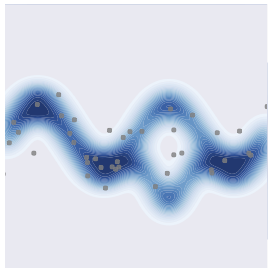}   
		&		\hspace{-5mm}
		\includegraphics[width=2cm]{./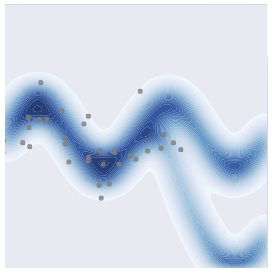}
		\vspace{-1mm}
		\\
		\hspace{-4mm}
		\includegraphics[width=2cm]{./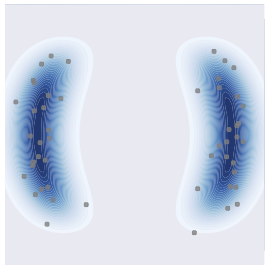}  
		&   \hspace{-5mm}
		\includegraphics[width=2cm]{./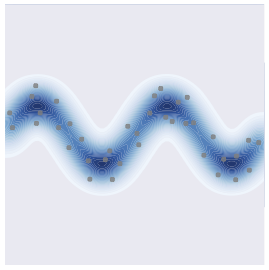} 
		& 		\hspace{-5mm}
		\includegraphics[width=2cm]{./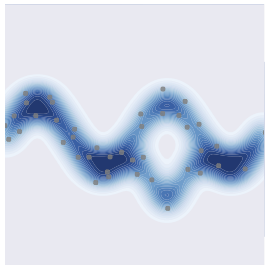}   
		&		\hspace{-5mm}
		\includegraphics[width=2cm]{./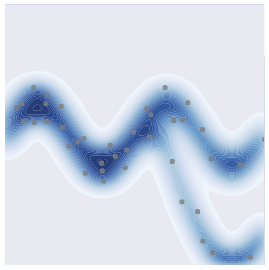}
		\vspace{-1mm}
		\\
		\hspace{-4mm}
		\includegraphics[width=2cm]{./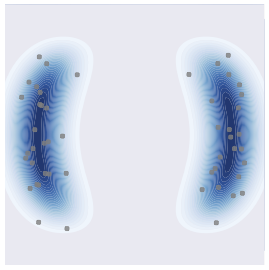}  
		&   \hspace{-5mm}
		\includegraphics[width=2cm]{./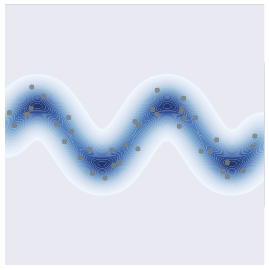} 
		& 		\hspace{-5mm}
		\includegraphics[width=2cm]{./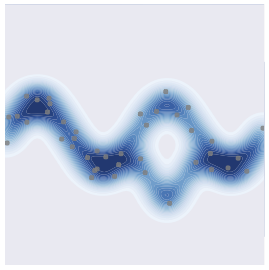}   
		&		\hspace{-5mm}
		\includegraphics[width=2cm]{./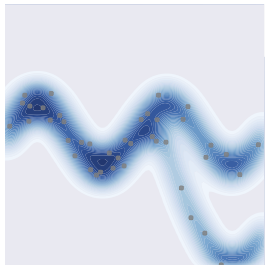} 
		\vspace{-1mm}
		\\
	\end{tabular}
	\caption{{Illustration of different algorithms on toy distributions. Dots are the final particles; the blue regions represent ground true densities. Each column is a distribution case. First row: standard SGLD; Second row: SVGD; Third row: SPOS}.}
	\label{fig:50toy}
	\vspace{-4mm}
\end{figure}

\begin{figure}[h!] \centering
	\begin{tabular}{ccccc}
		\hspace{-4mm}
		\includegraphics[width=2cm]{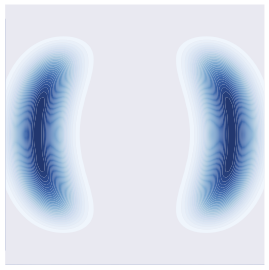}  
		&   \hspace{-5mm}
		\includegraphics[width=2cm]{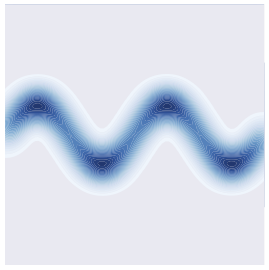} 
		& 		\hspace{-5mm}
		\includegraphics[width=2cm]{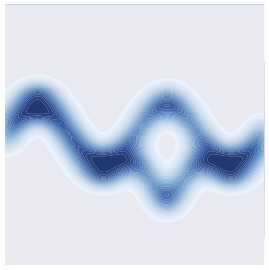}   
		&		\hspace{-5mm}
		\includegraphics[width=2cm]{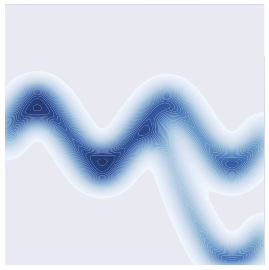}   
		\vspace{-1mm}
		\\
		\hspace{-4mm}
		\includegraphics[width=2cm]{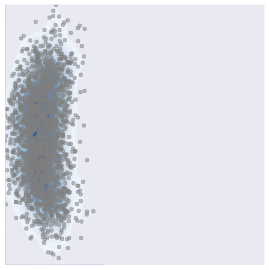}  
		&   \hspace{-5mm}
		\includegraphics[width=2cm]{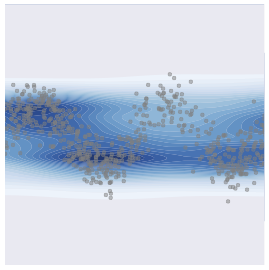} 
		& 		\hspace{-5mm}
		\includegraphics[width=2cm]{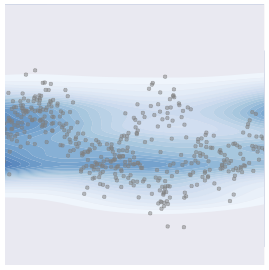}   
		&		\hspace{-5mm}
		\includegraphics[width=2cm]{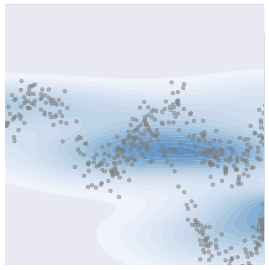}
		\vspace{-1mm}
		\\
		\hspace{-4mm}
		\includegraphics[width=2cm]{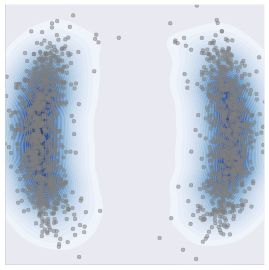}  
		&   \hspace{-5mm}
		\includegraphics[width=2cm]{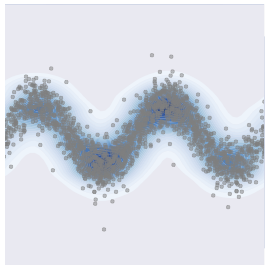} 
		& 		\hspace{-5mm}
		\includegraphics[width=2cm]{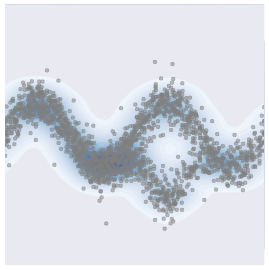}   
		&		\hspace{-5mm}
		\includegraphics[width=2cm]{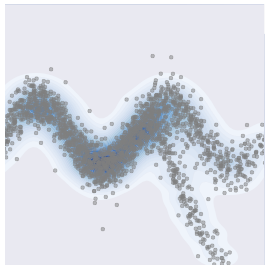}
		\vspace{-1mm}
		\\
		\hspace{-4mm}
		\includegraphics[width=2cm]{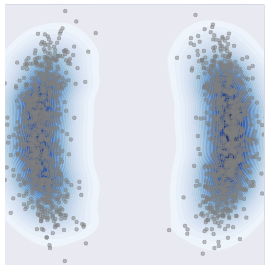}  
		&   \hspace{-5mm}
		\includegraphics[width=2cm]{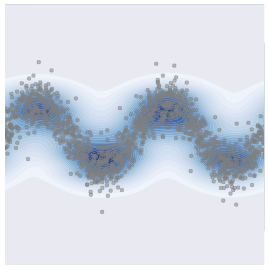} 
		& 		\hspace{-5mm}
		\includegraphics[width=2cm]{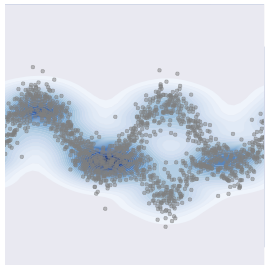}   
		&		\hspace{-5mm}
		\includegraphics[width=2cm]{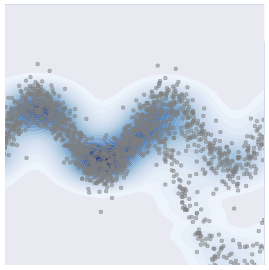} 
		\vspace{-1mm}
		\\
	\end{tabular}
	\caption{Illustration of different algorithms on toy distributions. Dots are the final particles; the blue regions represent densities estimated by the particles. Each column is a distribution case. First row: ground true densities; Second row: standard SGLD; Third row: SVGD; Fourth row: SPOS.}
	\label{fig:2000toy}
\end{figure}

\subsection{More details on Bayesian neural networks for regression}
The Bayesian DNNs are used to model weight uncertainty of neural networks, an important topic that has been well explored \cite{HernaandezLobatoA:icml15,BlundellCKW:icml15,PSGLD:AAAI16,LouizosW:ICML16}. We assign simple isotropic Gaussian priors to the weights, and perform posterior sampling with different methods. 
For SVGD and SPOS methods, we use a RBF kernel $K(\thetab,\thetab') = \exp(-\|\thetab-\thetab^\prime\|_2^2/\eta^2)$, with the bandwidth set to $\eta=\mathtt{med}^2/\log M$. Here $\mathtt{med}$ is the median of the pairwise distance between particles.
We use a single-layer BNN for regression tasks. Following \cite{li2015stochastic}, 10 UCI public datasets are considered: 100 hidden units for 2 large datasets (Protein and YearPredict), and 50 hidden units for the other 8 small datasets. 
Following \cite{ZhangLCC:AISTATS18}, we repeat the experiments 20 times with batchsize 100 for all datasets except for Protein and YearPredict, which we repeat 5 times and once with batchsize 1000. The datasets are randomly split into 90\% training and 10\% testing. For a fair comparison, we use the same split of data (train, val and test) for the three methods. The test results are reported on the best model on the validation set.
We adopt the root mean squared error (RMSE) and test log-likelihood as the evaluation criteria.
The experimental results are shown in Table~\ref{tab:reg}, from which we can see the proposed SPOS outperforms SVGD and other existing methods presented in \cite{ZhangLCC:AISTATS18} (results not shown due to space limit), achieving state-of-the-art results.

\begin{table}[t!]
	\centering
	\caption{\small Averaged predictions with standard deviations in terms of RMSE and log-likelihood on test sets.}
	\vskip 0.02in
	\scalebox{0.75}{
		\begin{tabular}{c|ccc|ccc}
			\hline
			& \multicolumn{3}{c}{Test RMSE} &\multicolumn{3}{c}{Test Log likelihood} \\
			Dataset & SGLD & SVGD&SPOS&SGLD&SVGD&SPOS\\
			\hline
			Boston\_Housing & 3.114 $\pm$ 0.144 & 2.961 $\pm$ 0.109 & $\mathbf{ 2.829 \pm 0.126 }$ &
			$-2.633 \pm 0.083 $ & -2.591 $\pm$ 0.029 & $\mathbf{ -2.532 \pm 0.082 }$\\
			Concrete& $ 5.508 \pm 0.275 $ & 5.157 $\pm$ 0.082 &$\mathbf{ 5.071 \pm 0.1495 }$ & $ -3.133 \pm 0.087 $ & -3.247 $\pm$ 0.01 &$\mathbf{ -3.062 \pm 0.037 }$
			\\
			Energy& $ 0.842 \pm 0.060 $ & 1.291 $\pm$ 0.029&$\mathbf{ 0.752 \pm 0.0285 }$  & $ -1.268 \pm 0.143 $ & -1.534 $\pm$ 0.026&$\mathbf{ -1.158 \pm 0.073 }$
			\\
			Kin8nm& $ 0.080 \pm 0.001 $ & 0.090 $\pm$ 0.001& $\mathbf{ 0.079 \pm 0.001 }$& $ 1.080 \pm 0.025 $ & 0.986 $\pm$ 0.004&$\mathbf{ 1.092 \pm 0.013 }$\\
			Naval& $ 0.004 \pm 0.000 $ & 0.004 $\pm$ 0.000 &$\mathbf{ 0.004 \pm 0.000 }$ & $ 4.127 \pm 0.028 $ & 4.032 $\pm$ 0.008 &$\mathbf{ 4.145 \pm 0.02 }$\\
			CCPP& $ 4.059 \pm 0.080 $ & 4.127 $\pm$ 0.027&$\mathbf{ 3.939 \pm 0.0495 }$ & $ -2.823 \pm 0.039 $ & -2.843 $\pm$ 0.006 &$\mathbf{ -2.794 \pm 0.025 }$\\
			Winequality & $ 0.632 \pm 0.022 $ & 0.604 $\pm$ 0.007 & $\mathbf{0.598 \pm 0.014}$ & $ -0.962 \pm 0.067 $ &-0.926 $\pm$ 0.009 & $\mathbf{ -0.911 \pm 0.041 }$ \\
			Yacht& $ 1.183 \pm 0.263 $ & 1.597 $\pm$ 0.099 & $\mathbf{ 0.84 \pm 0.0865 }$ & $ -1.680 \pm 0.393 $ & -1.818 $\pm$ 0.06 &$\mathbf{ -1.446 \pm 0.121 }$\\
			Protein& $ 4.281 \pm 0.011 $ & 4.392 $\pm$ 0.015 & $\mathbf{ 4.254 \pm 0.005 }$ & $ -2.877 \pm 0.002 $ & -2.905 $\pm$ 0.010 &$\mathbf{ -2.876 \pm 0.009 }$\\
			YearPredict& 8.707$\pm$ NA & 8.684 $\pm$ NA & $\mathbf{ 8.681 \pm NA }$ & -3.582  $\pm$ NA & -3.580 $\pm$ NA &$\mathbf{ -3.576 \pm NA }$\\
			\hline
		\end{tabular}
	}
	\label{tab:reg}
\end{table}

\subsubsection{Bayesian Neural Networks for MNIST classification}

We perform the classification tasks on the standard MNIST dataset.
A two-layer MLP 784-X-X-10 with $\ReLU$ activation function  is used, with X being the number of hidden units for each layer. The training epoch is set to 100. The test errors are reported in Table \ref{tab:fnn}. Surprisingly, the proposed SPOS outperforms other algorithms such as SVGD at a significant level, though it is just a simple modification of SVGD by adding in random Gaussian noise. This is partly due to the fact that our SPOS algorithm can jump out of local modes efficiently, as explained in Section~\ref{sec:svgd}.

\begin{table}
	\center
	\caption{Classification error of FNN on MNIST. }
	\vskip 0.1in
	\scalebox{0.9}{
		\begin{tabular}{llccc}	
			\hline
			\multirow{2}{*}{Method} & \multicolumn{2}{c}{Test Error}  \\
			& 400-400 & 800-800 \\
			\hline
			SPOS  & {\bf 1.32\%} & {\bf  1.24\%} \\
			SVGD  & 1.56\% &  1.47\% &   \\
			\hline
			SGLD	    	 &  1.64\% &    1.41\% \\
			RMSprop  	   &  1.59\% &    1.43\%  \\
			RMSspectral 	   		  &  1.65\% &    1.56\%  \\
			SGD 	   		  &   1.72\% &   1.47\%   \\
			\hline
			BPB, Gaussian    &  1.82\% & 1.99\%     \\
			SGD, dropout  & 1.51\%  & 1.33\%  \\
			\hline
	\end{tabular}}
	\label{tab:fnn}
\end{table}

\subsection{Bayesian exploration in deep RL}

We denote the policy as $\pi_{\thetab}(\ab|\sbb)$ parameterized by $\thetab$ with prior distribution $p(\thetab)$, where $\ab$ represent the action variable, and $\sbb$ the state variable. According to \cite{liu2017stein}, learning the optimal policy corresponds to calculating the following posterior distribution for $\thetab$: 
$q(\thetab)\propto \exp(J(\thetab)/\alpha) p(\thetab)$, 
where $J(\thetab)$ denotes the expected cumulative reward under the policy with parameter $\thetab$ and $\alpha$ a hyperparameter. Consequently, $\thetab$ could be updated by drawing samples from $q(\thetab)$ with the proposed SPOS. We denote this method as SPOS-PG. In addition, when drawing samples with SVGD, the resulting algorithm is called Stein variational policy gradient (SVPG) \cite{liu2017stein}. Note in implementation, the term $J(\thetab)$ can be approximated with REINFORCE~\cite{williams1992simple} or advantage actor critic~\cite{schulman2015high}, which we will investigate in our experiments.

The  policy is parameterized as a two-layer (25-10 hidden units) neural network with $\mathtt{tanh}$ as the activation function. The maximal length of horizon is set to 500. We use a sample size of 10000 for policy gradient estimation, and $M=16$, $\alpha = 10$. For the simplest task, Cartpole, all agents are trained for 100 episodes; whereas they are trained up to 1,000 episodes for the other two tasks. The average reward versus number of episodes are plotted
in Figure~\ref{fig:bayesian_exp}. It is observed that our SPOS-PG obtains much larger average rewards and smaller variance compared to SVPG, though the convergence behaviors are similar in the simplest Carpole task.

\begin{figure}
	\centering
	\includegraphics[width=0.49\linewidth]{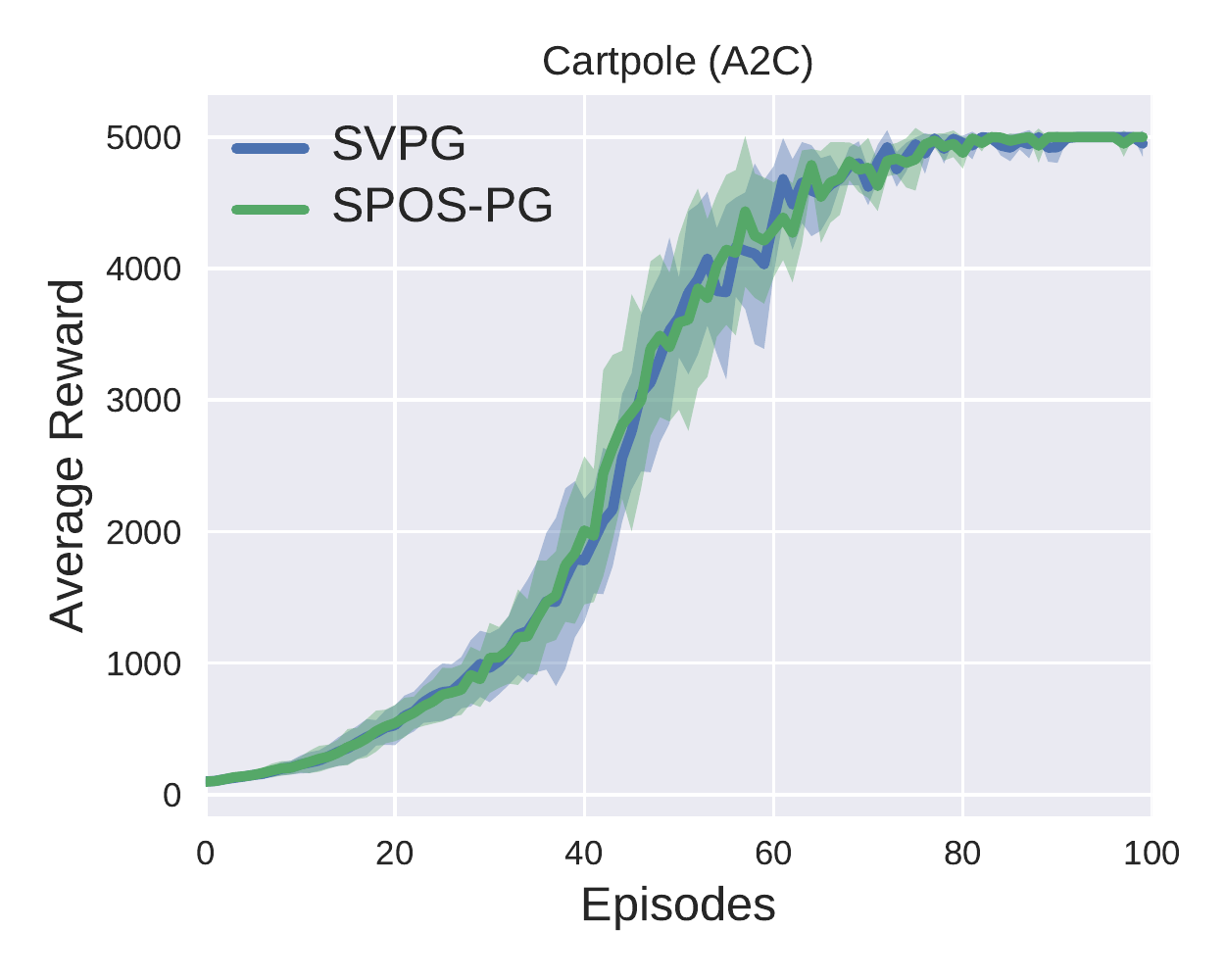}
	\includegraphics[width=0.49\linewidth]{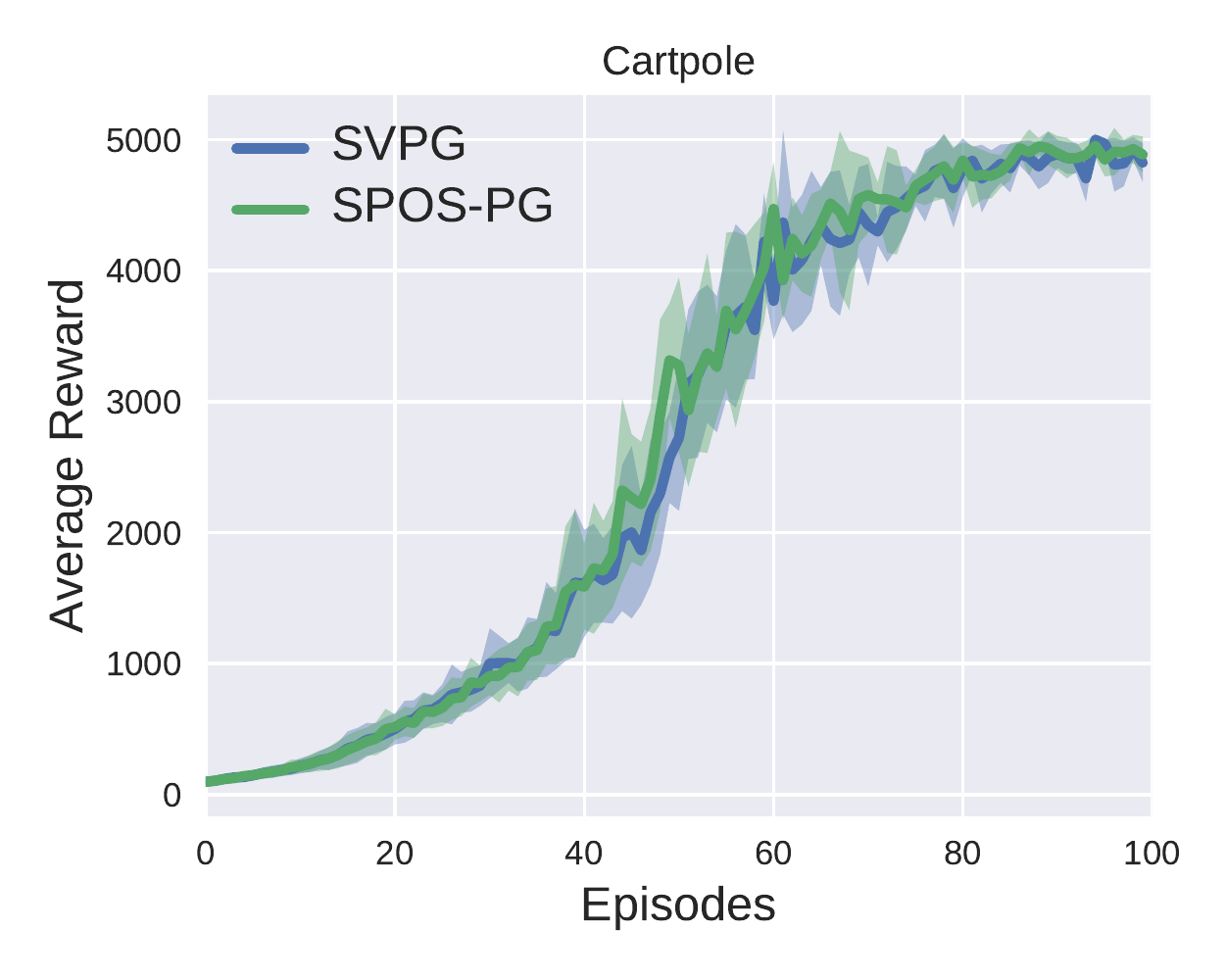}
	\includegraphics[width=0.49\linewidth]{./figures/A2CCartpoleSwingUp}
	\includegraphics[width=0.49\linewidth]{./figures/CartpoleSwingUp-workshop}
	\includegraphics[width=0.49\linewidth]{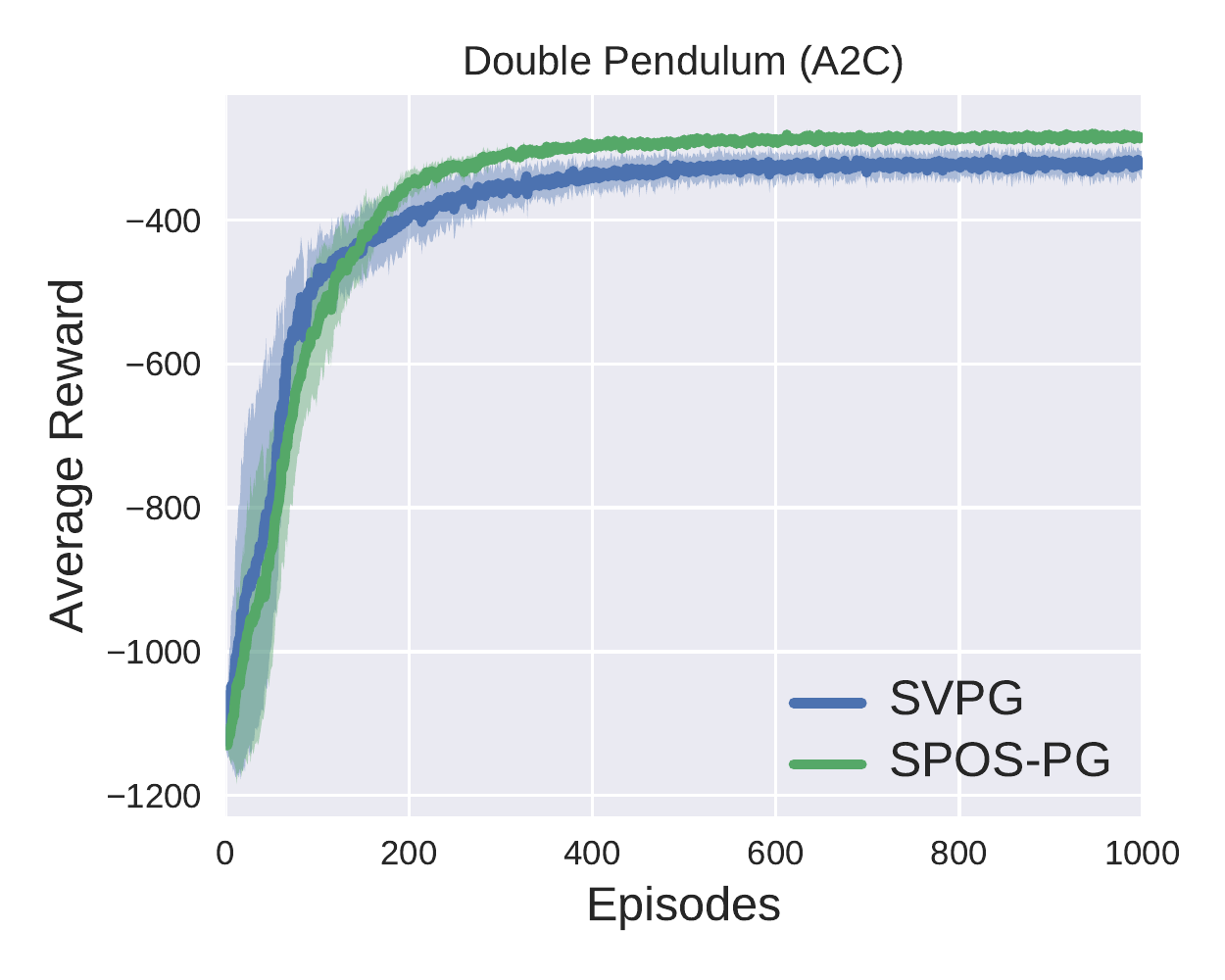}
	\includegraphics[width=0.49\linewidth]{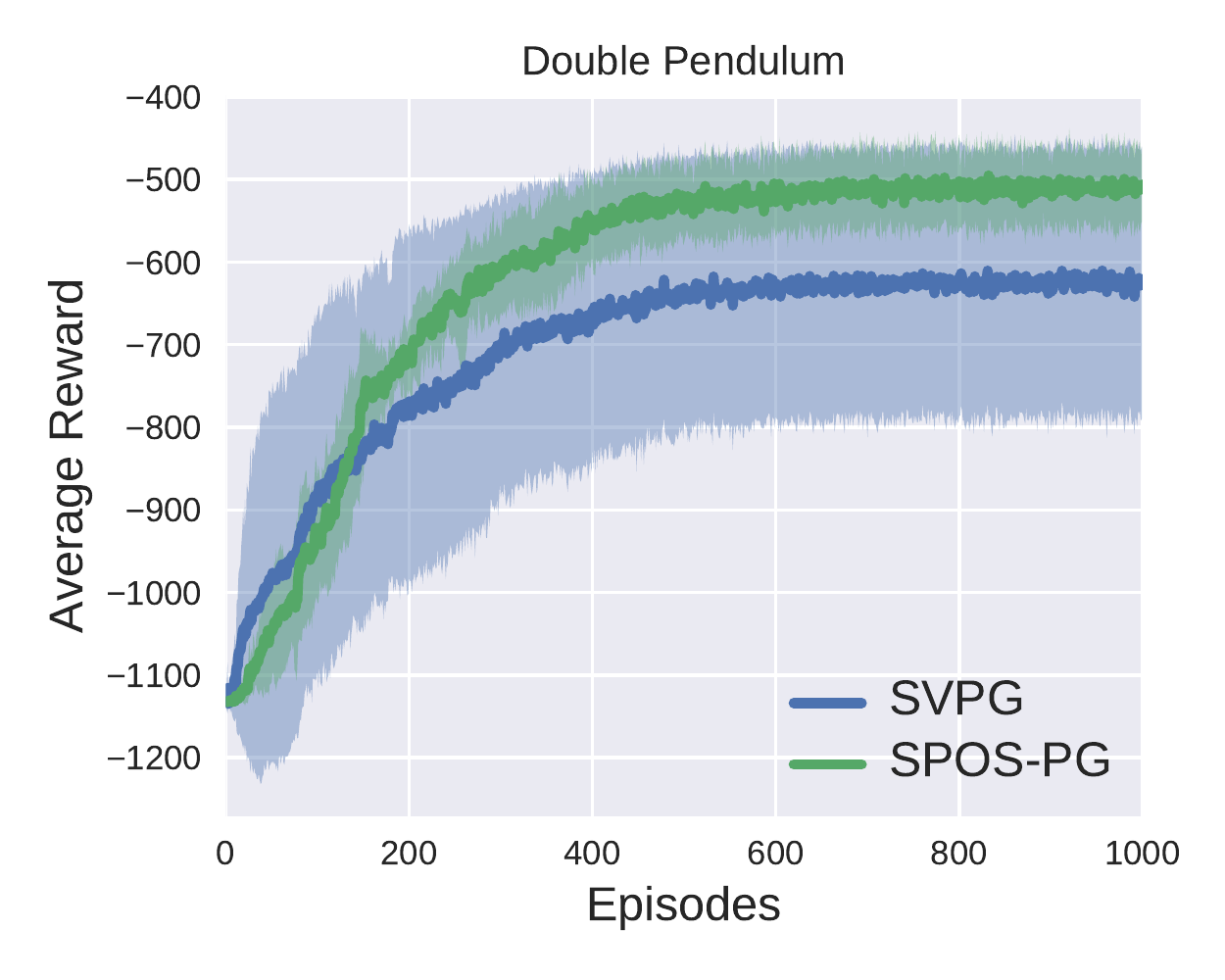}
	\vspace{-0.2cm}
	\caption{Policy learning with Bayesian exploration in policy-gradient methods on six scenarios with SVPG and SPOS-PG.}\label{fig:bayesian_exp}
	\vspace{-0.2cm}
\end{figure}

\end{document}